%% file: ddpm_based_tilting.tex
\documentclass[a4paper, 12pt]{amsart}
\usepackage{graphicx} 
\usepackage{float}
\usepackage{cancel}
\graphicspath{{plots/}}
\usepackage{amsmath, amssymb, amsfonts,amsthm,comment,enumitem, mathrsfs, algorithm, algpseudocode, amsaddr}
\usepackage[dvipsnames, svgnames]{xcolor}

\usepackage{algorithm}
\usepackage{algpseudocode}

\newtheorem{theorem}{Theorem}
\newtheorem*{theorem*}{Theorem}
\newtheorem{corollary}{Corollary}
\newtheorem{lemma}{Lemma}
\newtheorem{proposition}{Proposition}

\newtheorem{assumption}{Assumption}
\newtheorem{nota}{Notation}

\newcommand{\ZZ}{{\mathbf{Z}}}
\newcommand{\poubelle}[1]{}

\newcommand{\e}{{\varepsilon}}

\newcommand{\E}{\mathbb{E}}

\newcommand{\Var}{\mathbb{V}{\rm ar}\,}
\newcommand{\Cov}{\mathbb{C}{\rm ov}\,}

\newcommand{\cW}{{\mathcal W}}
\newcommand{\cD}{{\mathcal D}}

\newcommand{\cT}{{\mathcal T}}
\newcommand{\cP}{{\mathcal P}}
\newcommand{\cR}{{\mathcal R}}

\newcommand{\rx}{{\displaystyle{x}}}
\include{math_commands}

\usepackage{hyperref}
\usepackage{url}

\keywords{Exponential tilting, regular variation, Estimators, Self-Normalized importance sampling (SNIS), KS distance, scaling limits}

\title{Generating DDPM-based Samples from Tilted Distributions}

\author{\centering Himadri Mandal$^1$, Dhruman Gupta$^{2}$, Rushil Gupta$^{2}$\\ Sarvesh R. Iyer$^{2}$, Agniv Bandyopadhyay$^3$\\ Achal Bassamboo$^4$, Varun Gupta$^5$, Sandeep Juneja$^{2}$}
\thanks{ $1 : $ Indian Statistical Institute, Kolkata. ${2} : $ Ashoka University. $3 : $ Tata Institute of Fundamental Research, $4 : $ Northwestern University, $5 : $ The University of Utah}
\thanks{Note : The three faculty names are listed at the end and arranged in alphabetical order by last name. The first author is the corresponding author.}

\begin{document}

\maketitle

\begin{abstract}

Given $n$ independent samples from a $d$-dimensional probability distribution, our aim is to generate diffusion-based samples from a distribution obtained by tilting the original, where the degree of tilt is parametrized by $\theta \in \R^d$. We define a plug-in estimator and show that it is minimax-optimal. We develop Wasserstein bounds between the distribution of the plug-in estimator and the true distribution as a function of $n$ and $\theta$, illustrating regimes where the output and the desired true distribution are close. Further, under some assumptions, we prove the TV-accuracy of running Diffusion on these tilted samples. Our theoretical results are supported by extensive simulations. Applications of our work include finance, weather and climate modelling, and many other domains, where the aim may be to generate samples from a tilted distribution that satisfies practically motivated moment constraints.

\end{abstract}

\section{INTRODUCTION}

Diffusion based 
generative-AI typically considers using samples
from a high-dimensional distribution to generate more independent samples from the same distribution. Traditionally, samples corresponded
to images or text, although recently there has been research on
the underlying samples corresponding to high-dimensional data in weather or climate modeling \cite{Li2024SEEDS,Ling2024DPDM} or in finance (see \cite{cont2025tail} for GAN based applications).

However, in many practically important settings, samples may be available
under one distribution $\mu(x)$ and our interest may be in generating samples
from a related \emph{tilted} distribution $\nu(x)\propto \exp(\theta^{T}\rg(\rx))\mu(x)$ (that is, a distribution obtained by multiplicatively biasing the underlying distribution appropriately).
 This is relevant in portfolio optimization \cite{meucci2010fullyflexibleviewstheory} and in option pricing in finance \cite{buchen1996maximum,stutzermichaelnonparametricapproach,avellaneda1998minimum}, where exponentially tilted versions of pricing distributions are sought to better model asset prices, and the exponent in the tilt is a linear function of stock returns. \cite{GollRueschendorf2002} consider exponentially tilted distributions
in mathematical finance where the exponent may be a nonlinear function of financial securities. 

Tilting of random variables or vectors is a common technique in the field of rare event sampling and Monte Carlo simulation, that effects an exponential change on an underlying probability measure in order to bias its rare events into becoming more common. It also has applications \cite{op} to phase transitions in physical phenomena \cite{lee2025exponentiallytiltedthermodynamicmaps}, rare event sampling from discrete Markov processes \cite{PhysRevE.109.034113}, and finding robust best response strategies in security games \cite{kong2025robust}. We refer the reader to \cite[Chapter 1]{Alvo2022} for an excellent review on applications of exponential tilting. 

Tilted distributions arise as a solution to a well-motivated optimization problem. For example, given a probability measure $\mu$ on $\R^d$, if we look for a probability measure $\nu$ that is closest to $\mu$ in relative entropy, and satisfies certain moment constraints, the solution is a distribution 
obtained by appropriately exponentially tilting $\mu$  with a  linear term in the exponent. When more general f-divergences are used instead of relative entropy, the solution can be an exponentially tilted distribution with a non-linear term in the exponent\cite{csiszar1967, csiszar2008axiomatic}. 

Previous works have largely approached tilted or biased sampling through importance sampling or MCMC biasing schemes, but these typically require some knowledge of the target density. More recently, diffusion-based methods have been proposed to enable sampling from tilted distributions via score guidance methods (see \cite{wang2024proteinconformationgenerationforceguided} and the references therein).

In this article, we take samples from an unknown high-dimensional probability distribution $\mu$. Our aim is to establish conditions under which a diffusion sampling-based algorithm is capable of producing high-quality samples from $\mu$ exponentially tilted by any desired vector $\theta$, i.e., $\nu(x) \propto \exp(\theta^T \rg(\rx)) \mu(x)$, where $\rg(\rx)$ is some desirable tilting function.

The proposed algorithm proceeds in two steps. 
\begin{enumerate}
\item Given samples $\{X_i\}$ from $\mu$, we first reweigh them using $w_i = \exp(\theta^T \rg(X_i))$, normalized by their sum $\sum_{i=1}^n w_i$. Sampling under the reweighed scheme yields i.i.d. samples whose distributional proximity to the tilted distribution is desired. Further, we must justify the choice of our scheme over other schemes, which is achieved by the minimax result Theorem~\ref{thm-minimax}. 
\item Then, we perform diffusion sampling by training a denoiser using the reweighed samples and running the reverse diffusion to generate new samples, which are expected to arise from a distribution close to the tilted distribution as well.
\end{enumerate}
The accuracy of the above algorithm is asserted by controlling the key quantities in each of the steps, namely,
\begin{enumerate}
\item the Wasserstein precision of the reweighed samples in comparison to empirical samples from the true twisted distribution as a function of $\theta$, $n$, and $\mu$, and
\item the TV-accuracy of diffusion sampling if input samples do not come from the true distribution but from a Wasserstein-nearby one.
\end{enumerate}

Previous literature that attempts to address our problem shows that while guidance methods such as diffusion posterior sampling \cite{chung2024diffusionposteriorsamplinggeneral} and loss-guided diffusion \cite{pmlr-v202-song23k} assume a differentiable tilting function and use alternate algorithms to produce tilted samples by diffusion, they typically rely on heuristic approximations of the score update. Consequently, theoretical support for these frameworks remains limited. Establishing rigorous guaranties is crucial, particularly in high-stakes applications such as financial risk assessment and weather forecasting. We shall now briefly talk about the literature that covers each of the questions, but not both together.

 The plug-in estimator used in the first step is an example of a self-normalized importance sampler \cite{Owen2000_SafeEffectiveImportanceSampling} or a weighted importance sampler \cite{Rubinstein1981_SimulationAndTheMonteCarloMethod}. To our knowledge, the accuracy of the plug-in estimator in the Wasserstein distance has not been studied, although \cite{iyer2025fundamental} considers the question for the KS distance and distributions with well-behaved tail, while \cite{chatterjee2018sample_size_importance} considers it in the $L^1$ distance. The question of its minimaxity in any sense has not been considered before.
 
 However, the second question has only been studied in the typical setting, with \cite{chen2023samplingeasylearningscore, benton2023linear_arxiv} (see also the references therein) showing the accuracy of vanilla diffusion (i.e. DDPM) in the TV distance, highlighting that controlling the \emph{denoiser error} is key to diffusion accuracy. In addition, \cite{accumulation_score_error_iclr2026,gao2025wasserstein} studies its accuracy in the Wasserstein distance. The former work, however, employs a log-concavity assumption, while the latter does not consider an initial perturbation of the data.

In this article, we positively address both questions, thus demonstrating the accuracy of tilted samples produced by diffusion. In Section~\ref{sec:reweigh}, we motivate the notion of tilted distributions as entropic distance minimizers to the original distribution under standard constraints, and subsequently address the first question. Under a Lipschitz assumption, we then establish denoiser error bounds in Section~\ref{sec:diff}, following which the answer to the second question is obtained directly from \cite{chen2023samplingeasylearningscore}. We end the article with extensive experimentation in Section~\ref{sec:exp}. The proofs of all our results can be found in Appendix~\ref{app:proofs}, while the experiments are in Appendix~\ref{app:experiments}.

\section{ON THE ACCURACY OF REWEIGHED SAMPLES}\label{sec:reweigh}

Throughout this article, let $\rx$ be a random vector in $\R^d$ with its multivariate CDF given by $F$. In Section~\ref{rew}, we first define the tilting and the plug-in estimator and state our first result, which is the minimaxity of this estimator. In Section~\ref{wacre} we assert the Wasserstein accuracy of the estimator, which is our second main result.

\subsection{THE PLUG-IN ESTIMATOR AND ITS MINIMAXITY}\label{rew}

Given $\theta \in \R^d$, an \emph{exponential tilting} of $\rx$ by the vector $\theta$ produces a random vector $\rx_{\theta}$ whose distribution is given by
\begin{equation*}
\mathbb P(\rx_{\theta} \in A) = \frac{\E[\exp(\theta^T \rg(\rx))\1_{\rx \in A}]}{\E[\exp(\theta^T \rg(\rx))]}
\end{equation*}
for some suitable function $g$ which can be considered a site-specific tilt. For notational convenience, let \begin{equation}\label{mtheta}M(\theta,A) = \E[\exp(\theta^T \rg(\rx))\1_{\rx \in A}], \quad M(\theta) = M(\theta,\R),\end{equation}
so that 
\begin{equation}\label{true}
\mathbb P(\rx_{\theta} \in A) = \frac{M(\theta,A)}{M(\theta)}.
\end{equation}

 As mentioned in the previous section, tilting is motivated by entropic distance minimization subject to mean constraints. For instance, we have the following result from \cite{csiszar2008axiomatic}.

\begin{proposition}[KL $\Rightarrow$ exponential tilt]\label{thm:KLtoExp}
Let $\Phi$ be a random variable on a measurable space $(\Omega, \mathcal{F})$. Fix a probability measure $Q$ such that $\E_{Q}[\exp(\Phi)]< \infty$, and for every probability measure $P$ such that $P \ll Q$ and $\E_P[\Phi]<\infty$, consider the functional
\[
\mathcal{J}_{\mathrm{KL}}(P)=\mathbb{E}_P[\Phi]-D_{\mathrm{KL}}(P\|Q),
\]
where $D_{\mathrm{KL}}$ denotes the $\mathrm{KL}$ divergence. Then, $\mathcal{J}_{\mathrm{KL}}$ is maximized uniquely at $P^\star$, where
\[
P^\star(A)=\frac{\E_{Q}[\exp(\Phi)\1_{A}]}{\E_{Q}[\exp(\Phi)]}.
\]
That is, $P^{\star}$ is a tilt of $Q$.
\end{proposition}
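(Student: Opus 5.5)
The proof rests on the Gibbs variational principle: $\mathcal{J}_{\mathrm{KL}}(P)$ differs from a constant by exactly $-D_{\mathrm{KL}}(P\|P^\star)$.

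\emph{Step 1: $P^\star$ is admissible.} Let $Z=\E_Q[\exp(\Phi)]$. It is finite by hypothesis and strictly positive since $\exp(\Phi)>0$. Hence $P^\star$ is a probability measure with density $dP^\star/dQ=e^{\Phi}/Z>0$. It follows that $P^\star\ll Q$, and also $Q\ll P^\star$.

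\emph{Step 2: the identity.} Fix an admissible $P$, meaning $P\ll Q$ and $\E_P[\Phi]<\infty$. We may assume $D_{\mathrm{KL}}(P\|Q)<\infty$; otherwise $\mathcal{J}_{\mathrm{KL}}(P)=-\infty$ and there is nothing to prove. Since $Q\ll P^\star$, we also have $P\ll P^\star$, and the chain rule for densities gives
\[
\frac{dP}{dQ}=\frac{dP}{dP^\star}\cdot\frac{e^{\Phi}}{Z}\quad Q\text{-a.s.}
\]
Taking $\log$ and integrating against $P$ yields
\[
D_{\mathrm{KL}}(P\|Q)=D_{\mathrm{KL}}(P\|P^\star)+\E_P[\Phi]-\log Z,
\]
which rearranges to
\[
\mathcal{J}_{\mathrm{KL}}(P)=\log Z-D_{\mathrm{KL}}(P\|P^\star).
\]

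\emph{Step 3: conclusion.} Because $D_{\mathrm{KL}}\ge 0$, with equality iff $P=P^\star$, we get $\mathcal{J}_{\mathrm{KL}}(P)\le\log Z$ for every admissible $P$, with equality only at $P^\star$. One must also check that $P^\star$ is itself admissible and attains the value $\log Z$. We have
\[
\E_{P^\star}[\Phi]=\frac{\E_Q[\Phi e^{\Phi}]}{Z}.
\]
Since $\Phi e^{\Phi}\ge -1/e$, this expectation is well-defined, possibly $+\infty$. If it is finite, then $\log(dP^\star/dQ)=\Phi-\log Z$ gives $D_{\mathrm{KL}}(P^\star\|Q)=\E_{P^\star}[\Phi]-\log Z$, and hence $\mathcal{J}_{\mathrm{KL}}(P^\star)=\log Z$.

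\emph{Main obstacle: the integrability bookkeeping in Step 2.} Splitting $\E_P[\log(dP/dP^\star)+\Phi-\log Z]$ into separate terms requires that no $\infty-\infty$ arises. I would handle this as follows. The positive part of $\log(dP/dQ)$ is $P$-integrable whenever $D_{\mathrm{KL}}(P\|Q)<\infty$, and the negative part always is. $\Phi$ has $\E_P[\Phi^+]<\infty$ by assumption. The term $\E_P[\log(dP/dP^\star)]$ is bounded below by $0$ via Jensen. Under these conditions the linear decomposition is valid, with $\E_P[\Phi]$ possibly $-\infty$, in which case $\mathcal{J}_{\mathrm{KL}}(P)=-\infty$ trivially.

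The second delicate point is the case $\E_Q[\Phi e^{\Phi}]=\infty$. Then $P^\star$ is not admissible and the supremum $\log Z$ is not attained. I would record this as an implicit hypothesis, namely $\E_{P^\star}[\Phi]<\infty$, which is exactly what the statement needs for $P^\star$ to lie in the feasible class.
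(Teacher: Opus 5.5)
Your proof is correct, and it takes a genuinely different route from the paper's.

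The paper forms the Lagrangian $\int \Phi h\,dQ-\int h\log h\,dQ-\mu\bigl(\int h\,dQ-1\bigr)$ in the density $h=dP/dQ$. It sets the first variation along mean-zero perturbations $\varphi$ to zero and reads off $\log h=\Phi-\mathrm{const}$. That is only a necessary condition for a sufficiently regular, strictly positive maximizer. It does not show that a maximizer exists, that the stationary point is a global maximum, or that it is unique; uniqueness would need strict concavity of $h\mapsto-\int h\log h\,dQ$. It also tacitly assumes the first variation makes sense: $h>0$, two-sided perturbations that keep $h$ nonnegative, and differentiation under the integral.

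Your identity $\mathcal{J}_{\mathrm{KL}}(P)=\log Z-D_{\mathrm{KL}}(P\|P^\star)$ is exact rather than first-order. So Gibbs' inequality gives maximality, uniqueness and the optimal value $\log Z$ at once, and you handle the $\infty-\infty$ issues explicitly.

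It also catches something the variational argument cannot. If $\E_Q[\Phi e^{\Phi}]=\infty$, then $P^\star$ is not feasible. The supremum $\log Z$ is then approached but never attained, for instance by $P^\star$ conditioned on $\{\Phi\le n\}$, whose divergence from $P^\star$ is $-\log P^\star(\Phi\le n)\to0$. So the hypothesis $\E_{P^\star}[\Phi]<\infty$ you flag is genuinely needed for the statement as written.

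What the Lagrangian approach buys in exchange is uniformity. The same first-variation computation is reused almost verbatim for the R\'enyi and Tsallis analogues in Propositions~\ref{thm:RenyiToPowerapp} and~\ref{thm:TsallisToqExpapp}. Your identity instead exploits the multiplicative structure of the logarithm, which is specific to the KL case.
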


We motivate further tilting functions in Appendix~\ref{sec:entdiv}.

Let $F_\theta$ be the CDF of $\rx_{\theta}$ and $\mu_\theta$ be the probability measure corresponding to $\rx_{\theta}$. Given independent, identically distributed samples $\rx_1, \rx_2, \dots, \rx_n$ from $F$, we wish to use these samples to generate new independent samples from $F_{\theta}$, or equivalently approximate the CDF $F$ itself.

One natural way is as follows: approximating the RHS of \eqref{true} by replacing each expectation with their empirical estimators, we obtain a new random CDF (since it depends on the samples $\rx_i$), which is an estimator of the true CDF. We refer to this as the \emph{plug-in estimator}, since it was obtained by plugging in empirical estimators. 

To define the estimator, let $w = \exp(\theta^T g(x))$ be the distribution of the "weight" associated to $\rx$, and $w_i = \exp(\theta^T \rg(\rx_i))$ denote the weights of the samples, so that $\E[w] =\E[w_i] = M(\theta)$. Consider the measure
\begin{equation}\label{emp}
    \mu_{n, \theta} = \frac{1}{n} \sum_{i=1}^{n} \frac{w_i}{w^*_n} \delta_{\rx_{i}}, \quad \text{ where } w^{*}_n = \frac 1n\sum_{j=1}^n w_j.
\end{equation}

Note that $\mu_{n,\theta}$ is a linear combination of the $\delta_{\rx_i}$, and therefore can be considered as a weighted sampler, with $w_i$ being the weights and $w_n^*$ their average. Denote the corresponding CDF by $F_{n, \theta}$. Note that $\mu_{n, \theta}$ is a random point measure, or more precisely a point process in $\mathbb R^d$. 

The next theorem proves that in fact, the plug-in estimator in \eqref{emp} is asymptotically minimax for 1D distributions, which we consider in the sense of ~\cite{dvoretzky1956asymptotic}. The proof can be extended to the multidimensional case directly. Informally, this result states that \eqref{emp} is asymptotically the best estimator of $F_{\theta}$, which justifies further study of its behavior.

To state our result, let $\mathcal{F} = \mathcal{F}^B_{m, M}$ be the class of CDFs with a bounded support $\{\|x\| \leq B\}$ and $m \leq \E_{X \sim F}\exp(\theta X) \leq M$ Then, 

\begin{theorem}[Asymptotic Minimaxity]\label{thm-minimax}
    For every value $r > 0$, 
    $$
        \lim_{n \to \infty} 
        \frac{ \underset{F \in \mathcal{F}}{\sup} P_F \left\{ \sup_x|F_{n, \theta}(x) - F_\theta(x)| > \frac{r}{\sqrt{n}}\right\}}
        {\underset{\phi \in D_n}{\inf}\underset{F \in \mathcal{F}}{\sup} P_F \left\{\sup_x|\rg(\rx) - F_\theta(x)| > \frac{r}{\sqrt{n}} \right\}} = 1.
    $$
\end{theorem}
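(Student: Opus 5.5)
We follow the strategy of Dvoretzky, Kiefer and Wolfowitz: identify the limiting distribution of the Kolmogorov--Smirnov statistic of the plug-in estimator, and show that no competing estimator $\phi\in D_n$ (written $\rg(\rx)$ in the denominator) can do better asymptotically, uniformly over $\mathcal F$. The argument has three steps.

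\textbf{Step 1: upper bound for the numerator.} Write
\[
F_{n,\theta}(x)-F_\theta(x)=\frac{\frac1n\sum_i w_i\bigl(\1_{\rx_i\le x}-F_\theta(x)\bigr)}{w_n^*}.
\]
Because the support is bounded by $B$, the weights satisfy $e^{-|\theta|B}\le w_i\le e^{|\theta|B}$. Together with $m\le M(\theta)\le M$, this gives the following.
\begin{itemize}
\item The numerator is an empirical process indexed by the uniformly bounded VC class $\{w(\cdot)(\1_{\cdot\le x}-F_\theta(x))\}$.
\item The denominator $w_n^*$ concentrates around $M(\theta)\in[m,M]$ at an exponential rate, uniformly in $F$.
\end{itemize}
By the delta method, $\sqrt n(F_{n,\theta}-F_\theta)$ converges to a centred Gaussian process $G_F$ with covariance
\[
\frac{\E\bigl[w^2(\1_{X\le s}-F_\theta(s))(\1_{X\le t}-F_\theta(t))\bigr]}{M(\theta)^2}.
\]
This convergence must hold uniformly over $\mathcal F$. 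I would obtain uniformity from a uniform Donsker theorem (or a Berry--Esseen/KMT-type coupling), since the envelope and the moments are bounded uniformly over $\mathcal F$. It follows that
\[
\sup_{F\in\mathcal F}P_F\Bigl\{\sup_x|F_{n,\theta}-F_\theta|>r/\sqrt n\Bigr\}\to\sup_{F}P\bigl(\|G_F\|_\infty>r\bigr),
\]
provided the supremum and the limit can be interchanged. To justify the interchange, I would use compactness of $\mathcal F$ in the weak topology together with continuity of $F\mapsto\mathrm{Law}(G_F)$.

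\textbf{Step 2: matching lower bound.} Reduce the problem to a sub-family where the tilted problem is as hard as possible, and show that every estimator incurs at least the same limit risk there. The natural choice is the least-favourable direction, in which $F_\theta$ is continuous and the variance factor $\E[w^2]/M(\theta)^2$ is maximal over $\mathcal F$. On this subfamily, the map $F\mapsto F_\theta$ is a smooth bijection with a bounded-derivative inverse, $F(A)\propto\E_{F_\theta}[w^{-1}\1_A]$. The original DKW lower-bound argument then applies: put a Bayes prior given by a Dirichlet/neutral process, whose Bayes estimator is asymptotically equivalent to the weighted empirical CDF, so the Bayes risk converges to the same Gaussian functional as in Step 1. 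Alternatively, use a local asymptotic minimax (H\'ajek--Le Cam) argument. Since the efficient influence function of $F_\theta(x)$ under nonparametric $F$ is
\[
\frac{w(X)\bigl(\1_{X\le x}-F_\theta(x)\bigr)}{M(\theta)},
\]
the plug-in estimator is efficient. Anderson's lemma applied to the bowl-shaped loss $\1\{\|\cdot\|_\infty>r\}$ then gives
\[
\liminf_n\inf_\phi\sup_F P_F\{\cdot\}\ge\sup_F P\bigl(\|G_F\|_\infty>r\bigr).
\]

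\textbf{Step 3: conclusion.} The ratio tends to $1$ once the common limit $\sup_F P(\|G_F\|_\infty>r)$ is positive. This holds for every $r>0$ because the Gaussian sup-norm has full support.

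\textbf{Main obstacle.} I expect Step 2 to be hardest. The H\'ajek--Le Cam theorem gives pointwise-in-$F$ local bounds, while the statement asks for $\sup_F$ of the risk itself rather than a local version. I would first try the DKW Bayes route with a prior concentrated near the least-favourable $F$. I would also verify that the limit risk of the plug-in estimator is attained, or approached, at that same $F$; this requires showing that $F\mapsto P(\|G_F\|_\infty>r)$ is maximized over $\mathcal F$ at a distribution reachable by local perturbations.
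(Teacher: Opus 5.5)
\textbf{Relation to the paper.} Your LAM alternative in Step~2 is essentially the paper's route, and your efficient influence function $w(X)(\1_{X\le x}-F_\theta(x))/M(\theta)$ is correct. The paper applies the finite-dimensional H\'ajek--Le Cam bound on the finite-support submodels $\{F_\pi:\pi\in\Delta^k\}$. There $\Psi_\theta(\hat\pi)$ is exactly the plug-in, so it attains the bound, and the paper then approximates a near-worst $F^*$ by $F_{\pi_k}$. Applying the $\ell^\infty$-valued LAM theorem directly at each $F$ skips that grid step. This is legitimate because $\{w\1_{\cdot\le x}\}$ is a bounded VC-type class, so $G_F$ is tight.

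Drop the Dirichlet/Bayes route. It would require analysing Bayes estimators under the sup-norm $0$--$1$ loss, which is the delicate part of DKW and which you do not address.

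Your ``main obstacle'' is not one. For any $F\in\mathcal F$ with $m<\E_F e^{\theta X}<M$, the local alternatives lie in $\mathcal F$ for large $n$. So the local $\sup_{h\in I}$ is dominated by the global supremum, and LAM gives
\[
\liminf_{n}\inf_{\phi\in D_n}\sup_{F'\in\mathcal F}P_{F'}\Bigl\{\sup_x|\phi(x)-F'_\theta(x)|>\tfrac{r}{\sqrt n}\Bigr\}\ge P\bigl(\|G_F\|_\infty>r\bigr).
\]
You then simply take the supremum over $F$. No least-favourable $F$ is needed. Maximizing $\E w^2/M(\theta)^2$ would not identify one anyway, since the law of $\|G_F\|_\infty$ depends on the whole function $w\circ F^{-1}$. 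The denominator is trivially at most the numerator, because the plug-in lies in $D_n$. Everything therefore reduces to $\lim_n\sup_F=\sup_F\lim_n$ for the plug-in.

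\textbf{The gap: the interchange.} You justify it by continuity of $F\mapsto\mathrm{Law}(G_F)$ on the weakly compact $\mathcal F$, and that continuity is false. Take $F_k$ uniform on $[x_0-1/k,x_0+1/k]$ with $m<e^{\theta x_0}<M$. Then $F_k\Rightarrow\delta_{x_0}$ and $G_{\delta_{x_0}}\equiv0$. Yet $w$ is almost constant under $F_k$, so $\|G_{F_k}\|_\infty\Rightarrow\sup_u|B(u)|$ for a Brownian bridge $B$. Hence the step ``extract $F_{n_j}\Rightarrow F^\dagger$ from near-maximizers and bound the limit by $P(\|G_{F^\dagger}\|_\infty\ge r)$'' breaks exactly when continuous near-maximizers collapse onto atoms.

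Uniform Donsker does not settle it either. Turning uniform bounded-Lipschitz convergence into convergence of $P(\|\cdot\|_\infty>r)$ needs anti-concentration at $r$ uniformly in $F$. That again requires compactness in a topology in which the limit law varies continuously. The paper asserts the interchange from weak compactness alone, so the same repair is needed there.

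\textbf{A fix.} Reparametrize by quantiles. With $X_i=F^{-1}(U_i)$ and $h=e^{\theta F^{-1}}$,
\[
\sup_x|F_{n,\theta}(x)-F_\theta(x)|=\sup_{u\in\mathrm{Range}(F)}\Bigl|\frac{\sum_i h(U_i)\1\{U_i\le u\}}{\sum_i h(U_i)}-\frac{\int_0^u h}{\int_0^1 h}\Bigr|.
\]
This is at most the supremum over all $u\in[0,1]$, with equality for continuous $F$. The argument then runs as follows.
\begin{enumerate}
\item The admissible $h$ are monotone, take values in $[e^{-|\theta|B},e^{|\theta|B}]$, and satisfy $m\le\int_0^1 h\le M$. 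By Helly's theorem this set is compact in $L^2(0,1)$.
\item A maximal inequality for the bounded VC-type classes $\{(h-h')\1_{[0,u]}:u\in[0,1]\}$ shows the scaled process is $L^2$-Lipschitz in $h$, uniformly in $n$.
\item The limit $\sup_u|Z_h(u)|$ is the supremum of a nondegenerate centred Gaussian process, so it has no atom at $r>0$. Compactness then yields the interchange.
\item Every admissible $h$ is an $L^2$-limit of those coming from continuous $F$ strictly inside the moment constraint. So the limiting value is $\sup_F P(\|G_F\|_\infty>r)$, which matches your lower bound.
\end{enumerate}
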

This demonstrates why the plug-in estimator is the natural, ``correct" choice. We prove this in the Appendix~\ref{app:sec-minimax}. 

Some comments on the proof are below. To prove this, one starts by proving that the $\lim$ and $\sup$ can be interchanged. 

Consider CDFs supported on finitely many points parametrized by $\pi \in \Delta^k$. The idea is to use local asymptotic minimaxity of the plug-in estimator in this sub-model. This follows since the finite support sub-model parametrized by $\pi$ admits a Local Asymptotic Normality (LAN) expansion
(See \cite[Chapter 8]{Vaart_1998}).

Now, we find a cdf $F^*$ that is $\varepsilon-$close to the probability of maximum deviation in the numerator. Then, the finite supports CDFs are used to grid $\mathcal{F}$, and hence used to approximate $F^{*}$. The details are much more complicated than the minimaxity of the empirical estimator, as the numerator depends on $F$ (which is not the case in the empirical estimator). 

Having justified the importance of the plug-in estimator, we proceed to study its Wasserstein accuracy.

\subsection{WASSERSTEIN ACCURACY OF THE PLUG-IN ESTIMATOR}\label{wacre}

Let $\rx_1,\ldots,\rx_n$ be identically distributed samples from $F_\theta$, and define $\mu_{n, \theta}^e$ to be the i.i.d empirical distribution of $F_\theta$. The following quantities will be key to the bounds we obtain :
\begin{gather}
M_q(\mu) = \E_{\rx \sim \mu}\|\rx\|^q,\label{Mq}\\
W_k = \E\left[\frac{M(k \theta)^{\frac 1k}}{M(\frac k2 \theta)^{\frac 2k}}\right], \label{Wk} \\
C_w = M(-2\theta)M(2\theta).\label{Cw}
\end{gather}

Note that $M_q(\mu)$ are the moments of $\mu$, while $W_k$ and $C_w$ are quantities that originate from analyzing the empirical error estimation of $M(\theta)$ by $w_n^{*}$ in \eqref{emp}. 

In order to state our next result, we define the $p$-Wasserstein distances $W_p$ for $p \geq 1$ as follows. For any measures $\nu_i, i=1,2$ on possibly different probability spaces $(\Omega_i,\mathcal F_i), i=1,2$, we call a probability measure $\pi$ on $(\Omega_1 \times \Omega_2, \mathcal F_1 \otimes \mathcal F_2)$ a \emph{coupling} of $\nu_1$ and $\nu_2$, if $\nu_1$ and $\nu_2$ are the marginals of $\pi$( i.e. $\pi(\Omega_1 \times \cdot) = \nu_2(\cdot)$ and $\pi(\cdot \times \Omega_2) = \nu_1(\cdot)$).  Let $\Pi(\nu_1,\nu_2)$ be the set of all couplings of $\nu_1$ and $\nu_2$. Define for $p \geq 1$, 
\begin{equation}\label{wass}
\cW_p(\nu_1,\nu_2) = \inf_{\pi \in \Pi(\nu_1,\nu_2)} \left(\mathbb E_{\rx \sim \nu_1,\ry \sim \nu_2} \|\rx-\ry\|^p\right)^{\frac 1p}.
\end{equation}

and 
\begin{equation}
\cT_p(\nu_1,\nu_2) = \inf_{\pi \in \Pi(\nu_1,\nu_2)} \mathbb E_{\rx \sim \nu_1,\ry \sim \nu_2} \|\rx-\ry\|^p.
\end{equation}
The importance of the Wasserstein distances to our analysis is paramount. Indeed, the proof of  Proposition~\ref{thm:diffbds} strongly uses the idea of couplings. Observe that $\cT_p = \cW_p^p$, and therefore $\E[\cW_p(\mu,\nu)] \leq \E[\cT_p(\mu,\nu)]^{\frac 1p}$ for all $p \geq 1$ and measures $\mu,\nu$ by Jensen's inequality. Therefore, while all our upper bounds will be on the quantity $\E[\cT_p(\mu,\nu)]$, they can easily be adjusted to $\E[\cW_p(\mu,\nu)]$, including asymptotic ones. Furthermore, observe that this relationship allows us to estimate moments of $W_p$, thereby describing a family of concentration inequalities for $W_p(\mu,\nu)$ as well (which we shall not elaborate on). Thus, we continue to describe our estimates as being in the expected Wasserstein distance.

Before presenting our results on the rates of expected Wasserstein distance between a weighted empirical sampler and the true tilted distribution, we first state the result which is obtained by applying the iid empirical sampler rates in \cite{fournier2013rateconvergencewassersteindistance} to our setting. In a certain sense, which can be formalized, this is the ``golden" standard one wishes to compare with. 

\begin{proposition}[\cite{fournier2013rateconvergencewassersteindistance} applied to $\mu_\theta$]\label{fgappltomt}
Let $p>0$. Assume that $M_q(\mu_\theta)<\infty$ for some $q>p$.
There exists a constant $C$ depending only on $p,d,q$ such that, for all $N\geq 1$, $p < d/2$ and $d > \frac{qp}{q-p}$ we have $$
\E_{\mu_\theta}\left(\cT_p(\mu_{\theta, N}^{e},\mu_\theta)\right) \leq
CM_q(\mu_\theta)^{\frac pq} [N^{-p/d}+ N^{-1/2}]
    $$
\end{proposition}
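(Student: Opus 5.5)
This statement is Theorem 1 of Fournier and Guillin, specialized to the measure $\mu_\theta$. So the plan is not to re-prove it but to check that $\mu_\theta$ satisfies that theorem's hypotheses, then read off the regime $p<d/2$, $q\neq 2p$, where the bound becomes $N^{-p/d}+N^{-1/2}$. The empirical measure $\mu^{e}_{\theta,N}$ is built from $N$ i.i.d.\ samples of $F_\theta$. We first need $\mu_\theta$ to be a genuine probability measure on $\R^d$, which requires $0<M(\theta)<\infty$; this is implicit in the definition \eqref{true}. The moment hypothesis of Fournier--Guillin is exactly $M_q(\mu_\theta)<\infty$, and here $M_q(\mu_\theta)=\E[\|\rx\|^q e^{\theta^T g(\rx)}]/M(\theta)$.

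Fournier and Guillin's general bound reads
\[
\E\,\cT_p(\mu^e_{\theta,N},\mu_\theta)\le C\,M_q(\mu_\theta)^{p/q}\Big(N^{-p/d}\mathbf 1_{p<d/2,\,q\neq d/(d-p)}+N^{-1/2}\mathbf 1_{p>d/2,\,q\neq 2p}+N^{-(q-p)/q}\mathbf 1_{q\neq 2p}\Big),
\]
with $C=C(p,d,q)$. Under the stated hypotheses $p<d/2$ and $d>qp/(q-p)$ I would do three things.
\begin{enumerate}
\item Check that $(q-p)/q>p/d$, so that $N^{-(q-p)/q}\le N^{-p/d}$ for $N\ge 1$. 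This condition is equivalent to $d>qp/(q-p)$.
\item Note that the $N^{-1/2}$ indicator vanishes when $p<d/2$, so keeping the $N^{-1/2}$ term is harmless: it only makes the upper bound larger.
\item Handle the exceptional cases $q=d/(d-p)$ and $q=2p$, where Fournier--Guillin has log corrections. The main technical step is to show these are excluded or absorbed.
\end{enumerate}
For step 3: $d>qp/(q-p)$ gives $q(d-p)>dp$, and together with $p<d/2$ this implies $q>d/(d-p)$ strictly. The case $q=2p$ must be addressed separately. My first attempt would be to replace $q$ by a slightly smaller $q'\in(p,q)$ with $q'\neq 2p$ that still satisfies $d>q'p/(q'-p)$; this is possible because the condition is open in $q$. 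Then I would use Jensen/Lyapunov, $M_{q'}^{1/q'}\le M_q^{1/q}$, to return to $M_q(\mu_\theta)^{p/q}$. The constant then depends on $q'$, which is chosen as a function of $(p,d,q)$ only, so $C$ still depends only on $p,d,q$.

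Finally, Fournier--Guillin's bound is stated for $\cT_p$ with the Euclidean norm on $\R^d$ and requires no structure beyond the moment condition. Their proof uses dyadic partitions and scaling by $M_q^{1/q}$, which gives the factor $M_q^{p/q}$, so it applies to $\mu_\theta$ verbatim. Since the moment condition was verified at the start, this completes the argument.
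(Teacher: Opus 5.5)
Your overall route is the paper's own. The paper gives no separate argument for this proposition and simply invokes Theorem~1 of Fournier--Guillin with $\mu=\mu_\theta$. Your step~1, that $d>qp/(q-p)$ is equivalent to $(q-p)/q>p/d$ and hence $N^{-(q-p)/q}\le N^{-p/d}$, is exactly the implicit reason the extra term can be dropped.

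However, step~3 contains a false inference. From $d>qp/(q-p)$ you correctly get $q>dp/(d-p)$. But this gives $q>d/(d-p)$ only when $p\ge 1$, the statement allows every $p>0$, and the assumption $p<d/2$ does not help. Concretely, take $d=2$, $p=1/2$, $q=4/3$. This satisfies $q>p$, $p<d/2$ and $qp/(q-p)=4/5<d$, yet $q=d/(d-p)$. That is precisely the value excluded in the regime $p\in(0,d/2)$ of the theorem as you quote it. Used as a black box, the theorem therefore does not cover the case $p<1$, $q=d/(d-p)$, and your argument leaves it open.

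The repair is the device you already have, applied to this case too. Pick $q'\in(p,q)$, depending only on $(p,d,q)$, such that:
\begin{enumerate}
\item $q'$ avoids every exceptional value;
\item $d>q'p/(q'-p)$ still holds, which is possible because $q\mapsto qp/(q-p)$ is continuous and decreasing on $(p,\infty)$.
\end{enumerate}
Then apply the theorem with $q'$ and return to $M_q(\mu_\theta)^{p/q}$ via $M_{q'}^{1/q'}\le M_q^{1/q}$.

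Alternatively, look inside the dyadic argument, as the paper does in the proof of Theorem~\ref{thm:WassWtd}. When $p<d/2$, the logarithmic loss arises where $\sum_n 2^{n(p-q+qp/d)}$ stops being a convergent geometric series, i.e.\ at $q=dp/(d-p)$, where $N^{-p/d}$ and $N^{-(q-p)/q}$ coincide. The strict hypothesis $d>qp/(q-p)$ excludes exactly this value, so along that route no perturbation is needed.

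Conversely, your separate treatment of $q=2p$ is superfluous, and your indicator formula misstates the theorem. In Fournier--Guillin the restriction $q\neq 2p$ appears only in the regimes $p\ge d/2$. For $p<d/2$ the corresponding $N^{-1/2}\log N$ loss is absorbed into $N^{-p/d}$, so there is no $q\neq 2p$ condition there.
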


We now present our results. We believe that these results are of independent interest. The first result assumes moment conditions on $\mu_{2\theta}$, while the second assumes that $\|\rg(\rx)\| \leq g^m$.

\begin{theorem}\label{thm:WassWtd}
For measures $\mu_{\theta}, \mu_{N, \theta}$, as defined in \eqref{true} and \eqref{emp} respectively, and $d > \frac{qp}{q - p}$ with $q > p$, we have 
\begin{equation*}\E[\cT_p(\mu_{N, \theta}, \mu_{\theta})] \leq 
CM_q(\mu_{2\theta})^{\frac pq}C_w[ N^{-p/d} + W_2 N^{-1/2}]
,\end{equation*}

where $W_k,C_w$ are as in \eqref{Wk}, \eqref{Cw} and $C$ is a constant independent of $\mu$ and $N$.
\end{theorem}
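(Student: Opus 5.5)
The plan is to compare the weighted measure $\mu_{N,\theta}$ with the unnormalized weighted measure $\tilde\mu_N = \frac1N\sum_i \frac{w_i}{M(\theta)}\delta_{\rx_i}$, and then to split the error into a normalization error and a fluctuation error. The structural fact we exploit is that $\mu_{N,\theta}$ is a probability measure whose weights are ratios $w_i/w_N^*$. On the event where $w_N^*$ is close to $M(\theta)$, it behaves like an empirical measure of $\mu_\theta$ with random masses. We therefore aim to transfer the Fournier--Guillin machinery of Proposition~\ref{fgappltomt} from uniform masses to weighted masses.

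\textbf{Step 1: dyadic bound.} Recall the Fournier--Guillin inequality: for probability measures $\mu,\nu$,
\[
\cT_p(\mu,\nu)\le C\sum_{n\ge0}2^{pn}\sum_{\ell\ge0}2^{-p\ell}\sum_{F\in\mathcal P_\ell}|\mu(2^nF\cap B_n)-\nu(2^nF\cap B_n)|,
\]
where $B_n$ are dyadic annuli and $\mathcal P_\ell$ is the dyadic partition of the cube at level $\ell$. We apply it with $\mu=\mu_{N,\theta}$ and $\nu=\mu_\theta$. For any Borel set $A$ write $\mu_{N,\theta}(A)=S_A/w_N^*$ with $S_A=\frac1N\sum_i w_i\1_{\rx_i\in A}$, and $\mu_\theta(A)=M(\theta,A)/M(\theta)$. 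Then
\[
|\mu_{N,\theta}(A)-\mu_\theta(A)|\le \frac{|S_A-M(\theta,A)|}{w_N^*}+\frac{M(\theta,A)\,|w_N^*-M(\theta)|}{w_N^*\,M(\theta)}.
\]

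\textbf{Step 2: moments.} The second term, summed over a partition, gives $|w_N^*-M(\theta)|/w_N^*$. To remove the random denominator we use Cauchy--Schwarz together with Jensen's inequality $\E[(w_N^*)^{-2}]\le \E[w^{-2}]=M(-2\theta)$. This yields
\[
\E\big[|w_N^*-M(\theta)|/w_N^*\big]\le M(-2\theta)^{1/2}\,\mathrm{Var}(w)^{1/2}N^{-1/2}\le M(-2\theta)^{1/2}M(2\theta)^{1/2}N^{-1/2}.
\]
For the first term, Cauchy--Schwarz again gives $\E|S_A-M(\theta,A)|\cdot$ a factor from $(w_N^*)^{-1}$. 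We have $\E|S_A-M(\theta,A)|^2\le N^{-1}\E[w^2\1_A]=N^{-1}M(2\theta)\mu_{2\theta}(A)$, so each cell contributes at most $\min\{2\mu_{2\theta}(A)\cdot\text{const},\ \sqrt{M(2\theta)\mu_{2\theta}(A)/N}\}$. The denominator then produces the factor $M(-2\theta)^{1/2}$, and bounding the product of square roots crudely by $C_w$ covers this. In this way the tilted cell masses are controlled by $\mu_{2\theta}$ rather than $\mu_\theta$, which is why $M_q(\mu_{2\theta})$ appears. The ratio $M(2\theta)^{1/2}/M(\theta)$ that appears when normalizing is exactly $W_2$; it attaches to the $N^{-1/2}$ fluctuation piece.

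\textbf{Step 3: summation.} With these per-cell bounds of the form $\min\{\mu_{2\theta}(A),\sqrt{\mu_{2\theta}(A)/N}\}$, we repeat verbatim the summation of Fournier--Guillin. Cauchy--Schwarz over the $2^{d\ell}$ cells, the Markov bound $\mu_{2\theta}(B_n)\le M_q(\mu_{2\theta})2^{-qn}$, and the conditions $p<d/2$ and $d>qp/(q-p)$ give
\[
CM_q(\mu_{2\theta})^{p/q}\,C_w\,\big[N^{-p/d}+W_2N^{-1/2}\big].
\]

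\textbf{Main obstacle.} The main obstacle is the random denominator $w_N^*$, which is correlated with every numerator $S_A$. The plan is to handle it globally by Cauchy--Schwarz with the negative moment $M(-2\theta)$, rather than cell by cell, since the sum over cells must be taken inside the expectation before decoupling. If the resulting square-root factors do not fit neatly, I would bound them crudely by $C_w\ge1$; note that $C_w\ge M(\theta)^{-2}M(\theta)^2=1$ by Jensen's inequality. That crude bound is enough for the stated inequality.
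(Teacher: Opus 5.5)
Your route is the paper's own: the decomposition and Cauchy--Schwarz step of Lemma~\ref{lemma:WassWtdEBC} with $\E[(w_N^*)^{-2}]\le M(-2\theta)$, the Fournier--Guillin dyadic sum, and $\mu_\theta(B_n)\le W_2\sqrt{\mu_{2\theta}(B_n)}$. It inherits the same genuine gap: nothing in the hypotheses controls the tails of $\mu_\theta$ itself, yet two steps need this.

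First, the trivial branch of your minimum is not available in terms of $\mu_{2\theta}$. The crude bound you actually have is $\E|\mu_{N,\theta}(A)-\mu_\theta(A)|\le \E\mu_{N,\theta}(A)+\mu_\theta(A)\le \bigl(\tfrac{N}{N-1}M(\theta)M(-\theta)+1\bigr)\mu_\theta(A)$, by exchangeability and $\E[(\sum_{j\ge 2}w_j)^{-1}]\le M(-\theta)/(N-1)$. This is linear in $\mu_\theta(A)$, and $\mu_\theta(A)$ is not $O(\mu_{2\theta}(A))$; it is only $\le W_2\sqrt{\mu_{2\theta}(A)}$. You cannot skip this branch when $p<d/2$, since $\sum_\ell 2^{(d/2-p)\ell}$ diverges; the paper's proof silently drops it. With the correct branch, the interpolation gives per shell $\sqrt{C_w}\,\mu_\theta(B_n)^{1-2p/d}(\mu_{2\theta}(B_n)/N)^{p/d}\lesssim \sqrt{C_w}\,W_2^{1-2p/d}\,2^{-qn/2}N^{-p/d}$, after normalizing $M_q(\mu_{2\theta})=1$. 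This is not $2^{-qn(1-p/d)}N^{-p/d}$, so $d>qp/(q-p)$ is not what makes the series in $n$ converge.

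Second, the normalization term sums to $\E[|w_N^*-M(\theta)|/w_N^*]\sum_n 2^{pn}\mu_\theta(B_n)$, which is essentially a $p$-th moment of $\mu_\theta$. Through $\mu_\theta(B_n)\le W_2\, 2^{-q(n-1)/2}$ this is finite only if $q>2p$, which $q>p$ and $d>qp/(q-p)$ do not imply. The paper likewise treats $\sum_n 2^{(p-q/2)n}$ as finite.

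This cannot be repaired under the stated hypotheses, because the inequality itself fails there. Take $d=7$, $\theta=e_1$, $g(x)=x$, $p=1$, $q=6/5$ (so $qp/(q-p)=6<d$). Let $X=(-\tfrac14\log R,\,R,\,0,\dots,0)$, where $R$ has density proportional to $r^{-7/4}$ on $[1,\infty)$. Then $w=R^{-1/4}$ and $R\le\|X\|\le 2R$. Hence $M(2\theta)=\E R^{-1/2}\le 1$, $M(-2\theta)=\E R^{1/2}<\infty$, and $M_q(\mu_{2\theta})\le 2^{q}\,\E[R^{7/10}]/M(2\theta)<\infty$, so the right-hand side is finite. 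But $M_1(\mu_\theta)\ge \E[R^{3/4}]/M(\theta)=\infty$. Since $\mu_{N,\theta}$ is finitely supported, $\cT_1(\mu_{N,\theta},\mu_\theta)=\infty$ for every sample.

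What your argument does prove, with the two corrections above, is the bound under $q>2p$ and $p<d/2$ in place of $d>qp/(q-p)$. Both series become $\sum_n 2^{(p-q/2)n}<\infty$, giving $\E\cT_p(\mu_{N,\theta},\mu_\theta)\lesssim M_q(\mu_{2\theta})^{p/q}\sqrt{C_w}\,[W_2^{1-2p/d}N^{-p/d}+W_2N^{-1/2}]$. Finally, $W_2\le\sqrt{C_w}$ because $M(-2\theta)\ge M(\theta)^{-2}$ by Jensen, and this puts the bound in the stated form.
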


\begin{theorem}\label{thm:WassWtd2} 
x   For measures $\mu_{\theta}, \mu_{N, \theta}$ defined in \eqref{true} and \eqref{emp} respectively, with $\|g(\rx)\| \leq g_{max}$ where $\rx \sim \mu$, $d > \frac{qp}{q-p}$ with $q > p$, we have 
$$
\E[\cT_p(\mu_{N, \theta}, \mu_{\theta})] \leq C M_q(\mu_{\theta})^{\frac pq}\cdot (V\cdot N^{-\frac{p}{d}} + N^{-1/2})
$$
where $V = (\exp(\|\theta\|g_{max})M(2\theta))/M(\theta)$.
\end{theorem}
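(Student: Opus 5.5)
The plan is to reduce the weighted empirical measure $\mu_{N,\theta}$ to a random, non-uniformly weighted point measure, and then run the Fournier--Guillin dyadic-partition argument behind Proposition~\ref{fgappltomt} with the weights carried along. Recall the partition bound: for measures $\nu_1,\nu_2$ on $\R^d$,
\[
\cT_p(\nu_1,\nu_2) \leq C \sum_{n\geq 0} 2^{pn} \sum_{\ell \geq 0} 2^{-p\ell} \sum_{F \in \cP_\ell} \bigl|\nu_1(2^n F\cap B_n) - \nu_2(2^n F \cap B_n)\bigr|,
\]
where $B_0$ is the unit ball, $B_n = \{2^{n-1}<\|x\|\leq 2^n\}$ for $n\ge1$, and $\cP_\ell$ is the dyadic partition of $(-1,1]^d$ into $2^{d\ell}$ cubes. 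This bound is deterministic, so it applies to $\nu_1=\mu_{N,\theta}$ and $\nu_2=\mu_\theta$. Taking expectations, the whole task is to bound $\E|\mu_{N,\theta}(A) - \mu_\theta(A)|$ for each Borel set $A$ in terms of $\mu_\theta(A)$ and $N$.

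For a fixed set $A$, write $\mu_{N,\theta}(A) = S_A / (N w_N^*)$ with $S_A = \sum_i w_i \1_{\rx_i\in A}$, and $\mu_\theta(A) = M(\theta,A)/M(\theta)$. Use the identity
\[
\frac{S_A}{N w_N^*} - \frac{M(\theta,A)}{M(\theta)} = \frac{S_A/N - M(\theta,A)}{w_N^*} + \frac{M(\theta,A)}{M(\theta)}\cdot\frac{M(\theta)-w_N^*}{w_N^*}.
\]
The bounded-tilt hypothesis enters here. Since $\|g\|\le g_{max}$, every weight satisfies $w_i \geq e^{-\|\theta\| g_{max}}$, so $1/w_N^* \leq e^{\|\theta\|g_{max}}$ deterministically, and no negative moments of $w_N^*$ are needed.

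Each term is then handled by Cauchy--Schwarz. For the first term, $\Var(w\1_A)\leq \E[w^2\1_A] = M(2\theta)\mu_{2\theta}(A)$. Comparing $\mu_{2\theta}$ with $\mu_\theta$ through the bounded weight, $w\leq e^{\|\theta\|g_{max}}$, gives $M(2\theta,A)\leq e^{\|\theta\|g_{max}} M(\theta,A)$. So the first term is at most
\[
e^{\|\theta\|g_{max}}\min\bigl\{2M(\theta,A),\ \sqrt{M(2\theta,A)/N}\bigr\},
\]
which after normalization by $M(\theta)$ should read roughly $V\min\{\mu_\theta(A),\sqrt{\mu_\theta(A)/N}\}$. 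The second term is at most $\mu_\theta(A)\cdot e^{\|\theta\|g_{max}}\sqrt{M(2\theta)/N}$, up to the normalizations. Summed over $F\in\cP_\ell$ with $A = 2^nF\cap B_n$, the second term contributes $\mu_\theta(B_n)N^{-1/2}$ times a constant, independent of $\ell$. The first term is exactly the Fournier--Guillin quantity $\min\{\mu_\theta(B_n),\, 2^{d\ell/2}\sqrt{\mu_\theta(B_n)/N}\}$ multiplied by $V$.

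From here, I follow Fournier--Guillin verbatim. The sum over $\ell$ yields $\min\{\mu_\theta(B_n), (\mu_\theta(B_n)/N)^{p/d}\}$-type terms, since $d>2p$ as in Proposition~\ref{fgappltomt}. Markov's inequality gives $\mu_\theta(B_n)\leq 2^{-q(n-1)}M_q(\mu_\theta)$, and the sum over $n$ converges when $d>qp/(q-p)$. This produces $CM_q(\mu_\theta)^{p/q}(VN^{-p/d} + N^{-1/2})$, with the $N^{-1/2}$ piece coming from the second, normalization term.

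The main obstacle is the bookkeeping in the normalization. The goal is to get the clean factor $V$ on the $N^{-p/d}$ term, expressed through $\mu_\theta$-moments rather than $\mu_{2\theta}$-moments, which is why I would bound $M(2\theta,A)$ by $e^{\|\theta\|g_{max}}M(\theta,A)$. The $N^{-1/2}$ term likely also carries a $V$-type constant, and I would absorb it into $C$ or check that it is dominated by $V$ as well.
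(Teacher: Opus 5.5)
Your route is the paper's. It uses the Fournier--Guillin dyadic bound, a per-cell first-moment estimate obtained by splitting the self-normalized error into a numerator fluctuation and a normalization fluctuation, and the bounded-weight comparison $M(2\theta,A)\le e^{\|\theta\|g_{max}}M(\theta,A)$. Two of your choices improve on the written proof. The deterministic bound $1/w_N^*\le e^{\|\theta\|g_{max}}$ replaces Cauchy--Schwarz with $\E[(w_N^*)^{-2}]\le M(-2\theta)$, which leaves a factor $\sqrt{C_w}$ that the statement silently drops. You also keep the truncation $\min\{2\mu_\theta(A),\cdot\}$, which the paper's displayed computation omits even though $\sum_\ell 2^{-p\ell}2^{d\ell/2}$ diverges for $p<d/2$. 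Your worry about the $N^{-1/2}$ term resolves in your favour: $e^{\|\theta\|g_{max}}\sqrt{M(2\theta)}\le V$ because $M(\theta)^2\le M(2\theta)$, and $VN^{-1/2}\le VN^{-p/d}$ for $p<d/2$. Absorbing a $\theta$-dependent factor into $C$ would not be legitimate.

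The gap is the step where $V$ is supposed to appear. There is no further normalization by $M(\theta)$ to perform, because $1/w_N^*$ has already been spent as $e^{\|\theta\|g_{max}}$. Using $e^{\|\theta\|g_{max}}M(\theta)\le V$ and $e^{\|\theta\|g_{max}}\sqrt{M(2\theta)}\le V$, your two bounds honestly give
\[
\E|\mu_{N,\theta}(A)-\mu_\theta(A)|\le V\Bigl[\min\bigl\{2\mu_\theta(A),\sqrt{\mu_{2\theta}(A)/N}\bigr\}+\mu_\theta(A)N^{-1/2}\Bigr],
\]
with $\mu_{2\theta}$, not $\mu_\theta$, under the root.

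Converting via your comparison makes the square-root coefficient $e^{3\|\theta\|g_{max}/2}M(\theta)^{1/2}$, and this is not $O(V)$. Take $g\equiv -g_{max}\theta/\|\theta\|$. Then $V=1$ and $\mu_{N,\theta}$ is just the empirical measure, yet this coefficient equals $e^{\|\theta\|g_{max}}$. After the $\ell$-sum, the $N^{-p/d}$ coefficient is of order $V^{1-2p/d}\bigl(e^{3\|\theta\|g_{max}/2}M(\theta)^{1/2}\bigr)^{2p/d}$, which is $e^{2p\|\theta\|g_{max}/d}$ in that example rather than $1$.

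To keep exactly $V$ and only $M_q(\mu_\theta)$, leave $\mu_{2\theta}$ in place. The $\ell$-sum of $\min\{2\mu_\theta(B_n),2^{d\ell/2}\sqrt{\mu_{2\theta}(B_n)/N}\}$ is then at most $C\mu_\theta(B_n)^{1-2p/d}\mu_{2\theta}(B_n)^{p/d}N^{-p/d}\le C\mu_\theta(B_n)^{1-2p/d}N^{-p/d}$. However, $\sum_n 2^{pn}\mu_\theta(B_n)^{1-2p/d}$ converges only if $d>2pq/(q-p)$, which is stronger than the stated $d>pq/(q-p)$. Under the stated condition, this route yields a constant between $V$ and $e^{2\|\theta\|g_{max}}$.

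The paper's proof does not close this either. Its ``$V$'' is $e^{\|\theta\|g_{max}}M(\theta)/M(2\theta)$, which is exactly your conversion ratio $e^{2\|\theta\|g_{max}}/V$, and it keeps $\sqrt{C_w}$. So you should state the bound you can actually prove. Separately, the $\ell$-sum produces $\mu_\theta(B_n)^{1-p/d}N^{-p/d}$, not $(\mu_\theta(B_n)/N)^{p/d}$; only the former gives the convergence condition $d>pq/(q-p)$ that you quote.
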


We make a few observations contrasting the above result with Proposition~\ref{fgappltomt}. Observe that the bound in Theorem~\ref{thm:WassWtd2} is tantamount, up to constants, to the $N^{-\frac 12}$ bound seen in Proposition~\ref{fgappltomt}. This justifies its optimality, since Proposition~\ref{fgappltomt} attains the expected rate in the vanilla empirical setting.

On the other hand, in Theorem~\ref{thm:WassWtd} the constant depends on the $q^{\text{th}}$ moment of $\mu_{2\theta}$ instead of $\mu_\theta$, which appears to be non-optimal. However, in Appendix~\ref{proofThm45}, we show why the $\theta \to 2\theta$ blowup is \emph{necessary}.

To control the Wasserstein upper bounds, one needs to control the absolute difference in the measure assigned by $\mu_\theta$ and $\mu_{n, \theta}$ to specific Borel sets, in expectation. One starts by proving that one can suitably normalize the measure before performing a $2^n$ level coupling. 

Then, the following lemma will be applied to that coupling, and, it would be used to derive the required upperbound. 
The complete proof is in Appendix ~\ref{proofThm45}. We refer the reader to \cite{fournier2013rateconvergencewassersteindistance} for more discussion on this coupling. 

\begin{lemma}\label{lemma:WassWtdEBC} For a Borel set $A$,
\begin{equation*}
    \E[|\mu_\theta - \mu_{n, \theta}|(A)] \leq
    \frac{1}{\sqrt{n}}\sqrt{C_w}\left[\sqrt{\mu_{2\theta}(A)} 
        + 
        \mu_\theta(A)
        \right],
\end{equation*}
where $C_w$ is as in \eqref{Cw}.
\end{lemma}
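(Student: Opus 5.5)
We would write the difference on the set $A$ as a ratio of empirical averages and split it into a numerator fluctuation and a normalisation fluctuation. Put $S_A = \frac1n\sum_{i} w_i \1_{\rx_i\in A}$, so that $\mu_{n,\theta}(A) = S_A/w_n^*$ and $\mu_\theta(A) = M(\theta,A)/M(\theta)$. Then
\begin{equation*}
\mu_{n,\theta}(A)-\mu_\theta(A) = \frac{S_A - M(\theta,A)}{w_n^*} + M(\theta,A)\Bigl(\frac{1}{w_n^*}-\frac{1}{M(\theta)}\Bigr)
= \frac{S_A - M(\theta,A)}{w_n^*} - \mu_\theta(A)\,\frac{w_n^*-M(\theta)}{w_n^*}.
\end{equation*}
The first term should produce the $\sqrt{\mu_{2\theta}(A)}$ contribution and the second the $\mu_\theta(A)$ contribution. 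In both terms we apply the triangle inequality and then Cauchy--Schwarz, so that the random denominator $1/w_n^*$ is separated from the centred numerator.

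For the numerator fluctuation we use $\E[|S_A-M(\theta,A)|^2] = \frac1n \Var(w\1_{A}) \le \frac1n M(2\theta,A)$. For the normalisation fluctuation we use $\E[|w_n^*-M(\theta)|^2]\le \frac1n M(2\theta)$. The inverse weight needs its own bound. Since $x\mapsto 1/x$ is convex, Jensen gives $1/w_n^* \le \frac1n\sum_i w_i^{-1}$, so
\begin{equation*}
\E[(w_n^*)^{-2}] \le \E\Bigl[\Bigl(\tfrac1n\textstyle\sum_i w_i^{-1}\Bigr)^2\Bigr] \le \E[w^{-2}] = M(-2\theta).
\end{equation*}
Cauchy--Schwarz on the first term then gives
\begin{equation*}
\E\Bigl|\frac{S_A-M(\theta,A)}{w_n^*}\Bigr| \le \sqrt{M(-2\theta)}\,\frac{\sqrt{M(2\theta,A)}}{\sqrt n}.
\end{equation*}
Writing $M(2\theta,A)=\mu_{2\theta}(A)M(2\theta)$, this equals $\frac{1}{\sqrt n}\sqrt{C_w}\sqrt{\mu_{2\theta}(A)}$. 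The same argument on the second term gives $\mu_\theta(A)\sqrt{M(-2\theta)M(2\theta)}/\sqrt n = \frac{1}{\sqrt n}\sqrt{C_w}\,\mu_\theta(A)$. Adding the two bounds gives the claim.

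Finally, we interpret $|\mu_\theta-\mu_{n,\theta}|(A)$ as $|\mu_\theta(A)-\mu_{n,\theta}(A)|$, which is how it is used in the partition coupling of \cite{fournier2013rateconvergencewassersteindistance}. If the total variation measure restricted to $A$ is meant instead, the same computation goes through with $A$ replaced by the pieces of a refining partition. The only delicate step is controlling the random normaliser $1/w_n^*$. The harmonic/arithmetic mean (Jensen) bound above handles it and is exactly what introduces the factor $M(-2\theta)$ into $C_w$. It also needs $M(-2\theta)<\infty$, which is implicit because $C_w$ is assumed finite.
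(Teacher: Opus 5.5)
Your proof is correct and follows the paper's own argument: the same split into the numerator fluctuation $(S_A-M(\theta,A))/w_n^*$ and the normalisation term $\mu_\theta(A)(w_n^*-M(\theta))/w_n^*$, Cauchy--Schwarz to decouple $1/w_n^*$, and the variance bounds $M(2\theta,A)/n$ and $M(2\theta)/n$; the only difference is that you control $\E[(w_n^*)^{-2}]$ via the harmonic--arithmetic mean inequality, whereas the paper uses AM--GM to get $\E[e^{-2\theta^T\bar g}]=M(-2\theta/n)^n$, which it writes as $=M(-2\theta)$ although it is only $\le M(-2\theta)$ by Jensen, so your step is the cleaner of the two. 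Do drop your closing aside about the total-variation reading, however: under that reading the lemma is false, since for atomless $\mu$ one has $|\mu_\theta-\mu_{n,\theta}|(A)=\mu_\theta(A)+\mu_{n,\theta}(A)$ almost surely, which does not vanish as $n\to\infty$; moreover, summing your per-set bound over a refining partition yields $\sum_F\sqrt{\mu_{2\theta}(F)}$, which is not controlled by $\sqrt{\mu_{2\theta}(A)}$.
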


The following corollary of Theorem~\ref{thm:WassWtd2} asserts asymptotic accuracy of reweighed sampling (i.e. as $N,\theta \to \infty$) under suitable conditions.
\begin{corollary}\label{cor:WassConv}
Under the assumptions of Theorem~\ref{thm:WassWtd2}, if for a sequence $\{(N, \theta_N)\}$, 
$$
M_q(\mu_{\theta})^{\frac pq}\cdot (V\cdot N^{-\frac{p}{d}} + N^{-1/2})\to 0,
$$
with the data dimension $d > \frac{qp}{(q - p)}$ and $q > p$, then,
$$
\E \cT_p(\mu_{N, \theta_N}, \mu_{\theta_N}) \to 0.
$$
\end{corollary}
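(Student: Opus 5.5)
The corollary follows almost immediately from Theorem~\ref{thm:WassWtd2}. The only care needed is to check that the constant $C$ there does not depend on $\theta$ or $N$, so that the bound can be applied uniformly along the sequence $\{(N,\theta_N)\}$.

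First, for each fixed pair $(N,\theta_N)$, the hypotheses of Theorem~\ref{thm:WassWtd2} hold by assumption: $\|g(\rx)\|\le g_{max}$, $q>p$ and $d>\frac{qp}{q-p}$. Hence
\[
0\le \E[\cT_p(\mu_{N,\theta_N},\mu_{\theta_N})]\le C\, M_q(\mu_{\theta_N})^{\frac pq}\bigl(V_N N^{-\frac pd}+N^{-1/2}\bigr),
\]
where $V_N=\exp(\|\theta_N\|g_{max})M(2\theta_N)/M(\theta_N)$.

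Second, I would verify from the proof of Theorem~\ref{thm:WassWtd2} that $C$ depends only on $p,d,q$. This is the same kind of constant as in Proposition~\ref{fgappltomt}, coming from the dyadic-partition coupling of \cite{fournier2013rateconvergencewassersteindistance}. All the dependence on $\theta$ and $\mu$ has been collected into $M_q(\mu_\theta)$ and $V$. This is the one point that could fail. If $C$ secretly depended on $\theta$ (for instance through $g_{max}$ or $C_w$), I would go back to the proof and absorb that dependence into $V$, whose form already contains the $\theta$-dependent tilt factors.

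Finally, the right-hand side equals $C$ times the quantity assumed to tend to $0$, so it tends to $0$. The sandwich (squeeze) argument then gives $\E\cT_p(\mu_{N,\theta_N},\mu_{\theta_N})\to 0$. By the Jensen remark after \eqref{wass}, the same conclusion also holds for $\E\cW_p$.
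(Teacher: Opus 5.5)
Your argument is correct and matches the paper. The paper gives no separate proof and presents the corollary as an immediate consequence of Theorem~\ref{thm:WassWtd2}, taking the constant $C$ there to be uniform in $\theta$; applying the bound along $(N,\theta_N)$ and squeezing is all that is needed.

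One caution about your fallback of absorbing any hidden $\theta$-dependence of $C$ into $V$. This would not rescue the statement, and the concern is real: the paper's own proof of Theorem~\ref{thm:WassWtd2} displays a factor $\sqrt{C_w}$ that is absent from the theorem's stated bound. Absorbing such a factor fails because the hypothesis is phrased with this specific $V$, and the $N^{-1/2}$ term carries no $V$ at all. So the deduction genuinely rests on the theorem's constant being uniform in $\theta$, as the statement asserts.
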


A similar result can be given using Theorem~\ref{thm:WassWtd2}, which we omit from consideration. Unfortunately, the quantities $C_w$ given by \eqref{Cw} and $V$  exhibit exponential growth as $\theta \to \infty$, limiting its usage in algorithmic settings. The resolution of this is left to future work.

\section{ON THE ACCURACY OF DIFFUSION}\label{sec:diff}

In this section, we state Theorem~\ref{thm:DiffWorks} and briefly explain the idea behind the proof. We begin with the diffusion mechanism in Section~\ref{sec:ddpm}, follow it up with the key perturbation bound Proposition~\ref{thm:diffbds} in Section~\ref{sec:diffbds}, and finally state the theorem in Section~\ref{sec:fr}.

\subsection{A BRIEF EXPOSITION ON DIFFUSION}\label{sec:ddpm}

Diffusion sampling \cite{dhariwal2021diffusion_gans_neurips,kingma2021variational_diffusion_neurips,sohl2015deep_unsupervised_icml} is a state-of-the-art algorithm which, given samples from an unknown distribution $\rx$, outputs more samples from the same distribution. We refer the reader to \cite{cao2022survey_diffusion, yang2022diffusion_survey,croitoru2022diffusion_vision} for some surveys on diffusion sampling and its benefits.

For completeness, we shall now explain briefly the process of diffusion sampling. Let $\mu$ be a measure whose samples are provided as an initial input to a diffusion sampler. 

Diffusion begins with the forward process, which is an Ornstein-Uhlenbeck process started at time $0$ from the measure $\mu$. More specifically, let $\{\rx_t\}_{t \geq 0}$ be the solution of the stochastic differential equation (SDE)
\begin{equation}\label{fwd}
d\rx_t = -\eta \rx_t dt + \sqrt{2} \sigma d \rb_t
\end{equation}
on $[0,T]$, where $\{\rb_t\}_{t \geq 0}$ is a Brownian motion, $\eta>0$ is some drift parameter, $\sigma$ is a noise parameter and $T>0$ is some fixed time endpoint which must equal $+\infty$ for diffusion to be theoretically accurate. It is well known that $\rx_t$ converges to a centered Gaussian in distribution as $t \to \infty$. In particular, the solution to this equation is given by 
\begin{equation}\label{soln}
\rx_t = e^{-\eta t} \rx_0 + \sigma \int_0^t e^{-\eta(t-s)} db_s.
\end{equation}

Therefore, assuming that $\rx_T$ is close to the limiting normal, one wishes to reverse the process conducted above starting from normal samples, to obtain new samples of $\rx_t$. This can, in fact, be done. We can show that $\rx^{\leftarrow}_t = \{\rx_{T-t}\}_{t \geq 0}$ satisfies the SDE
\begin{equation}\label{bwd}
d\rx^{\leftarrow}_t = \{\eta \rx^{\leftarrow}_t + 2 \nabla \ln q_{T-t}(\rx^{\leftarrow}_t)\}dt + \sqrt{2}d\rb^{\leftarrow}_t,
\end{equation}
where $\rb^{\leftarrow}_t$ is the reversed Brownian motion. Here, $\nabla \ln q_{t}$ is referred to as the \emph{score} function, and note that this must be ascertained before the reverse process begins. That is done using a score-matching algorithm using a neural network; we refer the reader to \cite{hyvarinen2005score_matching,vincent2011score_dae} for the details. Note that diffusion, as performed above, suffers from three inaccuracies, namely inefficient score estimation, insufficiency of $T$, and any discretization that may be used to simulate the solutions of the SDEs. We shall not comment on the latter two inefficiencies, and \emph{assume they are sufficiently dealt with throughout this paper}: however, the accuracy of score estimation will be our next focus.

\subsection{EFFICIENCY OF SCORE MATCHING UNDER PERTURBED INPUT}\label{sec:diffbds}

The inefficiency of score matching is captured using a loss function. For each $t \in [0,T]$, let $s(\rx_t,t)$ be an estimate of the score $\nabla \ln q_t(\rx_t)$ at time $t$ and space point $\rx_t$ ($s(\rx_t,t)$ can be the output of any statistical procedure or machine learning model that attempts to learn the score). Let $f_t(\rx_t) = s(\rx_t,t) - \nabla \ln q_t(\rx_t)$ be the error, and define \begin{equation}\label{loss}
l(\mu) = \frac{1}{T} \int_{0}^T \E\|f_t(\rx_t)\|^2 dt.
\end{equation}
We shall now make a Lipschitz assumption on $f_t(x)$, which is rather common in the literature.
\begin{assumption}\label{ass:lip}
There is a real-valued function $L_t, t \in [0,T]$ such that $f_t(x)$ is $L_t$-Lipschitz for each $t \in [0,T]$, and $$
C_{\eta} = \frac 1T \int_{0}^T L_{t}^2 e^{-2 \eta t} d t < \infty.
$$
\end{assumption}

The above assumption is inspired from \cite{chen2023samplingeasylearningscore} and \cite{accumulation_score_error_iclr2026}. One can also show that by assuming the score network to have the same Lipschitz constant as that of the true score function, the denoiser error can only decrease. This means by choosing neural-network architecture that behaves appropriately, one could justify this assumption. Also, note, in particular, that we allow $L_{t}$ to be a constant in $t$, something which is seen in \cite{chen2023samplingeasylearningscore} as well, whose result we will need to use. However, $L_{t}$ may very well be unbounded near $0$ or $T$. This adds to the generality of our next main result, which we shall now state and believe is one that is of independent interest. Define   
$$\Delta(\mu, \nu, f) = \E_{t \sim U[0,T]}\left|\E_{\mu}\|f_t(\rx_t)\|^2 - \E_{\nu}\|f_t(\ry_t)\|^2\right|.$$


\begin{proposition}\label{thm:diffbds}
Let $\mu,\nu$ be probability distributions and $\varepsilon>0$ be such that $l(\nu) \leq \varepsilon^2$. Let $\{\rx_t\}_{t \in [0,T]}$ and $\{\ry_t\}_{t \in [0,T]}$ be the forward processes specified by \eqref{fwd} with $\rx_0 = \mu$ and $\ry_0  = \nu$ respectively. Then,
\begin{enumerate}
\item we have 
\begin{equation}\label{eq:one}
 \Delta(\mu, \nu, f) \leq C_{\eta} \cW_2^2(\mu,\nu) + 2 \sqrt{C_{\eta}} \cW_2(\mu,\nu)\varepsilon.
\end{equation}
where $C_{\eta}$ is as in Assumption~\ref{ass:lip}.
\item If, in addition, $\mu$ and $\nu$ are concentrated on the set $\{\|x\| \leq M\}$, then
\begin{equation}\label{eq:two}
\Delta(\mu, \nu, f) \leq  (2C_{\eta}M+2 \sqrt{C_{\eta}}\varepsilon)\cW_2(\mu,\nu).
\end{equation}
\item Furthermore, under the boundedness assumption of the previous point, if $K = 2MC_{\eta}$ then
\begin{equation}\label{eq:three}
\Delta(\mu, \nu, f) \leq K\cW_1(\mu,\nu) + 2 \sqrt{K\varepsilon} \sqrt{\cW_1(\mu,\nu)}.
\end{equation}
\end{enumerate}
\end{proposition}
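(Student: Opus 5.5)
The argument rests on a synchronous coupling of the two forward processes. I take an optimal $\cW_2$ coupling $(\rx_0,\ry_0)$ of $\mu$ and $\nu$ and drive both copies of \eqref{fwd} with the same Brownian motion. By \eqref{soln} the noise terms cancel, so $\rx_t-\ry_t = e^{-\eta t}(\rx_0-\ry_0)$ for every $t$. For fixed $t$, Assumption~\ref{ass:lip} then gives
\[
\bigl|\,\|f_t(\rx_t)\|-\|f_t(\ry_t)\|\,\bigr| \leq L_t e^{-\eta t}\|\rx_0-\ry_0\| .
\]
Writing $a=\|f_t(\rx_t)\|$ and $b=\|f_t(\ry_t)\|$, I use $|a^2-b^2|\leq (a-b)^2+2b|a-b|$. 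Taking expectations and applying Cauchy--Schwarz gives
\[
\bigl|\E a^2-\E b^2\bigr| \leq L_t^2e^{-2\eta t}\,\E\|\rx_0-\ry_0\|^2 + 2\bigl(\E b^2\bigr)^{1/2}L_te^{-\eta t}\bigl(\E\|\rx_0-\ry_0\|^2\bigr)^{1/2}.
\]
Since $\E\|\rx_0-\ry_0\|^2=\cW_2^2(\mu,\nu)$ under the optimal coupling, the first term is $L_t^2e^{-2\eta t}\cW_2^2$.

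Next I average over $t\sim U[0,T]$. The first term integrates to exactly $C_\eta\cW_2^2$. For the cross term I apply Cauchy--Schwarz in $t$:
\[
\frac1T\int_0^T L_te^{-\eta t}\bigl(\E b^2\bigr)^{1/2}\,dt \leq \Bigl(\frac1T\int_0^T L_t^2e^{-2\eta t}\,dt\Bigr)^{1/2}\Bigl(\frac1T\int_0^T\E b^2\,dt\Bigr)^{1/2} = \sqrt{C_\eta}\,\sqrt{l(\nu)} \leq \sqrt{C_\eta}\,\varepsilon .
\]
Together these give \eqref{eq:one}. The one point to get right is that the $\varepsilon$ must enter through the $\nu$-side expectation, which is why I expand around $b$ rather than $a$. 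I expect this step to need the most care.

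For \eqref{eq:two}, the bounded support gives $\|\rx_0-\ry_0\|\leq 2M$ almost surely. Hence $\E\|\rx_0-\ry_0\|^2\leq 2M\,\E\|\rx_0-\ry_0\|\leq 2M\cW_2$, using Jensen's inequality for the last step. Substituting this into the first term of \eqref{eq:one} gives $2MC_\eta\cW_2$; the cross term is already linear in $\cW_2$.

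For \eqref{eq:three}, I repeat the argument with an optimal $\cW_1$ coupling instead. Boundedness gives $\E\|\rx_0-\ry_0\|^2\leq 2M\cW_1$. The first term then becomes $C_\eta\cdot 2M\cW_1=K\cW_1$. The cross term becomes $2\sqrt{C_\eta}\,\varepsilon\sqrt{2M\cW_1}=2\varepsilon\sqrt{K\cW_1}$. This is at most the stated $2\sqrt{K\varepsilon}\sqrt{\cW_1}$ provided $\varepsilon\leq 1$, or after reading the stated constant generously. If the exact form matters I would check which normalization of $\varepsilon$ is intended, but the structure of the bound is the same.
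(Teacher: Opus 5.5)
Your proof follows the paper's proof step for step. Both use a synchronous coupling of the forward processes started from an optimal coupling, the identity $\rx_t-\ry_t=e^{-\eta t}(\rx_0-\ry_0)$, and the expansion of $|a^2-b^2|$ around the $\nu$-side term. Both then apply Cauchy--Schwarz over the coupling and over $t$, use $\E\|\rx_0-\ry_0\|^2\le 2M\,\E\|\rx_0-\ry_0\|$ under bounded support, and switch to a $\cW_1$-optimal coupling for part (3). Parts (1) and (2) are correct as you wrote them.

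For part (3), the bound you derive, $K\cW_1+2\varepsilon\sqrt{K\cW_1}$, is the correct output of the argument. Your restriction $\varepsilon\le 1$ is genuinely needed; it is not a matter of reading the constant generously.

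The paper gets $2\sqrt{K\varepsilon}\sqrt{\cW_1}$ only through an error in its Cauchy--Schwarz step in $t$. There it writes the second factor as $\bigl(\frac1T\int_0^T\sqrt{\E\|f_t(\ry_t)\|^2}\,dt\bigr)^{1/2}$ instead of $\bigl(\frac1T\int_0^T\E\|f_t(\ry_t)\|^2\,dt\bigr)^{1/2}$; it does this step correctly in part (1).

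The printed inequality is in fact false when $l(\nu)$ is large, and it is not even homogeneous. Replacing $f$ by $cf$ sends $L_t\to cL_t$ and $\varepsilon\to c\varepsilon$. This multiplies $\Delta$, $K\cW_1$ and $\varepsilon\sqrt{K\cW_1}$ by $c^2$, but $\sqrt{K\varepsilon\cW_1}$ only by $c^{3/2}$.

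Here is a concrete counterexample:
\begin{itemize}
\item Take $d=1$ and $f_t(x)=A+Bx$ with $A,B>0$, so $L_t\equiv B$.
\item Take $\nu=\delta_0$, $\mu=\delta_h$ and $M=h$.
\item Set $\varepsilon^2=l(\nu)=A^2+B^2\bar v$, where $\bar v$ is the time-averaged variance of $\ry_t$.
\end{itemize}
Since $\rx_t$ and $\ry_t$ have the same variance, $\Delta=2ABh\alpha_1+B^2h^2\alpha_2$ with $\alpha_k=\frac1T\int_0^Te^{-k\eta t}\,dt$. This grows linearly in $A$. The right side of \eqref{eq:three} is $2B^2\alpha_2h^2+2Bh\sqrt{2\alpha_2\varepsilon}$, which grows only like $\sqrt{A}$.

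So you should record the conclusion as $\Delta(\mu,\nu,f)\le K\cW_1(\mu,\nu)+2\varepsilon\sqrt{K\cW_1(\mu,\nu)}$. This implies \eqref{eq:three} whenever $\varepsilon\le 1$.
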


Informally, probability measures close in the Wasserstein distances can be diffusion sampled with high accuracy provided at least one of them can be, as the denoiser error completely determines the performance of sampling via diffusion. Observe that the stated bounds cover both the bounded and unbounded regimes, and proximity in both $\cW_2$ and $\cW_1$, thereby adding to its versatility. Proposition~\ref{thm:diffbds} applied to the tilted setting gives the following corollary.


\begin{corollary}\label{thm:ExpDiffBds}
Following the setup of Proposition~\ref{thm:diffbds}, take $\mu = \mu_{\theta}$ and $\nu = \mu_{N, \theta}$. Let $K = 2MC_\eta$. Then,
\begin{enumerate}
\item we have, 
\begin{equation*}
\E [\Delta(\mu_\theta, \mu_{N, \theta}, f)] \leq 
C_\eta \E\cT_2(\mu_\theta, \mu_{N, \theta}) 
+ 2 \sqrt{C_\eta}\sqrt{\E\cT_2(\mu_\theta, \mu_{N, \theta}})
\end{equation*}
\item If $\mu$ is concentrated on the set $\{\|x\| \leq M\}$, then
\begin{equation}
\E [\Delta(\mu_\theta, \mu_{N, \theta}, f)^2]
\leq  (K+2 \sqrt{C_{\eta}}\varepsilon)^2 \E \cT_2(\mu_\theta,\mu_{N, \theta}).
\end{equation}
\item Furthermore, under the boundedness assumption of the previous point, 
\begin{equation}
\E [\Delta(\mu_\theta, \mu_{N, \theta}, f)] 
\leq K\E \cW_1(\mu_\theta,\mu_{N, \theta})
\quad + 2 \sqrt{K\varepsilon} \sqrt{\E \cW_1(\mu_\theta,\mu_{N, \theta})},
\end{equation}
\end{enumerate}

where the first expectation is on the randomness from the random measure $\mu_{N, \theta}$.
\end{corollary}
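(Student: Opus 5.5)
The plan is to apply Proposition~\ref{thm:diffbds} pathwise, i.e.\ for each fixed realization of the samples $\rx_1,\dots,\rx_N$. On that event, $\nu = \mu_{N,\theta}$ is a deterministic probability measure; we take $\mu=\mu_\theta$ and keep the hypothesis $l(\nu)\le \varepsilon^2$ from the setup. Each of the three pointwise bounds \eqref{eq:one}--\eqref{eq:three} then holds almost surely. It remains to take expectations over the randomness of $\mu_{N,\theta}$. Throughout, we use $\cT_2=\cW_2^2$ and Jensen's inequality in the form $\E[\cW_p]\le (\E\cT_p)^{1/p}$, and for concave functions such as $\sqrt{\cdot}$.

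For part (1), we take expectations in \eqref{eq:one}. This gives
\[
\E\Delta \le C_\eta\,\E\cT_2 + 2\sqrt{C_\eta}\,\varepsilon\,\E\cW_2 \le C_\eta\,\E\cT_2 + 2\sqrt{C_\eta}\,\varepsilon\,\sqrt{\E\cT_2}.
\]
The stated form omits $\varepsilon$, so I would either normalize $\varepsilon\le 1$ or read the constant as absorbing $\varepsilon$; I would flag this in the write-up.

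For part (2), we first note that if $\mu$ is supported in $\{\|x\|\le M\}$, then so are $\mu_\theta$ (it is absolutely continuous with respect to $\mu$) and $\mu_{N,\theta}$ (its atoms are samples from $\mu$, almost surely). So the hypothesis of \eqref{eq:two} holds. Squaring \eqref{eq:two} gives
\[
\Delta^2 \le (2C_\eta M + 2\sqrt{C_\eta}\varepsilon)^2\,\cW_2^2 = (K + 2\sqrt{C_\eta}\varepsilon)^2\,\cT_2,
\]
and taking expectations finishes this part. No Jensen step is needed here, which is why the second moment of $\Delta$ appears.

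For part (3), we take expectations in \eqref{eq:three}. The first term is linear in $\cW_1$. For the second, concavity of the square root gives $\E\sqrt{\cW_1}\le\sqrt{\E\cW_1}$.

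The only delicate points are measurability and the hypothesis. We need $\Delta(\mu_\theta,\mu_{N,\theta},f)$ and $\cW_p(\mu_\theta,\mu_{N,\theta})$ to be measurable functions of the samples; this holds because both depend continuously on the finitely many atom locations, which are continuous functions of the samples. We also need $l(\mu_{N,\theta})\le\varepsilon^2$ to hold for every realization, which is implicit in ``following the setup''. I expect this hypothesis bookkeeping to be the main thing to state carefully rather than a genuine obstacle.
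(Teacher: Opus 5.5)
Your proposal is correct and takes the same route as the paper. The paper gives no separate argument: it presents the corollary as Proposition~\ref{thm:diffbds} applied with $\mu=\mu_\theta$, $\nu=\mu_{N,\theta}$ for each realization, then takes expectations, squares for part (2), and uses $\E\sqrt{X}\le\sqrt{\E X}$ for parts (1) and (3). Your flag on part (1) is also correct: the proposition only yields $C_\eta\E\cT_2+2\sqrt{C_\eta}\,\varepsilon\sqrt{\E\cT_2}$, so the displayed bound, which drops $\varepsilon$, holds as written only when $\varepsilon\le 1$.
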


The next corollary is trivial from the previous corollary and $\E\sqrt{\rx} \leq \sqrt{\E \rx}$.

\begin{corollary}\label{cor:DelExp}
Set $f_t(x)$ to be the map $x \to s(x, t) - \nabla\log q_t(x)$. Under Assumption ~\ref{ass:lip}, and  $\E\ell(\mu_{N, \theta}) \leq \e^2$. 
Then,
\begin{enumerate}

\item we have, 
\begin{equation*}
    \E_{\mu_\theta}\|f_t(\rx_t)\|^2 \leq \e^2 + 
    C_\eta \E\cT_2(\mu_\theta, \mu_{N, \theta}) 
+ 2 \sqrt{C_\eta}\sqrt{\E\cT_2(\mu_\theta, \mu_{N, \theta}}).
\end{equation*}
\item If $\mu$ is concentrated on the set $\{\|x\| \leq M\}$, 
\begin{equation}
    \E_{\mu_\theta}\|f_t(\rx_t)\|^2 
    \leq \e^2 + 2M C_{\eta}\E \cW_1(\mu,\nu) 
    \quad + 2 \sqrt{2M C_{\eta}\varepsilon} \sqrt{\E \cW_1(\mu_\theta,\mu_{N, \theta})}.
\end{equation}
\end{enumerate}

\end{corollary}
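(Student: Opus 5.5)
The plan is to apply Corollary~\ref{thm:ExpDiffBds} pointwise in $t$ and combine it with the assumed bound on the training loss. First fix the notation. Let $\{\rx_t\}$ be the forward process \eqref{fwd} started from $\mu_\theta$, and $\{\ry_t\}$ the one started from the random measure $\mu_{N,\theta}$. The quantity on the left, read as the time-averaged loss $l(\mu_\theta)$ of \eqref{loss} (equivalently, $\E_{t\sim U[0,T]}\E_{\mu_\theta}\|f_t(\rx_t)\|^2$), satisfies the triangle inequality
\[
l(\mu_\theta) \le l(\mu_{N,\theta}) + \Delta(\mu_\theta,\mu_{N,\theta},f).
\]
This holds because $\E_{t}\E_{\mu_\theta}\|f_t\|^2 - \E_t\E_{\mu_{N,\theta}}\|f_t\|^2 \le \E_t|\cdot|$, which is exactly $\Delta$. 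Taking expectation over the sampling randomness in $\mu_{N,\theta}$ and using the hypothesis $\E\,l(\mu_{N,\theta})\le\e^2$ gives
\[
\E\, l(\mu_\theta)\le \e^2+\E\,\Delta(\mu_\theta,\mu_{N,\theta},f).
\]
Note that $\mu_\theta$ is deterministic and $f$ is fixed, so $l(\mu_\theta)$ is not random.

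\textbf{Part (1).} Invoke item (1) of Corollary~\ref{thm:ExpDiffBds} to bound $\E\Delta$ by $C_\eta\E\cT_2 + 2\sqrt{C_\eta}\sqrt{\E\cT_2}$. The derivation of that item passes from $\E[\cW_2\,\e]$ to $\sqrt{\E\cT_2}$ via $\E\sqrt{X}\le\sqrt{\E X}$ (Jensen), together with $\cW_2^2=\cT_2$. Here I need care: Proposition~\ref{thm:diffbds} requires $l(\nu)\le\e^2$ for the realized measure, while we only assume this in expectation. My first attempt is to keep $\sqrt{l(\mu_{N,\theta})}$ inside the bound and use Cauchy--Schwarz,
\[
\E\bigl[\cW_2\sqrt{l(\mu_{N,\theta})}\bigr]\le \sqrt{\E\cT_2}\,\sqrt{\E\, l(\mu_{N,\theta})}\le \e\sqrt{\E\cT_2}.
\]
This reproduces the stated form, with the factor $\e$ absorbed exactly as in Corollary~\ref{thm:ExpDiffBds}.

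\textbf{Part (2).} Use item (3) of Corollary~\ref{thm:ExpDiffBds}, which relies on item (3) of Proposition~\ref{thm:diffbds} with $K=2MC_\eta$. The boundedness of $\mu$ transfers to $\mu_\theta$ and to $\mu_{N,\theta}$, since both are supported in the support of $\mu$. Then handle $\E\sqrt{\e\,\cW_1}$ by Jensen, or by Cauchy--Schwarz as in Part (1), to get $2\sqrt{2MC_\eta\e}\sqrt{\E\cW_1}$. Adding $\e^2$ gives the displayed bound; the $\cW_1(\mu,\nu)$ in the statement is read as $\cW_1(\mu_\theta,\mu_{N,\theta})$.

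\textbf{Main obstacle.} The main difficulty is the mismatch between the pointwise hypothesis $l(\nu)\le\e^2$ in Proposition~\ref{thm:diffbds} and the in-expectation hypothesis here. I would resolve it by rereading that proof to extract the sharper inequality $\Delta\le C_\eta\cW_2^2+2\sqrt{C_\eta}\cW_2\sqrt{l(\nu)}$, which its coupling argument naturally yields via Cauchy--Schwarz on $\E\langle f_t(\rx_t)-f_t(\ry_t), f_t(\ry_t)\rangle$, and then applying Cauchy--Schwarz in the outer expectation. The analogous $\cW_1$ version would be handled the same way. Everything else is Jensen and linearity.
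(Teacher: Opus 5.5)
Your proposal is correct and follows the paper's skeleton. The paper calls the corollary ``trivial'' from Corollary~\ref{thm:ExpDiffBds} and $\E\sqrt{X}\le\sqrt{\E X}$, which amounts to $l(\mu_\theta)\le l(\mu_{N,\theta})+\Delta(\mu_\theta,\mu_{N,\theta},f)$, an expectation over the samples, and the expected Wasserstein bounds. You differ in how the loss hypothesis enters, and your version is the more rigorous one.

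The paper's route needs $l(\mu_{N,\theta})\le\e^2$ for every realization of the random measure, since that is the hypothesis of Proposition~\ref{thm:diffbds}, on which Corollary~\ref{thm:ExpDiffBds} rests. You instead keep the pointwise inequalities
\[
\Delta\le C_\eta\cW_2^2+2\sqrt{C_\eta}\,\cW_2\sqrt{l(\mu_{N,\theta})},\qquad \Delta\le K\cW_1+2\sqrt{K\,\cW_1\,l(\mu_{N,\theta})},\qquad K=2MC_\eta,
\]
which the coupling proof does deliver. You then apply Cauchy--Schwarz over the sampling randomness, so only $\E\,l(\mu_{N,\theta})\le\e^2$ is used.

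Two corrections to your write-up.

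\emph{The powers of $\e$.} Your argument yields $2\sqrt{C_\eta}\,\e\sqrt{\E\cT_2}$ in item (1) and $2\sqrt{2MC_\eta}\,\e\sqrt{\E\cW_1}$ in item (2). These imply the displayed terms $2\sqrt{C_\eta}\sqrt{\E\cT_2}$ and $2\sqrt{2MC_\eta\e}\sqrt{\E\cW_1}$ only when $\e\le 1$. State that restriction rather than saying $\e$ is ``absorbed''. The displayed powers are not what the coupling argument gives:
\begin{itemize}
\item the factor $\e$ is silently dropped in Corollary~\ref{thm:ExpDiffBds}(1);
\item the $\sqrt{\e}$ in Proposition~\ref{thm:diffbds}(3) comes from a spurious inner square root in its last Cauchy--Schwarz step, whose correct output is $\sqrt{C_\eta}\sqrt{l(\nu)}\le\sqrt{C_\eta}\,\e$.
\end{itemize}

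\emph{The ``Jensen'' option in Part (2).} Drop the alternative of applying Jensen via item (3) of Corollary~\ref{thm:ExpDiffBds}. It reintroduces exactly the almost-sure hypothesis you identified as the main obstacle. Only the Cauchy--Schwarz route works under the stated hypothesis.
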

\label{difficultyBounded}

We shall now comment on the above corollary. In particular, the main highlight is that, in light of Theorems~\ref{thm:WassWtd} and \ref{thm:WassWtd2}, we are able to obtain concentration inequalities for $\cW_p(\mu_{\theta}, \mu_{N,\theta})$, $p=1,2$, which strengthen the tractability of the right hand side in the corollary. To stress the importance of this observation, we briefly expand upon the challenge our techniques have surmounted.

There is a rarity, in the literature, of concentration inequalities for $\cW_2(\mu_{\theta}, \mu_{N, \theta})$. Recently, \cite{Lei_2020} did obtain concentration inequalities on $\cW_2(\mu, \mu_N)$ where $\mu_N$ is the i.i.d. empirical estimator of $\mu$. However, this imposed a Log-Sobolev assumption on $\mu$, and had used the Lipschitz nature of $\cW_2(\mu, \mu_N)$ as a function from $\mathcal{X}$ to $\mathbb{R}$.

To explain this further, let $\mu_N$ and $\mu_N'$ be two i.i.d.-sample-generated estimators of $\mu$, arising through the points $(x_i)_i$ and $(x_i')_i$ respectively. The Lipschitz condition is proved using $|\cW_2(\mu_N, \mu) - \cW_2(\mu, \mu_N')| \leq \cW_2(\mu_N, \mu_N')$, and then noticing that the optimal coupling is the trivial coupling that assigns mass $\frac 1N$ to the pair $(x_i, x_i')$.  However, in our setup, it is not possible to obtain a Lipschitz condition for our weighted sampler, even under the stronger Log-Sobolev assumption. This is because we don't obtain a trivial coupling of $\cW_2(\mu_{N, \theta}, \mu_{N, \theta}')$ anymore as the weights could behave arbitrarily badly. 

\subsection{ACCURACY OF DIFFUSION SAMPLING}\label{sec:fr}

We are now ready to state our final theorem, the accuracy of diffusion in generating tilted samples.
\begin{assumption}\label{ass:LossMin}
The loss minimization procedure leads to 
$$\E\ell(\mu_{N, \theta}) \leq \varepsilon^2$$
\end{assumption}
\begin{assumption}\label{ass:Wass}
Either
$$
M_q(\mu_{2\theta})^{\frac pq}C_w[ N^{-p/d} + W_2 N^{-1/2}] 
\leq \delta
$$
or

\begin{enumerate}
\item $\mu$ is concentrated on the set $\{\|x\| \leq M\}$. 
\item $M_q(\mu_{\theta})^{\frac pq}\cdot (V\cdot N^{-\frac{p}{d}} + N^{-1/2}) \leq \delta$
\end{enumerate}

holds. 
\end{assumption}
\begin{theorem}\label{thm:DiffWorks}
    Under Assumption~\ref{ass:lip}, \ref{ass:LossMin}, \ref{ass:Wass}, $d > qp/(q - p)$, and with twist with weight $w = \exp(\theta^T g(\rx))$, the output of the diffusion process has 
    $\mathcal{O}(C_\eta \delta + \sqrt{C_\eta\delta})$
    or
    $\mathcal{O}(C_\eta M \delta + \sqrt{M C_\eta}\sqrt{\delta})$
    TV-error with respect to $\mu_\theta$ for $p = 2$ or $p = 1$ respectively, according to which of the parts of Assumption~\ref{ass:Wass} holds, assuming the other sources of error (discretization and the time the reverse process is run) is small.
\end{theorem}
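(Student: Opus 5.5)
The plan is to chain three ingredients: the Wasserstein bounds of Theorems~\ref{thm:WassWtd} and \ref{thm:WassWtd2}, the denoiser-error perturbation bound of Corollary~\ref{cor:DelExp}, and the TV guarantee for DDPM of \cite{chen2023samplingeasylearningscore}. The last result says that, if discretization error and the finite-horizon error (from $\rx_T$ not being exactly Gaussian) are negligible, the TV distance between the output law and the target $\mu_\theta$ is controlled by the square root of the averaged score error under the true forward process started from $\mu_\theta$,
\begin{equation*}
\sqrt{T\cdot \frac{1}{T}\int_0^T \E_{\mu_\theta}\|f_t(\rx_t)\|^2\,dt}.
\end{equation*}
It uses Girsanov together with Pinsker (or a KL chain rule). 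So it suffices to bound $l(\mu_\theta)$.

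\textbf{Step 1.} The score estimate $s$ is trained on $\mu_{N,\theta}$, and Assumption~\ref{ass:LossMin} gives $\E\, l(\mu_{N,\theta}) \leq \varepsilon^2$. By the triangle inequality,
\begin{equation*}
l(\mu_\theta) \leq l(\mu_{N,\theta}) + \Delta(\mu_\theta,\mu_{N,\theta},f).
\end{equation*}
Taking expectations over the samples and invoking Corollary~\ref{cor:DelExp}, the expected loss is at most $\varepsilon^2$ plus a term that depends only on $\E\cT_2(\mu_\theta,\mu_{N,\theta})$ in the unbounded case, or on $\E\cW_1$ in the bounded case. These two cases correspond to Proposition~\ref{thm:diffbds}(1) and (3), used under Assumption~\ref{ass:lip}.

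\textbf{Step 2.} Plug in the Wasserstein estimates.
\begin{itemize}
\item Under the first alternative of Assumption~\ref{ass:Wass}, take $p=2$. Theorem~\ref{thm:WassWtd} gives $\E\cT_2(\mu_{N,\theta},\mu_\theta)\leq C\delta$, with the universal constant absorbed into the $\mathcal O$. The extra loss is then $\mathcal O(C_\eta\delta+\sqrt{C_\eta\delta})$.
\item Under the second alternative, take $p=1$. Theorem~\ref{thm:WassWtd2} gives $\E\cW_1=\E\cT_1\leq C\delta$. Since $\mu$, and hence every $\mu_\theta$ and $\mu_{N,\theta}$, is supported in $\{\|x\|\leq M\}$, part (3) applies with $K=2MC_\eta$. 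This gives $\mathcal O(MC_\eta\delta + \sqrt{MC_\eta\varepsilon}\sqrt{\delta})$, and with $\varepsilon$ treated as a fixed constant this is the stated $\mathcal O(C_\eta M\delta+\sqrt{MC_\eta}\sqrt\delta)$.
\end{itemize}
The condition $d>qp/(q-p)$ is exactly what those theorems require.

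\textbf{Step 3.} Feed the bound on $\E\, l(\mu_\theta)$ into \cite{chen2023samplingeasylearningscore}, averaging over the randomness of $\mu_{N,\theta}$. Jensen lets us move the expectation inside the square root. Up to the $\varepsilon$-term already present in vanilla diffusion, the TV error is then governed by the perturbation term computed above.

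\textbf{Main obstacle.} The hardest part is reconciling the functional form. Chen et al.\ yield TV of order the square root of the loss, whereas the theorem states an error linear in the loss increment, $\mathcal O(C_\eta\delta+\sqrt{C_\eta\delta})$. I would handle this by interpreting the claimed rate as the bound on the score-error contribution (the "denoiser error" in the paper's language), which is what that reference says controls accuracy. If a strict TV statement is required, I would instead state the result as TV $\lesssim \sqrt{\varepsilon^2+\text{(that term)}}$.

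A second delicate point is that the theorems bound the expectation of the Wasserstein cost over the random sample, not the cost itself. I would keep every statement in expectation, using the concavity of $\sqrt{\cdot}$ to justify each interchange.
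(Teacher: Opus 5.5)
Your route is the same as the paper's. Corollary~\ref{cor:WassConv} (Theorem~\ref{thm:WassWtd} or~\ref{thm:WassWtd2} under Assumption~\ref{ass:Wass}) gives $\E\cT_p(\mu_\theta,\mu_{N,\theta})=\mathcal{O}(\delta)$. Corollary~\ref{cor:DelExp} turns this into a bound on the score error under $\mu_\theta$, and Theorem~\ref{thm:ddpm} of \cite{chen2023samplingeasylearningscore} turns that into TV. The step you flag as the main obstacle is a genuine gap, and the paper's proof does not close it either. Its last display, $\mathcal{O}(\e^2+\sqrt{\delta})$, inserts the bound on $\e_{\mathrm{score}}^2$ where Theorem~\ref{thm:ddpm} has $\e_{\mathrm{score}}\sqrt{T}$. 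Your fallback $\operatorname{TV}\lesssim\sqrt{T}\sqrt{\e^2+C_\eta\delta+2\sqrt{C_\eta\delta}}$ is valid. However, via $\sqrt{\e^2+x}\le\e+x/(2\e)$ it recovers the stated rate only with a constant of order $1/\e$, and it degenerates to $(C_\eta\delta)^{1/4}$ as $\e\to 0$. So, as written, the proposal does not establish the theorem.

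The missing idea is to keep the factor $\sqrt{l(\mu_{N,\theta})}$ in the cross term rather than bounding it separately. Note that Corollary~\ref{cor:DelExp}(1) as printed drops the factor $\e$ in front of $2\sqrt{C_\eta}\sqrt{\E\cT_2}$, so it is not the right tool here. Run the argument of Proposition~\ref{thm:diffbds} realization by realization, with the $\cW_2$-optimal coupling of the initial laws and the same Brownian motion. This gives the perfect square $l(\mu_\theta)\le\bigl(\sqrt{l(\mu_{N,\theta})}+\sqrt{C_\eta}\,\cW_2(\mu_\theta,\mu_{N,\theta})\bigr)^2$. That is just Minkowski's inequality in $L^2\bigl(\tfrac{dt}{T}\otimes\pi\bigr)$ applied to $\|f_t(x_t)\|\le\|f_t(y_t)\|+L_te^{-\eta t}\|x_0-y_0\|$.

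Minkowski over the samples then gives $\sqrt{\E\,l(\mu_\theta)}\le\e+\sqrt{C_\eta\,\E\cT_2(\mu_\theta,\mu_{N,\theta})}\le\e+\sqrt{C\,C_\eta\delta}$. Your Jensen step plus Theorem~\ref{thm:ddpm} then yield $\operatorname{TV}\lesssim\sqrt{T}\bigl(\e+\sqrt{C\,C_\eta\delta}\bigr)$, up to the discretization and $e^{-T}$ terms. In the bounded case, use $\|x_0-y_0\|^2\le 2M\|x_0-y_0\|$ under the $\cW_1$-optimal coupling; the same argument gives $\sqrt{T}\bigl(\e+\sqrt{2C\,MC_\eta\delta}\bigr)$.

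So the excess over the vanilla $\e\sqrt{T}$ term is $\mathcal{O}(\sqrt{C_\eta\delta})$, resp.\ $\mathcal{O}(\sqrt{MC_\eta\delta})$. This holds uniformly in $\e$, with $\sqrt{T}$ absorbed into the constant, and it implies both rates claimed in the theorem. No reinterpretation of the statement is needed.
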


As remarked at the end of Corollary~\ref{cor:WassConv}, the quantities $V$ and $\cW$ both grow exponentially in $\theta$. Therefore, our sample complexity bounds for the Wasserstein distance are rather weak in nature, but we can confidently isolate these as the problematic terms, since for a large class of bounded random variables with well-behaved tail (e.g. those that belong to the extreme value domain of attraction of a Weibull distribution), the quantities $W_k$ and $M_{q}(\mu_{\theta})$ grow only polynomially in $\theta$.

This completes the discussion of the theoretical aspects of our article. In the next section, we detail the experiments that we have performed to validate the convergence rates, along with comparisons to related literature.

\section{EXPERIMENTS}\label{sec:exp}

In this section, we describe and present results from experiments used to validate our results.

We first describe how to create a base distribution for the experiments. Starting from independent bounded marginals $X_i$'s, we form $Y=AX$ with a randomly chosen matrix $A$, with $A_{ij} \sim U[0, 1]$ and each row of $A$ summing to 1. Thus, each coordinate of $Y$ is a convex combination of the $X_i$'s. This yields a correlated, non-Gaussian target with bounded support. We treat this as the base distribution $p$.

Given a tilting parameter $\theta \in \mathbb{R}^d$, we construct the target reweighed distribution as:
\[
    p_{\theta}(x) \propto e^{\theta^T x}p(x)
\]

Moreover, we use our method for climate modeling in Appendix \ref{app:climate}. Here, we incorporate moment constraints, giving us an exponential twist as shown in Proposition \ref{thm:KLtoExp}.

\subsection{Convergence Rate Validation}
To empirically validate the theoretical bounds derived in Section~\ref{wacre}, specifically Theorem~\ref{thm:WassWtd}, we analyze the convergence behavior of the reweighed empirical estimator $\mu_{N, \theta}$ as the sample size $N$ increases. 

We utilize a 10-dimensional bounded setting in the framework described above. We use the tilting parameter $\theta = (2, \ldots, 2)$ to construct $p_{\theta}$. We vary the sample size $N$ from $10^2$ to $10^5$ and compute the Sliced Wasserstein distance ($\mathcal{W}_2$), which approximates the true Wasserstein distance \cite{Bonneel2015}, between the reweighed samples and the ground truth.

\begin{figure}[ht]
\centering
\fbox{\includegraphics[width=0.95\linewidth]{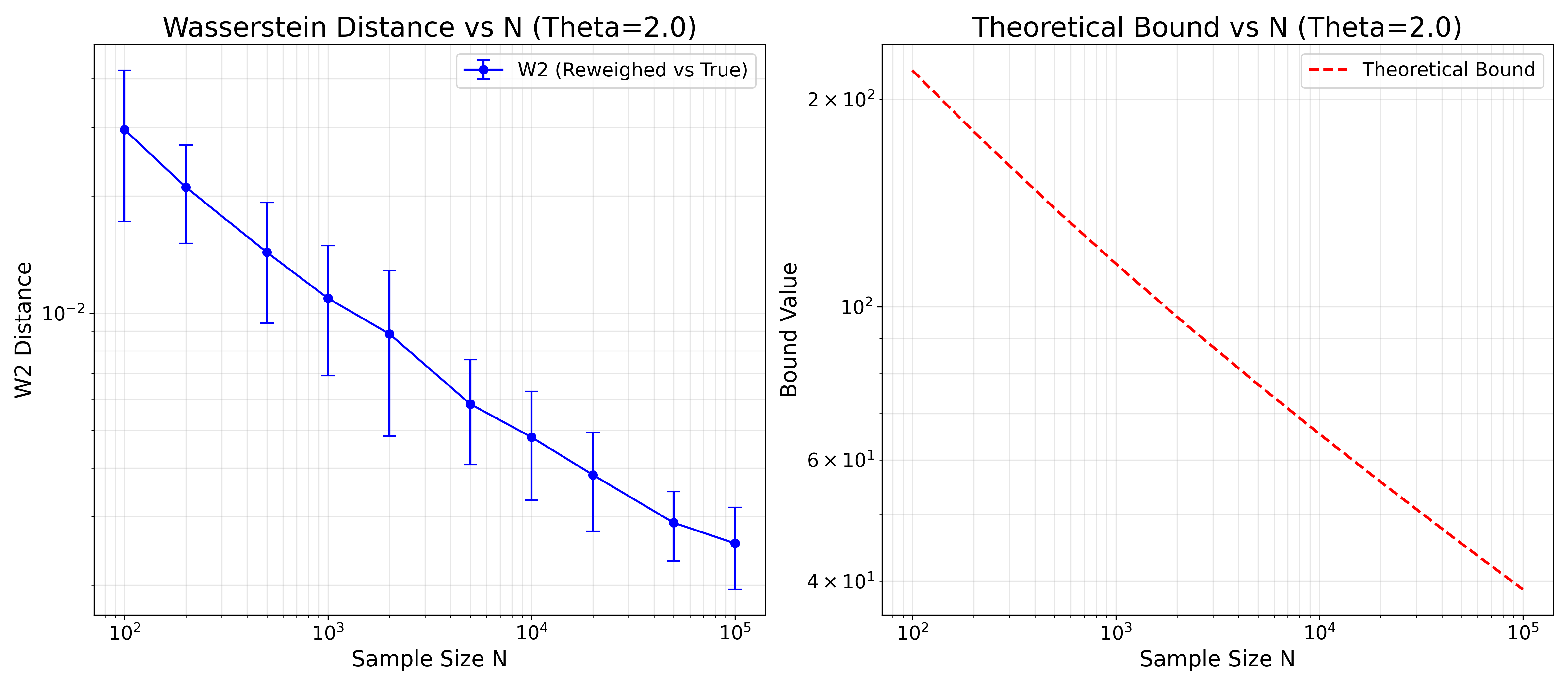}}
\caption{The left plot shows the empirical sliced Wasserstein distance ($\mathcal{W}_2$) between the reweighed estimator and the true tilted distribution. The right plot shows the theoretical bound derived in Theorem~\ref{thm:WassWtd} as a function of sample size $N$. As the theoretical bound vanishes, the empirical error decreases correspondingly.}
\label{fig:convergence}
\end{figure}

The results are presented in Figure~\ref{fig:convergence}. We observe that the empirical error decays monotonically as $N$ increases, closely tracking the slope of the theoretical bound. This confirms that the reweighed estimator converges to the true tilted distribution in the Wasserstein metric as predicted.

\subsection{Bounded, Correlated Target}
We now demonstrate our approach on high-dimensional distributions. We compare four methods:  
i) reweighted sampling,  
ii) reweighted sampling combined with diffusion,  
iii) diffusion posterior sampling (DPS) \cite{chung2024diffusionposteriorsamplinggeneral}, and iv) loss-guided diffusion (LGD-MC) \cite{pmlr-v202-song23k}. 

In this experiment, given a tilting parameter $\theta \in \mathbb{R}^d$, we construct a reweighed distribution using the base distribution $p$ described above:
\[
    p_{\theta}(x) \propto e^{\theta^T x}p(x)
\]
We then use different sampling techniques to sample and report performance across tilt parameters $\theta$. A comparative plot is shown in Figure~\ref{exp2}. For full construction details, refer to Appendix~\ref{app:experiments}.


\begin{figure}[H]
\centering
\fbox{\includegraphics[width=0.47\textwidth]{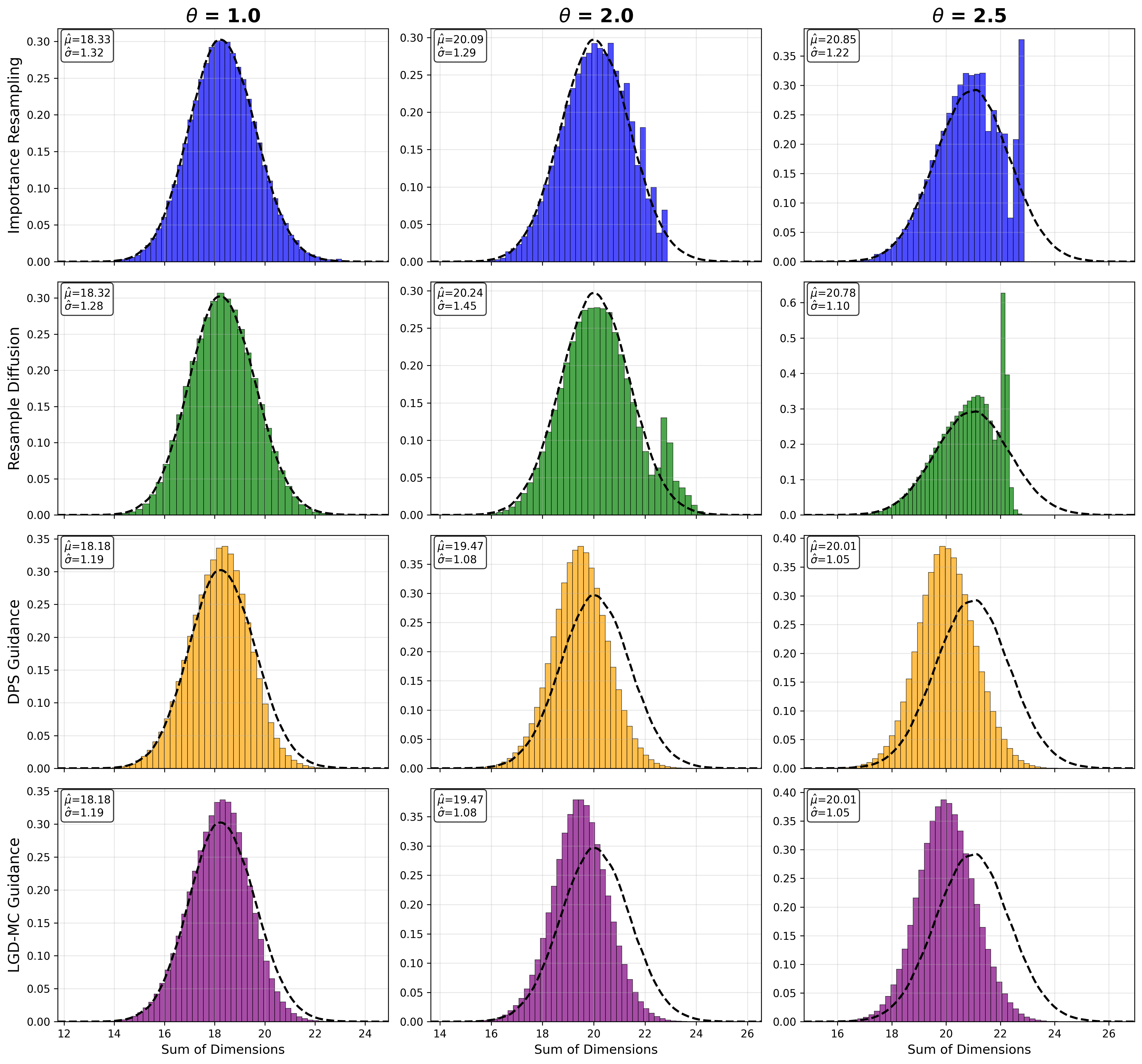}}

\caption{Samples generated by twisting a bounded in 50 dimensions by $\vartheta = \theta \cdot (1, \ldots, 1)$, for $\theta$ = 1.0, 2.0, 2.5 using reweighed sampling, weighted diffusion, DPS, and LGD-MC. We see that our method performs as well as the empirical samples, outperforming the guidance methods.}
\label{exp2}
\end{figure}




\section{CONCLUSION AND FUTURE WORK}\label{conc}

In this work, our goal was to prove that diffusion run on a weighted empirical sampler leads, under suitable conditions, to accurate samples. We first derived upper bounds on the expected Wasserstein distance between the weighted empirical sampler and the true twisted distribution, and secondly used these to obtain bounds on the TV error between the twisted samples and the true twisted distribution. \\

We shall now list some key questions that can be considered for future work. 

\begin{enumerate}[label = (\alph*)]
    \item Theorem~\ref{thm:WassWtd2} shows that the accuracy of the plug-in estimator matches the empirical rate of Proposition~\ref{fgappltomt}, but requires the very restrictive condition $\|g\| \leq g_{\max}$, which often forces boundedness on the support of $\rx$ in practical scenarios. Removing this would constitute a substantial improvement. 
    \item Corollary~\ref{cor:WassConv} provides suitable conditions under which, for the expected Wasserstein distances, asymptotically accurate sampling can be performed. The same question for risk metrics such as value-at-risk are yet to be addressed.
    \item While Theorem~\ref{thm:DiffWorks} upper bounds the sample complexity of diffusion in the expected Wasserstein distance, the question of what lower bounds are possible is completely open in this setting.
    \item We have considered only one particular diffusion mechanism in Section~\ref{sec:diff}. This is often termed as the variance-preserving(or VP) setting. Other settings exist, such as the variance-exploding setup \cite{song2021scorebasedgenerativemodelingstochastic}, flow diffusions\cite{song2021ddim, lipman2023flow_matching_iclr} and rectified flows\cite{liu2023flow_straight_fast_iclr}, and accuracy guarantees for these diffusions in the context of tilted sampling are yet to be established.
    \item Related to (c), the sample complexity is severely affected by the quantities $C_w$ and $V$ given in Corollary~\ref{cor:WassConv}, since these grow exponentially in $\theta$. Controlling them and reducing the sample complexity to a polynomial size would also be a significant improvement. We anticipate that this is the case.
\end{enumerate}

\bibliography{ddpm_based_tilting}
\bibliographystyle{plain}

\clearpage

\begin{appendix}

\section{PROOFS}\label{app:proofs}
In this section, we prove all the results in our article.
\subsection{TILTING AS MINIMIZATION OF ENTROPIC DIVERGENCES}\label{sec:entdiv}
We begin by expanding upon our discussion of tilting, and presenting more results which enhance its applicability. We refer to \cite{csiszar2008axiomatic} for details on all these results.

The first such result is an expanded version of Proposition~\ref{thm:KLtoExp}. 

\begin{proposition}[KL $\Rightarrow$ exponential tilt]\label{thm:KLtoExpapp}
Let $\Phi$ be a random variable on a measure space $(\Omega, \mathcal{F})$ and suppose that $Q$ is a probability measure such that $\E_Q[\exp(\Phi)]<\infty$. For any probability measure $P \ll Q$ such that $\E_P[\phi]<\infty$, consider
\[
\mathcal{J}_{\mathrm{KL}}(P)=\mathbb{E}_P[\Phi]-D_{\mathrm{KL}}(P\|Q),
\]
where $D_{\mathrm{KL}}$ is the $\mathrm{KL}$ divergence defined by
\[
\qquad D_{\mathrm{KL}}(P\|Q)=\int \log\!\frac{dP}{dQ}\,dP.
\]
Then, the unique maximizer of $\mathcal{J}_{\mathrm{KL}}$ is a measure $P^\star$ whose density $h^\star:=\dfrac{dP^\star}{dQ}$ satisfies
\[
\log h^\star(x)=\Phi(x)+\mathrm{const}\quad\text{(a.e.)},
\]
which implies that
\[
\frac{dP^\star}{dQ}(x)=\frac{\exp(\Phi(x))}{\displaystyle\int \exp(\Phi)\,dQ}.
\]
In particular, $P^{\star}$ is a tilt of $Q$.
\end{proposition}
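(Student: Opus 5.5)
The plan is to rewrite the functional $\mathcal{J}_{\mathrm{KL}}$ as a single (negative) KL divergence against the candidate tilt $P^\star$. Then maximization reduces to Gibbs' inequality, $D_{\mathrm{KL}}\ge 0$ with equality iff the measures coincide.

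First, set $Z=\E_Q[\exp(\Phi)]$. This lies in $(0,\infty)$: finiteness is by hypothesis, and positivity holds since $\exp(\Phi)>0$. Define $P^\star$ by $dP^\star/dQ=\exp(\Phi)/Z$. This is a probability measure equivalent to $Q$, so any $P\ll Q$ also satisfies $P\ll P^\star$, with
\[
\frac{dP}{dP^\star}=\frac{dP}{dQ}\cdot Z\exp(-\Phi).
\]

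Next, for $P\ll Q$ with $\E_P[\Phi]$ finite, take logarithms and integrate against $P$. Since $\log\frac{dP}{dQ}=\log\frac{dP}{dP^\star}+\Phi-\log Z$, this gives
\[
D_{\mathrm{KL}}(P\|Q)=D_{\mathrm{KL}}(P\|P^\star)+\E_P[\Phi]-\log Z,
\]
and therefore
\[
\mathcal{J}_{\mathrm{KL}}(P)=\log Z-D_{\mathrm{KL}}(P\|P^\star).
\]
The one delicate point is justifying the splitting of the integral. I will handle it by working with $\E_P[\Phi]$ finite; I read the hypothesis $\E_P[\phi]<\infty$ as integrability of $\Phi$ under $P$, and if only the positive part is controlled, I will restrict to $\E_P|\Phi|<\infty$. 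The negative part of $\log\frac{dP}{dP^\star}$ is $P$-integrable, because $\int (\log h)^- h\, dP^\star\le e^{-1}$. Hence both KL terms are well-defined in $[0,\infty]$ and the identity holds in the extended sense. If $D_{\mathrm{KL}}(P\|Q)=\infty$, then $\mathcal J_{\mathrm{KL}}(P)=-\infty$, so such $P$ are trivially not maximizers.

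Finally, by Gibbs' inequality (Jensen applied to $-\log$), $D_{\mathrm{KL}}(P\|P^\star)\ge 0$, with equality iff $P=P^\star$. So $\mathcal{J}_{\mathrm{KL}}\le\log Z$, and the value $\log Z$ is attained exactly at $P^\star$. We should check that $P^\star$ is admissible, i.e. that $\E_{P^\star}[\Phi]$ is finite; this is the main obstacle. The positive part is fine, since $\Phi^+e^{\Phi}\le$ a constant times $e^{2\Phi}$ is not guaranteed in general. If $\E_{P^\star}[\Phi]$ fails to be finite, I would interpret $\mathcal J$ at $P^\star$ via the formula $\log Z - D_{\mathrm{KL}}(P^\star\|P^\star)=\log Z$, or else add the mild assumption that $\Phi e^{\Phi}$ is $Q$-integrable. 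Uniqueness then follows immediately from the equality case. The density statement $\log h^\star=\Phi-\log Z$ a.e. is the defining relation, which also identifies $P^\star$ as the exponential tilt of $Q$.
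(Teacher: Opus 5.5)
Your argument is correct, and it takes a genuinely different route from the paper's. The paper forms the Lagrangian $\int\Phi h\,dQ-\int h\log h\,dQ-\mu\bigl(\int h\,dQ-1\bigr)$ and sets its first variation to zero along perturbations $\varphi$ with $\int\varphi\,dQ=0$, concluding that $\log h-\Phi$ is a.e.\ constant. That is only a formal necessary condition. It presupposes that a maximizer exists, that $h+\varepsilon\varphi$ stays an admissible nonnegative density for small $\varepsilon$ of both signs, and that one may differentiate under the integral. It never checks that the resulting $P^\star$ actually maximizes; closing that gap would need the strict concavity of $h\mapsto-\int h\log h\,dQ$. Your identity $\mathcal J_{\mathrm{KL}}(P)=\log Z-D_{\mathrm{KL}}(P\|P^\star)$, which is the Gibbs/Donsker--Varadhan variational formula, delivers global optimality, the optimal value $\log\E_Q[e^{\Phi}]$, and uniqueness in one stroke from Gibbs' inequality, with no regularity assumptions. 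What the Lagrangian buys in exchange is a uniform template, which the paper reuses for the R\'enyi and Tsallis propositions, where no such exact decomposition is available.

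Your treatment of admissibility is also sharper than the paper's, which ignores the issue, but one sentence is garbled and needs fixing. It is the \emph{negative} part that is harmless, since $\Phi^-e^{\Phi}\le e^{-1}$. The positive part $\Phi^+e^{\Phi}$ can fail to be $Q$-integrable even though $e^{\Phi}$ is, e.g.\ when $Q(e^{\Phi}>y)\sim 1/(y\log^2 y)$. In that case $\E_{P^\star}[\Phi]=+\infty$, so $P^\star$ is not admissible. The truncations $dP_n\propto e^{\Phi}\mathbf{1}_{\{\Phi\le n\}}\,dQ$ are admissible and satisfy $\mathcal J_{\mathrm{KL}}(P_n)=\log Z+\log P^\star(\Phi\le n)\uparrow\log Z$. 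Hence the supremum is not attained, and the proposition as literally stated is false in that case. Adding the hypothesis $\E_Q[\Phi^+e^{\Phi}]<\infty$, as you propose, is the right repair. Reinterpreting $\mathcal J_{\mathrm{KL}}(P^\star)$ through the formula $\log Z-D_{\mathrm{KL}}(P^\star\|P^\star)$ would change the functional rather than prove the stated claim.
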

Before proving this result, we state similar results for the R\'enyi and Tsallis entropies.

\begin{proposition}[R\'enyi $\Rightarrow$ power/escort tilt]\label{thm:RenyiToPowerapp}
Consider $Q, (\Omega, \mathcal{F}), \Phi$ as in the previous theorem. Fix $\alpha>0$, $\alpha\neq1$. For $P \ll Q$, let $h=dP/dQ$ and define the \emph{Renyi} entropy
\[
D_{\alpha}(P\|Q)=\frac{1}{\alpha-1}\log\!\Big(\E_{Q}[h^{\alpha}]\Big).
\]
Then, any maximizer $P^\star$ of the functional $\mathcal{J}_{\alpha}(P)=\mathbb{E}_P[\Phi]-D_{\alpha}(P\|Q)$.
possesses a density $h^\star$ satisfying
\[
\frac{\alpha}{\alpha-1}\frac{(h^\star(x))^{\alpha-1}}{\E_{Q}[(h^\star)^\alpha]}
=\Phi(x)-\mu
\]
for some constant $\mu$. Equivalently, after absorbing constants,
\[
h^\star(x)\;\propto\;\big(a + b\,\Phi(x)\big)^{1/(\alpha-1)}.
\]
This corresponds to a power/escort–type tilt (which approaches the exponential tilt as $\alpha\to 1$).
\end{proposition}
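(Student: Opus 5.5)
We treat this as a constrained concave maximization over densities $h = dP/dQ$. Write $\E_P[\Phi] = \E_Q[h\Phi]$. The functional becomes
\[
\mathcal J_\alpha(h) = \E_Q[h\Phi] - \frac{1}{\alpha-1}\log \E_Q[h^\alpha],
\]
to be maximized over measurable $h \ge 0$ subject to $\E_Q[h] = 1$.

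\textbf{First variation.} Introduce a Lagrange multiplier $\mu$ for the normalization constraint. Perturb $h^\star$ in admissible directions $h^\star + t\psi$, where $\psi$ is bounded, $\E_Q[\psi]=0$, and $\psi$ is supported on a set where $h^\star$ is bounded away from $0$. Differentiating at $t=0$ gives
\[
\E_Q[\psi\Phi] - \frac{\alpha}{\alpha-1}\,\frac{\E_Q[(h^\star)^{\alpha-1}\psi]}{\E_Q[(h^\star)^\alpha]} = 0 .
\]
This holds for every such mean-zero $\psi$. Hence the function $\Phi - \frac{\alpha}{\alpha-1}(h^\star)^{\alpha-1}/\E_Q[(h^\star)^\alpha]$ must be $Q$-a.e. constant on $\{h^\star>0\}$. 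Calling that constant $\mu$ gives exactly the stated stationarity equation.

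\textbf{Solving for $h^\star$.} Set $c = \E_Q[(h^\star)^\alpha]$, which is a positive number. The stationarity equation can be rewritten as
\[
(h^\star)^{\alpha-1} = \frac{(\alpha-1)c}{\alpha}\,(\Phi-\mu).
\]
Taking the power $1/(\alpha-1)$ gives $h^\star \propto (a+b\Phi)^{1/(\alpha-1)}$ with $a = -\mu(\alpha-1)c/\alpha$ and $b = (\alpha-1)c/\alpha$. The constants are then fixed by the normalization $\E_Q[h^\star]=1$.

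\textbf{Limit $\alpha\to1$.} Write $\alpha - 1 = \epsilon$. Then
\[
\bigl(1+\epsilon\,\Phi'\bigr)^{1/\epsilon} \to e^{\Phi'},
\]
after suitable rescaling of the constants. This recovers the exponential tilt of Proposition~\ref{thm:KLtoExpapp}.

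\textbf{Main obstacle.} The hard step is justifying the variational argument rigorously.

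First, we must show the Gateaux derivative exists. This requires $\E_Q[(h^\star)^\alpha] < \infty$ and integrability of $(h^\star)^{\alpha-1}\psi$. Restricting $\psi$ to bounded functions on sets where $h^\star \in [\delta, 1/\delta]$ handles this. A density argument over such sets then extends the conclusion to all of $\{h^\star > 0\}$.

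Second, the set $\{h^\star = 0\}$ must be dealt with. There we only get a one-sided inequality, since perturbations can only increase $h^\star$. This is where the positive-part form $(a+b\Phi)_+^{1/(\alpha-1)}$ naturally appears for $\alpha>1$.

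The statement only asserts a necessary condition for any maximizer. So we need not prove existence or uniqueness: concavity (for $\alpha<1$) or its failure does not enter.
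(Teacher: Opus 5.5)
Your proposal follows the paper's argument (first variation of the Lagrangian along mean-zero perturbations, bracket a.e.\ constant, then solve algebraically). Your restriction to perturbations supported where $h^\star$ is bounded away from $0$, which yields only the one-sided condition $\Phi\le\mu$ on $\{h^\star=0\}$, is a genuine correction that the paper's unrestricted perturbations miss: for $\alpha>1$ the identity as stated can fail there (e.g.\ $\alpha=2$, $Q$ uniform on two points, $\Phi=(0,L)$ with $L>2$ has maximizer $h^\star=(0,2)$ with $\mu=L-2$, so the identity fails at the first point), and the positive-part form $(a+b\Phi)_+^{1/(\alpha-1)}$ is the correct conclusion. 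For $\alpha<1$ you still need one step to recover the statement, which you do not spell out: shift mass $t$ onto a positive-measure subset of $\{h^\star=0\}$ on which $\Phi$ is bounded; this increases $\E_Q[h^\alpha]$, hence $-D_\alpha$, by order $t^\alpha\gg t$ while changing $\E_P[\Phi]$ only by $O(t)$, so $\{h^\star=0\}$ is $Q$-null for a maximizer and the identity then holds $Q$-a.e.\ exactly as stated.
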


Finally, we have another result for the Tsallis entropy.

\begin{proposition}[Tsallis $\Rightarrow$ $q$--exponential tilt]\label{thm:TsallisToqExpapp}
Fix $Q, (\Omega, \mathcal{F}), \Phi$ as in the previous theorems. Let $q>0$, $q\neq 1$ and $P \ll Q$ be such that $\E_P[\Phi]<\infty$. The Tsallis relative entropy is defined by 
\[
D^{\mathrm{Ts}}_{q}(P\|Q)=\frac{1}{q-1}\Big(1-\E_Q[h^q]\Big),\qquad h=\frac{dP}{dQ}.
\]
Any maximizer of the functional $\mathcal{J}_{\mathrm{Ts}}(P)=\mathbb{E}_P[\Phi]-D^{\mathrm{Ts}}_{q}(P\|Q)$.
possesses a density $h^*$ satisfying 
\[
\frac{q}{q-1}h^\star(x)^{\,q-1}=\mu-\Phi(x)
\]
for some constant $\mu$, and hence
\[
h^\star(x)\;\propto\;\big[1+(1-q)\,c\,\Phi(x)\big]^{1/(1-q)}=\exp_q(c\,\Phi(x)),
\]
where $\exp_q$ is the $q$--exponential function (with the usual normalization).
\end{proposition}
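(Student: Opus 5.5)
We prove Proposition~\ref{thm:TsallisToqExpapp} by a first-order variational argument in the density $h = dP/dQ$. Maximizing $\mathcal{J}_{\mathrm{Ts}}$ over $P \ll Q$ is the same as maximizing
\[
\mathcal{J}_{\mathrm{Ts}}(h) = \E_Q[h\Phi] - \frac{1}{q-1}\Big(1 - \E_Q[h^q]\Big)
\]
over the convex set $\mathcal{H} = \{h \geq 0 : \E_Q[h] = 1\}$. We then introduce a Lagrange multiplier $\mu$ for the normalization constraint, exactly as in the KL case of Proposition~\ref{thm:KLtoExpapp}. The only change is that the pointwise integrand $h \mapsto h\log h$ is replaced by $h \mapsto h^q/(q-1)$, up to sign conventions.

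\textbf{Step 1: first variation.} Let $h^\star$ be a maximizer. Take any bounded $\phi$ with $\E_Q[\phi] = 0$ and $h^\star + s\phi \geq 0$ for small $|s|$; for instance, $\phi$ supported where $h^\star \geq \delta > 0$. The map $s \mapsto \mathcal{J}_{\mathrm{Ts}}(h^\star + s\phi)$ then has a critical point at $s = 0$. Differentiating under the integral is justified by dominated convergence, since $\phi$ is bounded and $h^\star$ is bounded below on the support of $\phi$. This gives
\[
\E_Q\Big[\phi\Big(\Phi + \tfrac{q}{q-1}(h^\star)^{q-1}\Big)\Big] = 0
\]
for all such $\phi$. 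Since this holds for every mean-zero test function, the bracket must equal a constant $\mu$ $Q$-a.e. on $\{h^\star > 0\}$. Hence
\[
\tfrac{q}{q-1}(h^\star)^{q-1} = \mu - \Phi
\]
on that set, possibly with the sign convention of the statement depending on how the penalty is oriented. Solving for $h^\star$ gives $h^\star \propto (\mu - \Phi)^{1/(q-1)}$. Rescaling, and writing $1/(q-1) = -1/(1-q)$, puts this in the form $[1 + (1-q)c\Phi]^{1/(1-q)}$ for a suitable constant $c$, which is $\exp_q(c\Phi)$. The normalization $\E_Q[h^\star] = 1$ then fixes $\mu$.

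\textbf{Step 2: the main obstacle, the boundary $\{h^\star = 0\}$.} Perturbations there are only one-sided, so the argument yields an inequality rather than an equality. One-sided perturbations $\phi \geq 0$ on $\{h^\star = 0\}$, compensated elsewhere to keep $\E_Q[\phi] = 0$, give a KKT-type complementary condition. I would handle this by splitting into cases on $q$.

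For $q < 1$, the derivative of $h^q$ blows up at $0$. Moving mass into $\{h^\star = 0\}$ therefore strictly increases the objective, so this set must be $Q$-null and the formula holds a.e.

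For $q > 1$, the correct conclusion is the truncated form $h^\star = [\,\cdot\,]_+^{1/(q-1)}$. This is the standard convention for $\exp_q$, namely its positive part, so I would state the result as "with the usual normalization".

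\textbf{Step 3: integrability.} I would also check that $\E_P[\Phi] < \infty$ and $\E_Q[h^q] < \infty$ along the perturbations, so that $\mathcal{J}_{\mathrm{Ts}}$ is finite near $h^\star$. This is routine for bounded $\phi$ supported where $h^\star \in [\delta, 1/\delta]$, and such $\phi$ are enough to identify the constant a.e. on $\{h^\star > 0\}$.

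Finally, $h \mapsto \E_Q[h^q]/(q-1)$ is convex for $q > 0$, $q \neq 1$, so $\mathcal{J}_{\mathrm{Ts}}$ is concave on $\mathcal{H}$. The stationarity condition is therefore also sufficient, although the statement only asks about maximizers.
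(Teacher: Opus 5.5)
Your Step 1 is essentially the paper's computation: the same Lagrangian with the term $+\frac{1}{q-1}\int h^q\,dQ$, and the same conclusion that $\Phi+\frac{q}{q-1}(h^\star)^{q-1}$ is constant. What you build on it, however, contains two genuine errors, and both point to defects that the paper's sketch also hides.

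\textbf{The closing rewrite is false.} The stationarity relation gives $h^\star=[\tfrac{q-1}{q}(\mu-\Phi)]^{1/(q-1)}=(A+B\Phi)^{-1/(1-q)}$ with $B=-\tfrac{q-1}{q}\neq0$. Substituting $1/(q-1)=-1/(1-q)$ yields the exponent $-1/(1-q)$, not $1/(1-q)$. So (for $A>0$) you obtain $1/\exp_q(c\Phi)=\exp_{2-q}(-c\Phi)$, the reciprocal of the claimed form. No choice of constants repairs this. The identity $(A+B\Phi)^{1/(q-1)}=K(A'+B'\Phi)^{1/(1-q)}$ forces $(A+B\Phi)(A'+B'\Phi)\equiv K^{q-1}>0$. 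That is impossible for $B\neq0$ as soon as $\Phi$ takes three distinct values on $\{h^\star>0\}$. So the $\exp_q$ display does not follow from the stationarity relation, and the paper's ``absorbing constants'' glosses over exactly this.

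\textbf{Your concavity claim is a sign slip.} By your own display, $\mathcal{J}_{\mathrm{Ts}}(h)=\E_Q[h\Phi]+\frac{1}{q-1}\E_Q[h^q]-\frac{1}{q-1}$. Since $\frac{1}{q-1}h^q$ is convex for every $q>0$, $q\neq1$, the functional is strictly \emph{convex}. Concretely, the second derivative of $s\mapsto\mathcal{J}_{\mathrm{Ts}}(h^\star+s\phi)$ at $s=0$ is $q\,\E_Q[\phi^2(h^\star)^{q-2}]>0$ for each of your test directions. Consequences:
\begin{itemize}
\item $s=0$ is never a maximum along a nonzero two-sided direction. Your stationarity relation therefore characterizes a minimizer, not a maximizer.
\item A genuine maximizer admits no admissible $\phi$ at all. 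This forces $h^\star=\1_a/Q(a)$ for an atom $a$ of $Q$, where the relation holds only trivially.
\item For non-atomic $Q$ there is no maximizer at all. For $q>1$ the functional is even unbounded above, since $h=\1_A/Q(A)$ gives $\E_Q[h^q]=Q(A)^{1-q}\to\infty$ as $Q(A)\to0$.
\end{itemize}

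Step 2 runs backwards for the same reason. For $q<1$ the coefficient $\frac{1}{q-1}$ is negative, so the infinite slope of $h^q$ at $0$ makes moving mass into $\{h^\star=0\}$ \emph{decrease} the objective. Likewise, the truncated $[\,\cdot\,]_+$ form for $q>1$ is the KKT solution of the concave problem, not of this one.

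\textbf{How the statement can be rescued.} Use the standard Tsallis divergence $\frac{1}{q-1}(\E_Q[h^q]-1)\geq0$. Then $\mathcal{J}_{\mathrm{Ts}}$ is concave, and the first-order condition reads $\frac{q}{q-1}(h^\star)^{q-1}=\Phi-\mu$ on $\{h^\star>0\}$. Your boundary analysis, integrability check and sufficiency remark all become correct. The maximizer is $h^\star\propto[a+b\,\Phi]_+^{1/(q-1)}$, a (shifted) $(2-q)$-exponential, which is still not of the form $\exp_q(c\Phi)$.
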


We provide sketches of these propositions in the order in which they were stated. Indeed, each one reduces to an appropriate Lagrangian optimization.

\begin{proof}[Proof to Proposition~\ref{thm:KLtoExp}]
Write $h=dP/dQ$. Form the Lagrangian with multiplier $\mu$ for $\int h\,dQ=1$:
\[
\mathcal{L}(h,\mu)=\int \Phi\,h\,dQ-\int h\log h\,dQ-\mu\Big(\int h\,dQ-1\Big).
\]
For any perturbation $\varphi$ with $\int\varphi\,dQ=0$ the first variation vanishes:
\[
0=\frac{d}{d\varepsilon}\Big|_{\varepsilon=0}\mathcal{L}(h+\varepsilon\varphi,\mu)
=\int \varphi(x)\big(\Phi(x)-(\log h(x)+1)-\mu\big)\,dQ(x).
\]
Since this holds for all admissible $\varphi$ the bracket is a.e.\ constant, so
$\log h(x)=\Phi(x)-C$. Exponentiating and normalizing yields the displayed exponential form.
\end{proof}
\begin{proof}[Proof to Proposition~\ref{thm:RenyiToPowerapp}]
Put $A(h):=\int h^\alpha\,dQ$. The Lagrangian with multiplier $\mu$ is
\[
\mathcal{L}(h,\mu)=\int \Phi\,h\,dQ-\frac{1}{\alpha-1}\log A(h)-\mu\Big(\int h\,dQ-1\Big).
\]
For perturbation $\varphi$ with $\int\varphi\,dQ=0$,
\[
0=\frac{d}{d\varepsilon}\Big|_{\varepsilon=0}\mathcal{L}(h+\varepsilon\varphi,\mu)
=\int \varphi(x)\Big(\Phi(x)-\frac{\alpha}{\alpha-1}\frac{h(x)^{\alpha-1}}{A(h)}-\mu\Big)\,dQ(x).
\]
Thus the bracket is a.e.\ constant. Rearranging gives the algebraic relation
displayed above, and solving for $h$ (absorbing multiplicative/additive constants
into $a,b$ and the normalizer) yields the power–law tilt.
\end{proof}
\begin{proof}[Proof to Proposition~\ref{thm:TsallisToqExpapp}]
Up to an additive constant one may write the objective as
\(\int \Phi\,h\,dQ +\tfrac{1}{q-1}\int h^q\,dQ\). Form the Lagrangian
\[
\mathcal{L}(h,\mu)=\int \Phi\,h\,dQ +\frac{1}{q-1}\int h^q\,dQ -\mu\Big(\int h\,dQ-1\Big).
\]
The first variation for perturbations $\varphi$ with $\int\varphi\,dQ=0$ yields
\[
0=\int \varphi(x)\Big(\Phi(x)+\frac{q}{q-1}h(x)^{q-1}-\mu\Big)\,dQ(x).
\]
Thus the bracket is a.e.\ constant, giving the algebraic relation above; solving
and absorbing constants gives the $q$–exponential representation.
\end{proof}

In the next section, we prove the minimaxity of the plug-in estimator. 

\subsection{ON THE MINIMAXITY OF THE PLUG-IN ESTIMATOR}\label{app:sec-minimax}


In this section, we prove Theorem~\ref{thm-minimax}. As remarked earlier, we will modify the proof of the main result in \cite{dvoretzky1956asymptotic}. We start by mentioning some important lemmata that will be used critically in the proofs, and refer the reader to \cite[Chapter 8]{Vaart_1998} for more discussion on these. 
\begin{lemma}[Anderson's Lemma]
    For any bowl-shaped loss function $\ell$ on $\mathbb{R}^k$, every probability measure $M$ on $\mathbb{R}^k$, and every covariance matrix $\Sigma$
    $$
    \int \ell dN(0, \Sigma) \leq \int \ell d[N(0, \Sigma) * M].
    $$
\end{lemma}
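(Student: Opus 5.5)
The approach is to reduce the statement to Anderson's inequality for sets, which follows from the log-concavity of Gaussian measures, and then integrate. Recall that $\ell$ is bowl-shaped when each sublevel set $\{\ell \le c\}$ is convex and symmetric ($x \in C \Rightarrow -x \in C$). Writing $N = N(0,\Sigma)$, Fubini gives
$$
\int \ell \, d[N * M] = \int \left( \int \ell(y + x)\, dN(y)\right) dM(x).
$$
It therefore suffices to show $\int \ell(y+x)\,dN(y) \ge \int \ell(y)\,dN(y)$ for every fixed $x\in\R^k$, and then integrate against $M$. We may assume $\ell \ge 0$, since the loss functions of interest are nonnegative. By the layer-cake formula $\ell(y) = \int_0^\infty \1\{\ell(y) > c\}\,dc$, it is enough to show, for every $c$, that
$$
N(C_c - x) \le N(C_c), \qquad C_c = \{\ell \le c\}.
$$
This reduces the lemma to the following claim: for every convex symmetric set $C$ and every $x$, we have $N(C + x) \le N(C)$.

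To prove the claim, set $h(t) = N(C + tx)$ for $t\in\R$. There are two steps.

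\textbf{Symmetry.} Since $C = -C$ and $N$ is symmetric, $h(-t) = N(-C - tx) = N(-(C+tx)) = N(C+tx) = h(t)$.

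\textbf{Log-concavity.} For $\lambda\in[0,1]$, convexity of $C$ gives
$$
C + (\lambda t_1 + (1-\lambda)t_2)x \supseteq \lambda (C + t_1 x) + (1-\lambda)(C + t_2 x).
$$
The Gaussian measure is log-concave. For nondegenerate $\Sigma$ this follows from Prékopa--Leindler, because the density is $\propto \exp(-\tfrac12 y^T\Sigma^{-1}y)$ and so is log-concave. For degenerate $\Sigma$, $N$ is a nondegenerate Gaussian on the subspace $\operatorname{range}(\Sigma)$, which is still a log-concave measure on $\R^k$ by Borell's characterization. Hence $N(\lambda A + (1-\lambda)B) \ge N(A)^\lambda N(B)^{1-\lambda}$, and consequently
$$
h(\lambda t_1 + (1-\lambda)t_2) \ge h(t_1)^\lambda h(t_2)^{1-\lambda}.
$$

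Combining the two steps, $h(0) = h\big(\tfrac12 \cdot 1 + \tfrac12\cdot(-1)\big) \ge h(1)^{1/2} h(-1)^{1/2} = h(1)$, which is the claim. This argument avoids the usual Brunn--Minkowski slicing over level sets of the density. Plugging the claim into the layer-cake formula and then Fubini gives the lemma.

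I expect the main obstacle to be the measure-theoretic details rather than the core inequality. First, we need $C_c$ to be measurable so that the layer-cake formula and Fubini apply. This holds because convex sets in $\R^k$ are Lebesgue measurable, and their boundary is Lebesgue-null; under a degenerate Gaussian, however, one should use the fact that $C_c\cap(\operatorname{range}\Sigma + z)$ is convex within that affine subspace. Second, the log-concavity inequality must be justified for sets $\lambda A + (1-\lambda)B$ that may fail to be Borel; one can handle this with inner measure, or sidestep it using the containment above, since $C + (\lambda t_1 + (1-\lambda)t_2)x$ is itself convex and hence measurable. The first thing I would try is to cite Prékopa--Leindler in its functional form on $\operatorname{range}(\Sigma)$, applied to $f = \1_{C+t_1x}\varphi$, $g = \1_{C+t_2x}\varphi$ and $m = \1_{C + (\lambda t_1+(1-\lambda)t_2)x}\varphi$, where $\varphi$ is the Gaussian density. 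The functional form involves only measurable functions, which avoids the Minkowski-sum measurability issue entirely.
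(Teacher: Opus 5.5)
The paper gives no proof of this lemma. It imports it from van der Vaart (Chapter~8) as a black box for the minimax argument, so your self-contained argument is a genuinely different route, and it is correct. Your reduction is the standard one: Tonelli to pass to $M=\delta_x$, then the layer-cake formula to pass to the sets $C_c=\{\ell\le c\}$. The real difference from the classical proof lies in the set inequality $N(C+x)\le N(C)$.

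Anderson's original argument slices the density into its superlevel sets $\{\varphi\ge u\}$ and applies Brunn--Minkowski to their intersections with $C+tx$. That yields monotonicity of $t\mapsto\int_{C+tx}\varphi$ for \emph{every} symmetric density with convex superlevel sets, log-concave or not. You instead use log-concavity of the Gaussian measure:
$N(C)=N\bigl(\tfrac12(C+x)+\tfrac12(C-x)\bigr)\ge N(C+x)^{1/2}N(C-x)^{1/2}=N(C+x)$.
This is shorter and handles singular $\Sigma$ uniformly (Pr\'ekopa--Leindler on $\operatorname{range}\Sigma$). It even shows that $t\mapsto N(C+tx)$ is even and log-concave, hence nonincreasing on $[0,\infty)$. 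What it gives up is the extension to non-log-concave unimodal laws, which this lemma does not need.

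Two small tightenings. First, instead of appealing to the losses ``of interest,'' note that every nonempty convex symmetric set contains $0$. Hence $\ell(0)=\min\ell$, and you may replace $\ell$ by $\ell-\ell(0)$.

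Second, settle measurability simply by assuming $\ell$ is Borel, which the statement tacitly requires. Take $k\ge 2$, $\Sigma=0$, $M$ uniform on the unit sphere, and $C$ the open unit ball together with a non-measurable symmetric subset of the sphere. Then the indicator of $\mathbb{R}^k\setminus C$ is bowl-shaped but has no integral against $N*M=M$. So Lebesgue measurability of convex sets, or of their affine sections, cannot rescue the outer integral in general. For Borel $\ell$, all sets involved are Borel, and $\lambda(C+t_1x)+(1-\lambda)(C+t_2x)$ equals $C+(\lambda t_1+(1-\lambda)t_2)x$ exactly, so no inner-measure issue arises.
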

\begin{lemma}[Local Asymptotic Minimaxity]\label{app:LAM}
    Let the experiment $(P_\eta : \eta \in \Theta)$ be differentiable in quadratic mean at $\eta$ with non-singular Fisher information matrix $I_\eta$. Let $\Psi$ be differentiable at $\eta$. Let $T_n$ be any estimator sequence in the experiments $(P_\eta^n : \eta \in \mathbb{R}^k)$. Then for any bowl-shaped loss function $\ell$
    \begin{multline*}
    \sup_I \liminf_{n \to \infty} \sup_{h \in I} \E_{\eta + h/\sqrt{n}}\ell \left(\sqrt{n}\left(T_n - \Psi\left(\eta + \frac{h}{\sqrt{n}}\right)\right)\right) 
    \\ \geq \int \ell dN(0,\dot{\Psi_\eta}I_\eta^{-1}\dot{\Psi_\eta}^T).
    \end{multline*}
\end{lemma}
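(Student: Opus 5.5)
This is the classical local asymptotic minimax theorem (H\'ajek--Le Cam, \cite[Theorem~8.11]{Vaart_1998}). I would follow the standard route: local asymptotic normality, the asymptotic representation theorem, a Bayesian lower bound in the limiting Gaussian shift experiment, and Anderson's Lemma.

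\textbf{Step 1: reductions.} By monotone convergence we may replace $\ell$ with the bounded bowl-shaped losses $\ell\wedge m$ and let $m\to\infty$ at the end, so assume $\ell$ is bounded. If the left-hand side is infinite there is nothing to prove. Otherwise, for each finite $I$ we may pass to a subsequence along which the $\liminf$ is attained. Since $\ell$ is bounded, we may also assume that the laws of $\sqrt n(T_n-\Psi(\eta+h/\sqrt n))$ under $\eta+h/\sqrt n$ converge weakly for every $h$ in a countable dense set. This is done by a diagonal argument, compactifying $\bar{\mathbb R}^k$ if tightness fails; mass escaping to infinity only increases the risk of a bowl-shaped $\ell$ that is not constant.

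\textbf{Step 2: LAN and the representation theorem.} Differentiability in quadratic mean gives the LAN expansion
\[
\log \frac{dP^n_{\eta+h/\sqrt n}}{dP^n_\eta}=h^T\Delta_{n,\eta}-\tfrac12 h^TI_\eta h+o_{P}(1),\qquad \Delta_{n,\eta}\rightsquigarrow N(0,I_\eta).
\]
Hence the local experiments converge to the Gaussian shift $\{N(h,I_\eta^{-1}):h\in\mathbb R^k\}$. Along the subsequence, take joint limits of $(\sqrt n(T_n-\Psi(\eta)),\Delta_{n,\eta})$ under $P_\eta$ and apply Le Cam's third lemma. Then the asymptotic representation theorem \cite[Theorem~7.10]{Vaart_1998} gives a randomized statistic $T(X,U)$, with $X\sim N(h,I_\eta^{-1})$ and $U$ independent uniform, such that the limit law of $\sqrt n(T_n-\Psi(\eta+h/\sqrt n))$ under $\eta+h/\sqrt n$ is the law of $T(X,U)-\dot\Psi_\eta h$ for every $h$. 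Here the differentiability of $\Psi$ is used to replace $\sqrt n(\Psi(\eta+h/\sqrt n)-\Psi(\eta))$ by $\dot\Psi_\eta h$. By the portmanteau theorem (using lower semicontinuity of $\ell$, or approximating $\ell$ from below), the left-hand side is at least
\[
\sup_{h}\E_h\,\ell\bigl(T(X,U)-\dot\Psi_\eta h\bigr).
\]
The supremum over finite $I$ lets us take $h$ over any finite set, and hence over all of $\mathbb R^k$.

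\textbf{Step 3: minimax risk in the Gaussian shift.} Bound the supremum below by the Bayes risk under the prior $h\sim N(0,\Lambda)$. In the Gaussian model the posterior of $\dot\Psi_\eta h$ given $X$ is normal with covariance
\[
\Sigma_\Lambda=\dot\Psi_\eta\bigl(I_\eta+\Lambda^{-1}\bigr)^{-1}\dot\Psi_\eta^T,
\]
and this covariance does not depend on $X$. Conditionally on $X$, the error $T(X,U)-\dot\Psi_\eta h$ is therefore a shifted $N(0,\Sigma_\Lambda)$ convolved with the law of $U$. Anderson's Lemma then gives Bayes risk $\ge \int\ell\,dN(0,\Sigma_\Lambda)$. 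Letting $\Lambda=\lambda I\to\infty$ gives $\Sigma_\Lambda\to\dot\Psi_\eta I_\eta^{-1}\dot\Psi_\eta^T$, and weak convergence of the normals together with boundedness of $\ell$ yields the claim.

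\textbf{Main obstacle.} I expect the hardest step to be the bookkeeping in Step~2: the order of $\sup_I$, $\liminf_n$ and the choice of subsequence, and tightness of the estimator sequence. The Gaussian prior is not supported on a finite set, so it must be truncated to a compact set and discretized. The error this introduces is controlled by the boundedness of $\ell$ and continuity in $h$ of the limiting risk, which follows from the representation.
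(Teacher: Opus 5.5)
The paper gives no proof of this lemma; it quotes \cite[Theorem~8.11]{Vaart_1998}. Your architecture (LAN, the representation theorem, Bayes risk under $N(0,\Lambda)$ priors, Anderson's Lemma) is the textbook one, but two steps fail as written.

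\textbf{Removing tightness.} You assert that mass escaping to infinity only increases the risk of a non-constant bowl-shaped loss. This is false as soon as $\Psi$ is at least two-dimensional. Take $\ell(x)=\mathbf{1}\{|x_1|>r\}$ on $\mathbb{R}^2$, which is bowl-shaped. For $T_n=(\hat T_{n,1},n)$ the escaping second coordinate does not change the risk at all. In the one-point compactification the lower semicontinuous extension of $\ell$ at $\infty$ is $\liminf_{\|x\|\to\infty}\ell(x)=0$, so the portmanteau lower bound degenerates to $0$. Your reduction replaces infinite values of the limit statistic by a finite value without increasing the loss. That needs $\liminf_{\|x\|\to\infty}\ell(x)=\sup\ell$, i.e.\ bounded sublevel sets $\{\ell\le c\}$ for every $c<\sup\ell$. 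This holds for the paper's loss $\mathbf{1}[\|X\|_\infty>r]$, but not for every bowl-shaped $\ell$. The textbook proof instead assumes that $\sqrt n(T_n-\Psi(\eta))$ is uniformly tight under $\eta$ and that $\ell$ is lower semicontinuous. The fully general case needs a more delicate argument.

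\textbf{Passing from finite $I$ to all $h$.} Your limit statistic $T$ is built along a subsequence chosen for a particular $I$, so it depends on $I$. Hence ``hence over all of $\mathbb{R}^k$'' does not follow. Your patch discretizes the prior and uses continuity in $h$ of $\rho_T(h)=\E_h\ell(T-\dot\Psi_\eta h)$, but that property is false. For $T\equiv t_0$ and $\ell=\mathbf{1}\{\|x\|>r\}$, $\rho_T(h)=\mathbf{1}\{\|t_0-\dot\Psi_\eta h\|>r\}$ is discontinuous. You would also need such control uniformly over the $I$-dependent limits.

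The fix is to build one subsequence for all $I$ at once:
\begin{enumerate}
\item Let $L<\infty$ be the left-hand side and $R_n(h)$ the risk at $\eta+h/\sqrt n$.
\item Choose finite sets $I_1\subset I_2\subset\cdots$ increasing to a countable dense set $D$, and indices $n_1<n_2<\cdots$ with $\sup_{h\in I_m}R_{n_m}(h)\le L+1/m$. Then $\limsup_m R_{n_m}(h)\le L$ for every $h\in D$.
\item Extract the representation along a further subsequence. This gives a single $T$ with $\rho_T\le L$ on $D$.
\item Since $\ell$ is lower semicontinuous and the $N(h,I_\eta^{-1})$ density is continuous in $h$, Fatou's lemma shows $\rho_T$ is lower semicontinuous. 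Hence $\rho_T\le L$ on all of $\mathbb{R}^k$.
\end{enumerate}
Step~3 then applies with the untruncated prior $N(0,\Lambda)$, and no discretization is needed. In its last line, boundedness of $\ell$ alone is not enough. Use lower semicontinuity of $\ell$, or total-variation convergence of $N(0,\Sigma_\Lambda)$ on the common support $\operatorname{range}\dot\Psi_\eta$.
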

We are now ready to begin the proof.
\begin{proof}[Proof of Theorem~\ref{thm-minimax}]
    Fix $k+1$ points $x_0, x_1, x_2, \cdots, x_k$. Restrict to CDFs supported on just these points. That is, 
    \[
    \mathcal{F}_k = \{F_\pi \text{ CDF } | \ \pi \in \Delta^{k} \subset \mathbb{R}^{k+1}, \ dF_\pi(\{x_i\}) = \pi_i\}
    \]
    It is easy to see that for $\pi \in \operatorname{Int}(\Delta^k)$, the experiment $(F_\pi)$ is differentiable in quadratic mean, with nonsingular information matrix $I_\pi$. This means $F_\pi$ admits a Local Asymptotic Normality (LAN) expansion. Now, consider the vector
    $$\Psi_{\theta}(\pi) = (F_{\pi, \theta}(x_0), F_{\pi, \theta}(x_1), \cdots, F_{\pi, \theta}(x_k))$$
    Restrict to $\mathcal{F}_k \cap \mathcal{F}$ instead. Then, for $\pi$ in the $\operatorname{Int}(\Delta^k)$ with $F_\pi$ in the restricted class, 
    we have that $\Psi_{\theta}(\cdot)$ is differentiable at $\pi$. 
    Consider the bowl-shaped loss function $\ell : \mathbb{R}^{k+1} \to \mathbb{R}$
    $$
    \ell(X) = 1[\|X\|_{\infty} > r]
    $$
    Now, since
    $$
    \sqrt{n} \sup_x \left|T_n(x) - \Psi_\theta(\pi)(x)\right| \geq \sqrt{n} \max_{i = 0}^k \left|T_n(x_i) - \Psi_\theta(\pi)(x_i)\right|
    $$
    Restricting both $T_n$ and $\Psi_\theta$ to the $k+1$ points and using Lemma~\ref{app:LAM}, we get 

    \begin{multline*}
    \sup_I \liminf_{n \to \infty} \sup_{h \in I} P_{F_{\pi + \frac{h}{\sqrt{n}}}}
    \left(\sqrt{n}\left\|T_n - \Psi_\theta\left(\pi + \frac{h}{\sqrt{n}}\right)\right\|_\infty > r\right) 
    \\ \geq P\left(\|N(0,\dot{\Psi_\theta(\pi)}I_\pi^{-1}\dot{\Psi_\theta(\pi)}^T)\|_\infty > r\right).
    \end{multline*}
    
    Drawing $n$ samples from $F_\pi$, let $N_i$ be the count of $x_i$. Let $\hat{\pi} = (N_i/n)_i$. Note that 
    $$ \sqrt{n} [\Psi_\theta(\hat{\pi}) - \Psi_\theta(\pi)]= \sqrt{n}[F_{n, \theta} - F_{\pi, \theta}] \to N(0, \dot{\Psi_\theta(\pi)} I_\pi^{-1}\dot{\Psi_\theta(\pi)}^T)$$

    Then, 
    $$P\left(\|N(0,\dot{\Psi_\theta(\pi)}I_\pi^{-1}\dot{\Psi_\theta(\pi)}^T)\|_\infty > r\right) = \lim_{n \to \infty} P_{F_\pi}(\sqrt{n}\|F_{n, \theta} - F_{\pi, \theta}\|_\infty > r)$$

    Consider the numerator event $A_n$. Since the class $\mathcal{F}$ is compact with respect to the weak convergence topology, the $\lim_n$ and $\sup_{F \in \mathcal{F}}$ can be exchanged. Then, pick an $F^{*} \in \mathcal{F}$ such that 
    $$
    \lim_n P_{F^{*}}\{A_n\} > \lim_n \sup_{F \in \mathcal{F}} P_F\{A_n\} - \varepsilon
    $$

    Grid the class of CDFs using discrete CDFs as discussed above. Consider the following sequence indexed by $k$. Consider sequentially finer griddings such that there exist $\{\pi_k\}$ with $F_{\pi_k} \to F^*$ (as weak convergence). 
    From \cite{iyer2025fundamental}, we know that for $H \in \mathcal{F}$
    $$
    \sqrt{n} (H_{n, \theta} - H_n) \to \mathcal{G}_\theta(H)
    $$
    where $\mathcal{G}_\theta$ is a gaussian random field with the covariance functional 
    $$
    \Cov(\mathcal{G}_\theta(x_1), \mathcal{G}_\theta(x_2)) = \frac{1}{y_3^2}\left(v_{12} - v_{23}\frac{y_1}{y_3} - v_{13}\frac{y_2}{y_3} + v_{33}\frac{y_1y_2}{y_3^2}\right)
    $$
    as defined in \cite{iyer2025fundamental}. 
    Clearly, for $F_{\pi_k}, F^* \in \mathcal{F}$, we get that the  covariance functional converges uniformly as $k \to \infty$, as $\mathcal{F}$ has CDFs concentrated on $\{\|x\| \leq B\}$. 

    Since the function $l^*(\mathcal{G}_\theta(H)) = \sup_x \|\mathcal{G}_\theta(H)(x)\|$ is continuous (\cite{iyer2025fundamental}), we are done by Continuous Mapping Theorem [\cite{Vaart_1998}, Theorem 1.3.6]. 
    
\end{proof}

In the next section, we prove all the results in Section~\ref{wacre}.

\subsection{ON THE WASSERSTEIN ACCURACY OF THE PLUG-IN ESTIMATOR}\label{proofThm45}

In this section, we prove Proposition~\ref{fgappltomt}, and Theorems~\ref{thm:WassWtd} and ~\ref{thm:WassWtd2}, in the given order. Recall $M(\theta)$ from \eqref{mtheta}, and $w_n$ from \eqref{emp}.

As noted in the main body, one needs to understand how far apart are the values assigned by the measures $\mu_{\theta}$ and $\mu_{N,\theta}$ to an arbitrary Borel set $A$ (in expectation). This was captured by Lemma~\ref{lemma:WassWtdEBC}, whose proof we begin with. 

\begin{proof}[Proof of Lemma~\ref{lemma:WassWtdEBC}]
Redefine the probability measure as a fraction of non-normalized measures and their mass on $\mathbb{R}^d$. That is, let $\mu_{\theta} = \tilde{\mu}_{\theta}/M(\theta), \mu_{n, \theta} = \tilde{\mu}_{n, \theta}/w_n^*$ where we recall that $M(\theta) = \E[w]$ and $w_n^* = \frac 1n \sum_i w_i$. Now,

\begin{align*}|\mu_{\theta} - \mu_{n, \theta}|(A)  =& \left|\frac{\tilde{\mu}_\theta}{M(\theta)} - \frac{\tilde{\mu}_{n,\theta}}{w^*_n}\right|(A) \\ \leq & \frac{1}{w^*_n} \left|\tilde{\mu}_{\theta} - \tilde{\mu}_{n, \theta}\right|(A) + \tilde{\mu_\theta} \left|\frac{w^*_n - M(\theta)}{M(\theta)w^*_n}\right|(A) \\
 \leq& \frac 1{w^*_n} \left[|\tilde{\mu}_{\theta} - \tilde{\mu}_{n, \theta}|(A) + \tilde{\mu_\theta} \left|\frac{w^*_n - M(\theta)}{M(\theta)}\right|(A)\right].\end{align*}

Taking the expectation of both sides and applying the Cauchy-Schwarz inequality to the right hand side,
\begin{align}
    &\E|\mu_\theta - \mu_{n, \theta}|(A)   \\ \leq  & \sqrt{\E\left[\frac{1}{(w^*_n)^2}\right]}\left[\sqrt{\E|\tilde{\mu}_{\theta} - \tilde{\mu}_{n, \theta}|^2(A)} + \frac{\tilde{\mu}_\theta}{M(\theta)}\sqrt{\E|w^*_n - M(\theta)|^2}\right] \nonumber\\
     \leq & \sqrt{\E\left[
        \frac{1}{(w^*_n)^2}\right]}\left[\sqrt{\Var(\tilde{\mu}_{n, \theta}(A))} 
        + 
        \frac{\tilde{\mu}_\theta}{M(\theta)}\sqrt{\Var(w^*_n)}
        \right] \nonumber\\ 
    \leq & 
    \sqrt{\E\left[
        \frac{1}{(w^*_n)^2}\right]}\left[\frac{1}{\sqrt{n}}\sqrt{M(2\theta,A)} 
        + 
        \frac{\tilde{\mu}_\theta}{M(\theta)}\frac{1}{\sqrt{n}}\sqrt{M(2\theta)}
        \right]\label{ste1},
\end{align}

where we recall the definition of $M(\theta,A)$ from \eqref{mtheta}. Here, in the second line, we noted that for any centered random variable $\rx$ we have $(\E|\rx|)^2 \leq \Var(\rx)$, and used this observation with $\rx =\tilde{\mu}_{n, \theta}(A)$ and $\rx = w^*_n$ respectively. In the third line, we used the definitions of $\tilde{\mu}_{n, \theta}$ and $w^*_n$. We remark that this is the only place where the structure of the formula $w = \exp(\theta^T g(\rx))$ is being used.

In order to finish the proof, we make the following two observations : $\mu_{2\theta}(A) = \frac{M(2\theta,A)}{M(2\theta)}$ by definition, and 
 \begin{multline*}\E\frac{1}{(w^*_n)^2} = \E\left[\frac{1}{\left(\frac 1n\sum_{i=1}^n \exp(\theta^T \rg(\rx_i))\right)^2}\right] \\ \leq \E\left[\exp\left(-2\theta^T\left[\frac{1}{n} \sum_{i=1}^n \rg(\rx_{i})\right]\right)\right] = M(-2\theta).\end{multline*}
 
Substituting these into \eqref{ste1} completes the proof of the lemma.
\end{proof}

We shall now briefly remark on the tightness of the above proof's techniques. An apparently major loss in the order of the upper bound is due to the blowup from $\mu_\theta(A)\to \mu_{2\theta}(A)$. However, it is actually reasonable to expect that this cannot be removed. To argue this, we first assert the asymptotic normality of the plug-in estimator.

\begin{lemma}\label{lemAN}
$$
    \sqrt{N} \Bigg[\mu_\theta(A) - \mu_{N, \theta}(A)\Bigg] \overset{d}{\to} \mathcal{N}(0, \sigma^2)
$$
Here, $\sigma^2 = \nabla h(\E \ZZ)^T\Sigma\nabla h(\E \ZZ)$, where $h, \Sigma, \ZZ$ are as defined in the proof.
\end{lemma}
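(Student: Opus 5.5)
The plan is to write the plug-in estimate as a smooth function of a sample mean of i.i.d. vectors and then apply the multivariate CLT followed by the delta method. For the fixed Borel set $A$, I will set
\[
\ZZ_i = \bigl(w_i \1_{\rx_i \in A},\, w_i\bigr)^T \in \R^2, \qquad \bar{\ZZ}_N = \frac1N\sum_{i=1}^N \ZZ_i, \qquad h(y_1,y_2) = \frac{y_1}{y_2}.
\]
From \eqref{emp} we get $\mu_{N,\theta}(A) = h(\bar{\ZZ}_N)$. From \eqref{true} we get $\mu_\theta(A) = h(\E \ZZ)$, where $\E\ZZ = (M(\theta,A), M(\theta))^T$. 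So the quantity in the lemma equals $-\sqrt{N}\,[h(\bar{\ZZ}_N) - h(\E\ZZ)]$. The sign does not matter, because the limit is a centered Gaussian.

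\textbf{Step 1 (CLT).} I assume $M(2\theta)<\infty$, which the paper already uses in Lemma~\ref{lemma:WassWtdEBC}. Under this assumption the $\ZZ_i$ are i.i.d. and have finite second moments, since $\E[w^2\1_A] = M(2\theta,A)\le M(2\theta)$. The multivariate CLT then gives
\[
\sqrt N(\bar{\ZZ}_N - \E\ZZ) \overset{d}{\to} \mathcal N(0,\Sigma).
\]
The covariance matrix is
\[
\Sigma = \begin{pmatrix} M(2\theta,A) - M(\theta,A)^2 & M(2\theta,A) - M(\theta,A)M(\theta) \\ M(2\theta,A) - M(\theta,A)M(\theta) & M(2\theta) - M(\theta)^2\end{pmatrix}.
\]
Here the off-diagonal entry uses $\E[w^2\1_A] = M(2\theta,A)$.

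\textbf{Step 2 (delta method).} The map $h$ is differentiable at $\E\ZZ$ because $M(\theta)>0$, as the weights are strictly positive. Its gradient there is
\[
\nabla h(\E\ZZ) = \left(\frac{1}{M(\theta)},\, -\frac{M(\theta,A)}{M(\theta)^2}\right)^T.
\]
The delta method then yields the stated limit with $\sigma^2 = \nabla h(\E\ZZ)^T\Sigma\nabla h(\E\ZZ)$.

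\textbf{Step 3 (identify $\sigma^2$).} I will expand the quadratic form to show why it motivates the remark on the $\theta\to 2\theta$ blowup. Writing $p=\mu_\theta(A)$, the expansion gives
\[
\sigma^2 = \frac{M(2\theta)}{M(\theta)^2}\Bigl[\mu_{2\theta}(A)(1-2p) + p^2\,\Bigr].
\]
The $\mu_{2\theta}(A)$ term therefore appears intrinsically in the asymptotic variance.

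\textbf{Main obstacle.} Steps 1 and 2 are routine once the moment condition is in place. The point that needs care is that the weights have a finite second moment, that is $M(2\theta)<\infty$, and this must be stated explicitly as a hypothesis. Beyond that, the only delicate part is the bookkeeping in the expansion of $\sigma^2$, so that the dependence on $\mu_{2\theta}(A)$ is displayed cleanly.
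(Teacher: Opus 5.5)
Your proposal is correct and follows the paper's proof essentially step for step: the same vector $\ZZ_i = (w_i\1_{\rx_i\in A}, w_i)^T$, the multivariate CLT, and the delta method with $h(y_1,y_2)=y_1/y_2$, giving the same $\Sigma$ and gradient. Your closed form $\sigma^2 = \frac{M(2\theta)}{M(\theta)^2}\bigl[\mu_{2\theta}(A)(1-2p)+p^2\bigr]$ correctly simplifies the paper's expression, and your explicit hypothesis $M(2\theta)<\infty$ and the sign remark fill in points the paper leaves implicit.
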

\begin{proof}
    Take the vector 
    $$\ZZ_i = \begin{pmatrix} w_i\1_{X_i \in A} \\ w_i \end{pmatrix} = \begin{pmatrix} U_i \\ V_i \end{pmatrix}.$$ 
    
    Define $\bar{\ZZ}_n = \sum_i \ZZ_i/n$. 
    We know the following asymptotic normality holds due to the i.i.d. central limit theorem: 
    $$
    \sqrt{n}(\bar{\ZZ}_n - \E \ZZ) \overset{d}{\to} \mathcal{N}(0, \Sigma).
    $$
    Here,
    $$
    \Sigma = \Var(\ZZ) = 
    \begin{pmatrix} 
    \Var(U) & \Cov(U, V) \\ 
    \Cov(U, V) & \Var(V)
    \end{pmatrix}
    $$
    Define the function $h(x,y) = x/y$. Then by the \textit{Delta method}, one obtains the following asymptotic normality
    $$
    \sqrt{n}(h(\bar{\ZZ}_n) - h(\E \ZZ)) \overset{d}{\to} \mathcal{N}(0, \nabla h(\E \ZZ)^T \Sigma \nabla h(\E \ZZ)).
    $$

    Notice that $h(\bar{\ZZ}_n) = \mu_{N, \theta}(A)$, $h(\E \ZZ) = \mu_\theta(A) = \frac{M(\theta, A)}{M(\theta)}$, $\nabla h(x,y) = \begin{pmatrix} 1/y \\ -x/y^2 \end{pmatrix}$. Then, 

    \begin{equation}\label{Vsquare}
    \sigma^2 = \frac{\Var(U)}{M(\theta)^2} - \frac{2 M(\theta, A)}{M(\theta)^3} \Cov(U, V) + \frac{M(\theta, A)^2}{M(\theta)^4} \Var(V),
    \end{equation}
    where 
    \begin{align*}
    \Var(U) &= M(2\theta, A) - M(\theta, A)^2 \\
    \Var(V) &= M(2\theta) - M(\theta)^2 \\
    \Cov(U, V) &= M(2\theta, A) - M(\theta) M(\theta, A).
    \end{align*}
    Substituting these into \eqref{Vsquare} furnishes an explicit formula for $\sigma$.
\end{proof}

Subsequently, since $\mu_{\theta}$ is associated to a bounded random variable, it is easy to show (see e.g. \cite[Problem 3.4.22]{shiryaev2012problems}) that the central limit theorem can be upgraded to convergence of absolute moments. Considering just the first absolute moment,
$$
\E[\sqrt{N}|\mu_{\theta}(A) - \mu_{N,\theta}(A)|] \to \E[|\mathcal{N}(0,\sigma^2)|] = \sigma \sqrt{\frac 2\pi}.
$$
In particular, this implies that for large $N$, the following approximation holds : 
$$
\E[|\mu_{\theta}(A) - \mu_{N,\theta}(A)|] \approx \sqrt{\frac{2}{\pi}}\frac{\sigma}{\sqrt{n}}.
$$
This explains the appearance of $\mu_{2\theta}$ in Lemma~\ref{lemma:WassWtdEBC}, since it appears in the definition of $\sigma$. Furthermore, it also gives light as to why the Cauchy-Schwarz inequality, used to estimate $w_n$ separately, is in fact the optimal inequality to use in that step. An analogous calculation, which we omit for the sake of brevity, shows that the following approximation holds : 
$$
\E|\tilde{\mu}_{\theta}(A) - \tilde{\mu}_{n, \theta}(A)| \approx \sqrt{\frac{2}{\pi}} \frac{1}{\sqrt{n}} \sqrt{\tilde{\mu}_{2\theta}(A) - \tilde{\mu} _\theta(A)^2}.
$$

We shall now turn to the proof of Theorem~\ref{thm:WassWtd}. As discussed in the main paper, to obtain the nice $M_q(\mu_\theta)^{\frac{p}{q}}$ term we first start by proving a simple scaling result. 
\begin{lemma}\label{lemma:scale}
    Let $\mu_\theta, \mu_{N, \theta}$ be previously defined. Then, 
    $$
        \cT_p(\mu_{N, \theta}, \mu_\theta) = T^{p} \cT_p(\mu_{N, \theta}^{T}, \mu_\theta^{T})
    $$
    where $\mu^T := \mu(TA)$.
\end{lemma}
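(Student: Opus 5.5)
Write $\mu^T(A) := \mu(TA)$ for the rescaled measure. The plan is to push couplings through the dilation map $x \mapsto Tx$ with $T>0$.

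\textbf{Identifying $\mu^T$ as a push-forward.} Let $S_T(x) = x/T$. Then
\[
(S_T)_{\#}\mu(A) = \mu(S_T^{-1}(A)) = \mu(TA) = \mu^T(A),
\]
so $\mu^T = (S_T)_{\#}\mu$. In words, $\mu^T$ is the law of $\rx/T$ when $\rx \sim \mu$. For the point measure $\mu_{N,\theta}$ in \eqref{emp}, this reads
\[
\mu_{N,\theta}^T = \frac{1}{N}\sum_i \frac{w_i}{w_N^*}\,\delta_{\rx_i/T}.
\]
The weights are unchanged, since they are tied to the sample points and not to locations.

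\textbf{Transporting couplings.} Let $\pi \in \Pi(\mu_{N,\theta},\mu_\theta)$. Then $(S_T\times S_T)_{\#}\pi$ has marginals $\mu^T_{N,\theta}$ and $\mu^T_\theta$, so it lies in $\Pi(\mu^T_{N,\theta},\mu^T_\theta)$. Its cost is
\[
\int \|x/T - y/T\|^p \, d\pi = T^{-p}\int \|x-y\|^p\, d\pi .
\]
Conversely, the map $S_{1/T}\times S_{1/T}$ sends couplings of the rescaled pair back to couplings of the original pair. Hence $\pi \mapsto (S_T\times S_T)_{\#}\pi$ is a bijection between $\Pi(\mu_{N,\theta},\mu_\theta)$ and $\Pi(\mu^T_{N,\theta},\mu^T_\theta)$, and it multiplies every cost by $T^{-p}$.

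\textbf{Conclusion.} Taking infima on both sides gives
\[
\cT_p(\mu^T_{N,\theta},\mu^T_\theta) = T^{-p}\,\cT_p(\mu_{N,\theta},\mu_\theta),
\]
which is the claim. The identity holds pointwise for every realization of the samples, so it also holds in expectation.

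\textbf{Obstacles.} There is no real obstacle. The only points needing care are the bijection of coupling sets (each push-forward has the right marginals, and the maps invert each other) and getting the direction of the scaling convention right, since $\mu^T(A) = \mu(TA)$ corresponds to dividing by $T$.

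This lemma is then used later with $T = M_q(\mu_\theta)^{1/q}$ (or the analogous moment of $\mu_{2\theta}$). That choice normalizes the $q$-th moment to $1$ before the dyadic coupling argument, which is what produces the factor $M_q^{p/q}$.
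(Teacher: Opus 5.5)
Your proof is correct and follows the same route as the paper: push each coupling forward under the dilation $x\mapsto x/T$, observe that the transport cost scales by exactly $T^{-p}$, and take infima. You are more careful than the paper in making explicit that $(S_T\times S_T)_{\#}$ is a bijection between the two coupling sets, and your later normalization $T=M_q^{1/q}$ is the right one; the paper's stated choice $T=\E\|X\|^q$ normalizes the $q$-th moment to $1$ only when $q=1$ or that moment already equals $1$.
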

\begin{proof}
    Let $\pi$ be a coupling of the two measures. Then,
    \begin{align*}
    \cT_p(\mu_{N, \theta}, \mu_\theta) = \inf_{\pi}\E_{(X,Y) \sim \pi} \|X-Y\|^p &= \inf_{\pi}\E_{(X,Y) \sim \pi} T^p \left\|\frac XT-\frac YT\right\|^p \\ &=  T^p \cT_p(\mu_{N, \theta}^{T}, \mu_\theta^{T})
    \end{align*}
\end{proof}

Take $T = \E_{\mu_{2\theta}}\|X\|^q$. Perform the scaling by $T$. Here on, we will assume $\E_{X \sim \mu_{2\theta}} \|X\|^q = 1$. Before proceeding to the next lemma, we will set some notation.

\begin{nota}\label{nota:dfdp}
We have the following as in \cite{fournier2013rateconvergencewassersteindistance}. \begin{enumerate}[label=(\alph*)] 
\item For $\ell\geq 0$, denote by $\cP_\ell$ the natural partition of $(-1,1]^d$ into $2^{d\ell}$
translations of $(-2^{-\ell},2^{-\ell}]^d$.
For two probability measures $\mu,\nu$ on $(-1,1]^d$ and for $p>0$, define
$$
\cD_p(\mu,\nu):= \frac{2^p-1}{2} \sum_{\ell\geq 1} 2^{-p\ell} \sum_{F \in \cP_\ell} |\mu(F)-\nu(F)|,
$$
which is a distance on $\cP((-1,1]^d)$ that is bounded by $1$.

\item Define $B_0:=(-1,1]^d$ and, for
$n\geq 1$, $B_n:=(-2^n,2^n]^d \setminus (-2^{n-1},2^{n-1}]^d$.
For $\mu \in \cP(\R^d)$ and $n\geq 0$, denote by $\cR_{B_n} \mu$ the probability measure
on $(-1,1]^d$ defined as the image of $\mu|_{B_n}/\mu(B_n)$ by the map $x \mapsto x/2^n$.
For two probability measures $\mu,\nu$ on $\R^d$ and for $p>0$, define
$$
\cD_p(\mu,\nu):= \sum_{n\geq 0} 2^{pn} \big( |\mu(B_n)-\nu(B_n)| 
+ (\mu(B_n)\land \nu(B_n)) \cD_p(\cR_{B_n}\mu,\cR_{B_n}\nu) \big).
$$
\end{enumerate}
\end{nota}

We refer the reader to \cite{fournier2013rateconvergencewassersteindistance} for more discussion on $\cD_p$. 

We will now state the following two lemmas from \cite{fournier2013rateconvergencewassersteindistance}. The proofs may be found in that paper.

\begin{lemma}\label{lemma:pairsineq}
Let $d\geq 1$ and  $p>0$.
For all pairs of probability measures $\mu,\nu$ on $\R^d$, $\cT_p(\mu,\nu)\leq \kappa_{p,d}
\cD_p(\mu,\nu)$, with $\kappa_{p,d}:= 2^{p(1+d/2)}(2^p+1)/(2^p-1)$.
\end{lemma}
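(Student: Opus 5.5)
The statement is Lemma~\ref{lemma:pairsineq}: $\cT_p(\mu,\nu)\leq \kappa_{p,d}\cD_p(\mu,\nu)$ for all probability measures $\mu,\nu$ on $\R^d$. I would prove it by building an explicit coupling of $\mu$ and $\nu$ in dyadic stages and bounding its transport cost by the terms of $\cD_p$. The argument has two layers: first on the unit cube $(-1,1]^d$, then on all of $\R^d$ via the annuli $B_n$.

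\emph{Step 1: the bounded case.} Take $\mu,\nu$ on $(-1,1]^d$. I construct the coupling recursively down the partitions $\cP_\ell$.

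At level $\ell$, in each cell $F\in\cP_\ell$ there is an amount of mass $\mu(F)\wedge\nu(F)$ that can be matched inside $F$. The remaining mass $|\mu(F)-\nu(F)|$ must be matched with mass coming from sibling cells inside the parent cell of level $\ell-1$.

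The parent cell has diameter $\sqrt d\,2^{-(\ell-1)+1}$, so any mass moved within it travels distance at most $\sqrt d\, 2^{2-\ell}$. The total mass moved at level $\ell$ is controlled by $\sum_{F\in\cP_\ell}|\mu(F)-\nu(F)|$, which gives a cost of order $d^{p/2}2^{-p\ell}\sum_F|\mu(F)-\nu(F)|$.

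To make this precise, I use the standard decomposition. Write $\mu=\sum_\ell(\mu_\ell-\mu_{\ell+1})$-type telescoping of the restricted "common parts". Equivalently, define at each level $\ell$ the measure $\mu\wedge\nu$ restricted cellwise, and couple the leftover proportionally inside parents. Then use the quasi-triangle inequality $\cT_p(\sum a_i,\sum b_i)\le\sum\cT_p(a_i,b_i)$, which holds for sub-probability measures of equal mass.

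Summing over $\ell\ge1$ gives $\cT_p\le C\sum_\ell 2^{-p\ell}\sum_F|\mu(F)-\nu(F)|$. At level $0$ the whole cube is matched; its mass is $1$ on both sides, so there is no mismatch term. Tracking the constant gives the factor $2^{p(1+d/2)}$ up to the normalization $\frac{2^p-1}{2}$ in $\cD_p$. That normalization is absorbed into the factor $(2^p+1)/(2^p-1)$ together with the geometric sum.

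\emph{Step 2: the unbounded case.} For $\mu,\nu$ on $\R^d$, I decompose along the $B_n$.

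On each $B_n$, transport the common mass $\mu(B_n)\wedge\nu(B_n)$ using the Step 1 coupling of $\cR_{B_n}\mu$ and $\cR_{B_n}\nu$, rescaled by $2^n$. By Lemma~\ref{lemma:scale}, this costs $2^{pn}(\mu(B_n)\wedge\nu(B_n))\,\kappa'\cD_p(\cR_{B_n}\mu,\cR_{B_n}\nu)$. There is a subtlety: $B_n$ is not a cube, but its image under $x\mapsto x/2^n$ lies in $(-1,1]^d$, so Step 1 applies.

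The excess mass $|\mu(B_n)-\nu(B_n)|$ is then matched across annuli, sending every unmatched piece to some point. A piece located in $B_n$ has norm at most $\sqrt d\,2^n$. Coupling the leftover parts arbitrarily, each unit of mass from $B_n$ paired with a unit from $B_m$ costs at most $(\sqrt d(2^n+2^m))^p\le 2^{p}d^{p/2}(2^{pn}+2^{pm})$. Summing gives $\lesssim d^{p/2}2^p\sum_n 2^{pn}|\mu(B_n)-\nu(B_n)|$.

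Adding the two contributions, again via subadditivity of $\cT_p$ over mass decompositions, yields $\cT_p\le\kappa_{p,d}\cD_p$.

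\emph{Main obstacle.} The delicate part is the recursive construction in Step 1. The leftover mass at level $\ell$ must be routed within the parent cell, and the accounting must charge each unit of transported mass to exactly one $|\mu(F)-\nu(F)|$ term without double counting across levels.

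I would handle this as Fournier--Guillin do. Define $\mu_\ell:=\sum_{F\in\cP_\ell}\frac{\mu(F)\wedge\nu(F)}{\mu(F)}\mu|_F$ (and similarly $\nu_\ell$), and bound $\cT_p$ by $\sum_\ell \cT_p$ of successive differences. Recovering the exact constant $\kappa_{p,d}$ is bookkeeping; if it proves fiddly, I would settle for a constant depending only on $p,d$, since that is all the downstream application needs.
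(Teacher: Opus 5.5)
Your overall architecture is the right one. It is the Fournier--Guillin dyadic coupling that the paper imports without proof. Your Step~2 is fine as written: the within-annulus part is handled by rescaling by $2^n$, the annulus-level excess is coupled crudely at cost at most $2^pd^{p/2}\sum_n 2^{pn}|\mu(B_n)-\nu(B_n)|$, and the pieces are glued by subadditivity of $\cT_p$.

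The gap is in Step~1, specifically in the only precise mechanism you give for it. The measures $\mu_\ell=\sum_{F\in\cP_\ell}\frac{\mu(F)\wedge\nu(F)}{\mu(F)}\mu|_F$ are \emph{not} decreasing in $\ell$. Suppose $G\in\cP_{\ell-1}$ has $\mu(G)>\nu(G)$ and a child $F\subset G$ has $0<\mu(F)\le\nu(F)$. Then $\mu_\ell|_F=\mu|_F$, while $\mu_{\ell-1}|_F=\frac{\nu(G)}{\mu(G)}\mu|_F$. For example, take $d=1$, $G=(-1,0]$, $\mu((-1,-\tfrac12])=0.1$, $\mu((-\tfrac12,0])=0.4$, $\nu((-1,-\tfrac12])=0.3$, $\nu((-\tfrac12,0])=0$. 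Hence $\mu_{\ell-1}-\mu_\ell$ is a signed measure, and there is no decomposition into ``successive differences'' to couple. The same inconsistency is present in your informal picture of retaining $\mu(F)\wedge\nu(F)$ inside every $F$ at every level. Once a parent has shipped out its excess, its children no longer carry $\mu(F)$ and $\nu(F)$. So the mass moved at level $\ell$ is not $\tfrac12\sum_{F\in\cP_\ell}|\mu(F)-\nu(F)|$.

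The fix is to define the retained parts recursively. Set $\mu^{(0)}=\mu$, $\nu^{(0)}=\nu$, and for $F\in\cP_\ell$ let $\mu^{(\ell)}|_F=\frac{\mu^{(\ell-1)}(F)\wedge\nu^{(\ell-1)}(F)}{\mu^{(\ell-1)}(F)}\mu^{(\ell-1)}|_F$, and symmetrically for $\nu$. These measures are nested. The pieces $\mu^{(\ell-1)}-\mu^{(\ell)}$ and $\nu^{(\ell-1)}-\nu^{(\ell)}$ have equal mass in each $G\in\cP_{\ell-1}$, so they can be coupled inside $G$ at cost $(\sqrt d\,2^{2-\ell})^p$ per unit. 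The residuals $\mu^{(L)},\nu^{(L)}$ agree on $\cP_L$ and cost at most $(\sqrt d\,2^{1-L})^p\to0$.

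What your proposal is missing is control of the inherited mismatch. By induction, $\mu^{(\ell-1)}|_G=c_G\,\mu|_G$ and $\nu^{(\ell-1)}|_G=c'_G\,\nu|_G$, with $c_G\mu(G)=c'_G\nu(G)$ and $c_G\le1$. Bound $|c_G\mu(F)-c'_G\nu(F)|\le c_G|\mu(F)-\nu(F)|+|c_G-c'_G|\nu(F)$ and sum over the children of $G$ and then over $G$. This gives
\[
\sum_{F\in\cP_\ell}\bigl|\mu^{(\ell-1)}(F)-\nu^{(\ell-1)}(F)\bigr|\le a_\ell+a_{\ell-1},\qquad a_\ell:=\sum_{F\in\cP_\ell}|\mu(F)-\nu(F)|,\quad a_0=0.
\]
So each $|\mu(F)-\nu(F)|$ is charged twice, at its own level and at the next, not once as you propose. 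This double charge is exactly where the factor $2^p+1$ comes from:
\[
\tfrac12\sum_{\ell\ge1}(\sqrt d\,2^{2-\ell})^p(a_\ell+a_{\ell-1})=d^{p/2}2^p\tfrac{2^p+1}{2^p-1}\cD_p(\mu,\nu)\le\kappa_{p,d}\cD_p(\mu,\nu),
\]
using $d^{p/2}\le2^{pd/2}$. The factor $2^p-1$ is only the normalisation of $\cD_p$, not a geometric sum. Charging each term once is possible, via a bottom-up tree matching that even improves the constant, but not with the proportional rule.

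With this estimate inserted, Step~1 gives the bounded case with the stated constant, and Step~2 goes through as you wrote it. There is no need to settle for an unspecified constant.
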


\begin{lemma}\label{lemma:Dp_pairsineq}
Let $p>0$ and $d\geq 1$. There is a constant $C$, depending only on $p,d$, such that
for all $\mu,\nu \in \cP(\R^d)$, 
$$
\cD_p(\mu,\nu) \leq C \sum_{n\geq 0} 2^{pn} \sum_{\ell\geq 0} 2^{-p\ell} \sum_{F\in\cP_\ell} |\mu(2^nF\cap B_n)
-\nu(2^n F \cap B_n)|
$$
with the notation $2^n F = \{2^n x\;:\; x\in F\}$.
\end{lemma}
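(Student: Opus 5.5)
The plan is to unfold the definition of $\cD_p(\mu,\nu)$ on $\R^d$ from Notation~\ref{nota:dfdp}(b) and bound it shell by shell, with no probabilistic input. Fix $n\geq 0$, write $a_F=\mu(2^nF\cap B_n)$, $b_F=\nu(2^nF\cap B_n)$ for $F\in\cP_\ell$, and set $\alpha=\mu(B_n)$, $\beta=\nu(B_n)$.

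\textbf{Step 1: the $\ell=0$ term.} Since $\cP_0=\{(-1,1]^d\}$ and $B_n\subset 2^n(-1,1]^d$, the $\ell=0$ summand on the right-hand side is exactly $|\alpha-\beta|$. So the first piece $2^{pn}|\mu(B_n)-\nu(B_n)|$ of $\cD_p(\mu,\nu)$ is already dominated by that term.

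\textbf{Step 2: the rescaled piece.} By definition of $\cR_{B_n}$, we have $\cR_{B_n}\mu(F)=a_F/\alpha$ and $\cR_{B_n}\nu(F)=b_F/\beta$. Using Notation~\ref{nota:dfdp}(a),
$$
(\alpha\wedge\beta)\,\cD_p(\cR_{B_n}\mu,\cR_{B_n}\nu)=\frac{2^p-1}{2}\sum_{\ell\geq1}2^{-p\ell}\sum_{F\in\cP_\ell}(\alpha\wedge\beta)\Big|\frac{a_F}{\alpha}-\frac{b_F}{\beta}\Big|.
$$
If $\alpha\wedge\beta=0$ this term vanishes, so assume both are positive.

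\textbf{Step 3: the key pointwise inequality.} This is the only real step. By symmetry take $\alpha\leq\beta$. Then
$$
\alpha\Big|\frac{a_F}{\alpha}-\frac{b_F}{\beta}\Big|\le |a_F-b_F|+b_F\Big|1-\frac{\alpha}{\beta}\Big|=|a_F-b_F|+\frac{b_F}{\beta}|\beta-\alpha|.
$$
The sets $2^nF\cap B_n$, $F\in\cP_\ell$, partition $B_n$, so $\sum_F b_F/\beta=1$. Summing over $F\in\cP_\ell$ therefore gives
$$
\sum_{F\in\cP_\ell}(\alpha\wedge\beta)\Big|\frac{a_F}{\alpha}-\frac{b_F}{\beta}\Big|\le\sum_{F\in\cP_\ell}|a_F-b_F|+|\alpha-\beta|.
$$

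\textbf{Step 4: summing up.} Multiply by $2^{-p\ell}$ and sum over $\ell\geq1$. The extra $|\alpha-\beta|$ terms contribute at most $\sum_{\ell\ge1}2^{-p\ell}|\alpha-\beta|=|\alpha-\beta|/(2^p-1)$, which is a constant multiple of the $\ell=0$ term from Step 1. Combining Steps 1–3, each shell contributes at most $C\,2^{pn}\sum_{\ell\ge0}2^{-p\ell}\sum_{F\in\cP_\ell}|a_F-b_F|$ to $\cD_p(\mu,\nu)$, with, for instance, $C=1+\tfrac12+\tfrac{2^p-1}{2}$. Summing over $n\geq0$ yields the lemma with $C$ depending only on $p$. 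The main subtlety is the normalization mismatch handled in Step 3; it is absorbed precisely because the relative weights $b_F/\beta$ sum to one.
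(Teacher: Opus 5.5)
Your argument is correct and is essentially the proof in Fournier--Guillin, to which the paper defers this lemma. As there, you identify the $\ell=0$ summand as $|\mu(B_n)-\nu(B_n)|$, then absorb the normalization mismatch in $(\mu(B_n)\wedge\nu(B_n))\,\cD_p(\cR_{B_n}\mu,\cR_{B_n}\nu)$ using the triangle inequality and the fact that the sets $2^nF\cap B_n$, $F\in\cP_\ell$, partition $B_n$; your explicit constant, which depends only on $p$, is also valid.
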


Given these lemmas, we can now prove the theorem.
\begin{proof}[Proof of Theorem~\ref{thm:WassWtd}]
We have from Lemma~\ref{lemma:WassWtdEBC}, $$
    \E|\mu_\theta - \mu_{N, \theta}|(A) 
    \leq
    \frac{1}{\sqrt{N}}\sqrt{C_w}\left[\sqrt{\mu_{2\theta}(A)} 
        + 
        \mu_\theta(A)
        \right]. 
$$

Taking $A = 2^n F \cap B_n$ and summing over $F \in \cP_\ell$, using Cauchy-Schwarz and $\#(\cP_\ell) = 2^{dl}$, we have for all $n \geq 0$ that
\begin{multline*}
\sum_{F\in\cP_\ell} \E(|\mu_{N, \theta}(2^nF \cap B_n)-\mu_\theta(2^nF \cap B_n)|) 
\\ \leq  \frac{\sqrt{C_w}}{\sqrt{N}}\left[2^{d\ell/2}\sqrt{\mu_{2\theta}(B_n)} 
        + 
        \mu_\theta(B_n)
        \right].
\end{multline*}

Notice that by \eqref{true} and \eqref{mtheta},
$$
\mu_\theta(B_n) = \E \left[\frac{w}{M(\theta)} \1_{\rx \in B_n}\right] \leq \E \left[\frac{w^2}{M(\theta)^2} \1_{\rx \in B_n}\right]^{1/2} = \frac{\sqrt{M(2 \theta)}}{M(\theta)} \sqrt{\mu_{2\theta}(B_n)}.
$$
Furthermore, we have $$\mu_{2\theta}(B_n) \leq \E_{2\theta}\|\rx\|^q/2^{q(n-1)} = 2^{-q(n-1)}$$

Recall $W_k$ from \eqref{Wk}. By Lemma~\ref{lemma:Dp_pairsineq},
$$
\E(\cD_p(\mu_{N, \theta},\mu_\theta))\leq C \frac{\sqrt{\E[w^{-2}] \E w^2}}{\sqrt{N}}\sum_{n,\ell \geq 0} 2^{pn-p\ell} [2^{d\ell/2} 2^{-qn/2} + W_2 2^{-qn/2}].
$$
Finally, collecting the constants we have
$$
\E(\cD_p(\mu_{N, \theta},\mu_\theta))\leq C \frac{C_w}{\sqrt{N}}\sum_{n\geq 0} 2^{pn} \sum_{\ell\geq 0} 2^{-p\ell} [2^{d\ell/2} \cdot 2^{-qn/2} + W_2 \cdot 2^{-qn/2} ].
$$

At this juncture, we are ready to directly apply results from ~\cite{fournier2013rateconvergencewassersteindistance}. Indeed,  
\begin{align*}
\sum_{\ell\geq 0} 2^{-p\ell} 2^{d\ell/2} (\e/N)^{1/2} 
\leq& C
\left\{\begin{array}{lll}
(\e/N)^{1/2} & \hbox{if} & p>d/2, \\[+3pt]
(\e/N)^{1/2}\log(2+\e N)& \hbox{if} & p=d/2, \\[+3pt]
(\e N)^{-p/d} & \hbox{if} & p\in(0,d/2).
\end{array}\right.
\end{align*}



Applying this yields
$$
\E(\cD_p(\mu_{N, \theta},\mu_\theta))\leq C C_w \left[\sum_{n\geq 0} 2^{pn} 
[2^{-qn(1 - p/d)} N^{-p/d}] + \frac{W_2}{\sqrt{N}}\right]
$$

Note that $pn - qn(1-p/d) = n[\frac{d(p -q) + pq}{d}]$. Since the inequalities $q > p$ and $d > \frac{pq}{q-p}$ hold, the final right hand side is bounded by a constant times the quantity
$$
CC_w[ N^{-p/d} + W_2 N^{-1/2}],
$$
which finishes the result. 
\end{proof}




We are now ready to prove Theorem~\ref{thm:WassWtd2}.

\begin{proof}[Proof of Theorem~\ref{thm:WassWtd2}]

Note if $\|\rx\| \leq B$ then $$w = \exp(\theta^T g(\rx)) \leq \exp(\|\theta\| \ \|g_{\text{max}}\|) = K$$ almost surely. By following the proof of Theorem~\ref{thm:WassWtd}, we have 
\begin{multline*}
\sum_{F\in\cP_\ell} \E(|\mu_{N, \theta}(2^nF \cap B_n)-\mu_\theta(2^nF \cap B_n)|) \\
\leq  \frac{\sqrt{C_w}}{\sqrt{N}}\left[2^{d\ell/2}\sqrt{\mu_{2\theta}(B_n)} 
        + 
        \mu_\theta(B_n)
        \right].
\end{multline*}

In contrast to that proof, however, we employ slightly different bounds on $\mu_{2\theta}(B_n)$ and $\mu_{\theta}(B_n)$. Indeed, we have 
\begin{gather*}\mu_{2\theta}(B_n) = \E[(w^2/M(2 \theta))\1_{\rx \in B_n}] \leq \frac{K M(\theta)}{M(2 \theta)} \mu_\theta(B_n)\end{gather*}
Let $V = \frac{KM(\theta)}{M(2\theta)}$. Then we have by the above bounds that
$$
\E(\cD_p(\mu_{N, \theta},\mu_\theta))\leq C \frac{\sqrt{C_w}}{\sqrt{N}}\sum_{n\geq 0} 2^{pn} \sum_{\ell\geq 0} 2^{-p\ell}
V [2^{d\ell/2} 
2^{-qn/2} + 2^{-qn} ]
$$
From this point, the result follows exactly as in the proof of Theorem~\ref{thm:WassWtd}. 
\end{proof}

We are now ready to prove the theorems in Section~\ref{sec:diff}.

\subsection{ACCURATE SCORE ESTIMATION UNDER PERTURBATION OF INITIAL DISTRIBUTION: PROOF OF PROPOSITION~\ref{thm:diffbds}}\label{proofThm6}

In this section, we prove Proposition~\ref{thm:diffbds}. Before beginning, we reiterate the importance of the Wasserstein distance. Indeed, our proof involves coupling the forward processes $\{x_t\}_{t \in [0,T]},\{y_t\}_{t \in [0,T]}$ using a Wasserstein-optimal coupling of $\mu$ and $\nu$. This makes the results of Section~\ref{sec:reweigh} extremely important, since we require the bounds therein to be on the Wasserstein distance as well.

\begin{proof}[Proof of Proposition~\ref{thm:diffbds}]
In order to proceed with the proof, we ensure that $\mu,\nu$ are probability distributions on a common sample space $(\Omega,\mathcal F)$, and $\{\rb_t\}_{t \geq 0}$ is a Brownian motion on the same space.

We begin with the proof of \eqref{eq:one}. Let $\pi$ denote the joint distribution of $\mu$ and $\nu$. Then, \begin{align}
&\left|\E_{\mu}\|f_t(\rx_t)\|^2 - \E_{\nu}\|f_t(\ry_t)\|^2\right| \nonumber\\
\leq & \left|\E_{\pi}\left[\|f_t(\rx_t)\|^2 -\|f_t(\ry_t)\|^2\right]\right| \nonumber\\
\leq & \E_{\pi}\left|\left[\|f_t(\rx_t)\|^2 -\|f_t(\ry_t)\|^2\right]\right| \nonumber\\
= & \frac 1T \int_0^T \E_{\pi}\left[\left|\left[\|f_t(\rx_t)\|^2 - \|f_t(\ry_t)\|^2\right]\right|\ \  \big|\ \  t = t_0\right] dt_0 \label{st1}
\end{align}
where the equality in the last line, we used the tower rule of conditional expectation.

Observe that for any two arbitrary vectors $u,v \in \R^d$, \begin{align*}
|\|u\|^2 - \|v\|^2| = |(u+v)^T(u-v)| & = |(u-v)^T(u-v) + 2(u-v)^T v|\\ & \leq \|u-v\|^2 + 2 \|u-v\|\|v\|,
\end{align*}
where we used the triangle inequality and the Cauchy-Schwarz inequality in the last step. From this point on, for $t_0 \in [0,T]$ let $E_{t_0,\pi}$ denote the measure $E_{\pi}$ conditioned on $t = t_0$. Applying the previous inequality to the last step of \eqref{st1},
\begin{align}
& \frac 1T \int_0^T \E_{\pi}\left[\left|\left[\|f_t(\rx_t)\|^2 - \|f_t(\ry_t)\|^2\right]\right|\ \  \big|\ \  t = t_0\right] dt_0 \nonumber\\
\leq & \frac 1T \int_0^T \E_{\pi,t_0}\left[\|f_t(\rx_t)-f_t(\ry_t)\|^2\right] + 2 \E_{\pi,t_0}\|f_t(\rx_t) - f_t(\ry_t)\| \|f_t(\ry_t)\| dt_0 \nonumber\\
\leq & \frac 1T \int_0^T \E_{\pi,t_0}\left[L_t^2\|\rx_t-\ry_t\|^2\right] + 2L_t \sqrt{\E_{\pi,t_0}\|\rx_t -\ry_t\|^2} \sqrt{\E_{\pi,t_0}\|f_t(\ry_t)\|^2} dt_0\label{st2},
\end{align}
where in the last line we used the $L_t$ Lipschitz assumption on both terms and the Cauchy-Schwarz inequality on the final term.

So far, we have not made an assumption on $\pi$. Let $\pi$ be a minimizer of \eqref{wass} characterizing $W_2(\mu,\nu)$. If there is no minimizer, we can repeat the above argument with a sequence of approximations $\Pi_n$ to the minimizing value, hence without loss of generality we assume that there is a minimizer.

Since $\rx_t$ and $\ry_t$ satisfy \eqref{soln}, it follows that $\rx_t - \ry_t = e^{-\eta t}(\rx_0-\ry_0)$, and therefore \begin{equation}\label{xtyt}
\E_{\pi,t_0}\|\rx_t -\ry_t\|^2= \E_{\pi,t_0}e^{-2 \eta t}\|\rx_0 -\ry_0\|^2 = e^{-2 \eta t_0} W_2^2(\mu,\nu).
\end{equation}
Applying the above equality in \eqref{st2} we see that 
\begin{align*}
&\frac 1T \int_0^T \E_{\pi,t_0}\left[L_t^2\|\rx_t-\ry_t\|^2\right] + 2 L_t \sqrt{\E_{\pi,t_0}\|\rx_t -\ry_t\|^2}\sqrt{\E_{\pi,t_0}\|f_t(\ry_t)\|^2} dt_0 \\
& \leq W_2^2(\mu,\nu)C_{\eta} + 2\sqrt{C_{\eta}}W_2(\mu,\nu)\sqrt{\frac 1T \int_0^T \left[\E_{\pi,t_0}\|f_t(\ry_t)\|^2\right]} \\
& \leq W_2^2(\mu,\nu)C_{\eta} + 2\varepsilon\sqrt{C_{\eta}} W_2(\mu,\nu)
\end{align*}
where $C_{\eta}$ and $\varepsilon$ are is as in Assumption~\ref{ass:lip} and we used the Cauchy-Schwarz inequality in the last line. This completes the proof of \eqref{eq:one}.

The proof of \eqref{eq:two} follows by noting that if $\rx_0$ and $\ry_0$ have distribution $\mu$ and $\nu$ respectively, then $\|\rx_0-\ry_0\| \leq 2M$ by the triangle inequality. Therefore, for any coupling $\Pi$ of $\mu$ and $\nu$, $$
W_2^2(\mu,\nu) \leq \E_{\pi}[\|\rx_0-\ry_0\|^2] \leq 4M^2.
$$
The inequality $0 \leq W_2(\mu,\nu) \leq 2M$ follows. For any $0 \leq x \leq 2M$, it is clear that $x^2 \leq 2Mx$. Applying this to $x = W_2(\mu,\nu)$, we have $W_2(\mu,\nu)^2 \leq 2MW_2(\mu,\nu)$. Plugging this inequality into \eqref{eq:one} directly yields \eqref{eq:two}.

In order to prove \eqref{eq:three}, we return to the step \eqref{st2} used in the proof of \eqref{eq:one}. Having made a different choice of $\pi$ for that proof, in this proof we choose $\pi$ to be the minimizer of \eqref{wass} used to define $W_1(\mu,\nu)$ (Note : as seen before, without loss of generality one can assume the existence of a minimizer). Then, we have 

\begin{align}
  &\frac 1T \int_0^T \E_{\pi,t_0}\left[L_t^2\|\rx_t-\ry_t\|^2\right] + 2L_t \sqrt{\E_{\pi,t_0}\|\rx_t -\ry_t\|^2} \sqrt{\E_{\pi,t_0}\|f_t(\ry_t)\|^2} dt_0\nonumber\\
 =& \frac 1T \int_0^T \E_{\pi,t_0}\left[L_t^2e^{-2\eta t_0}\|\rx_0-\ry_0\|^2\right] \nonumber \\ +& 2L_t \sqrt{\E_{\pi,t_0}e^{-2 \eta t_0}\|\rx_0 -\ry_0\|^2} \sqrt{\E_{\pi,t_0}\|f_t(\ry_t)\|^2} dt_0.\label{st3}
\end{align}
We will now introduce $W_1(\mu,\nu)$ as follows : observe that $$\E_{\pi}[\|\rx_0-\ry_0\|^2] \leq \E_{\pi}[(\|\rx_0\|+\|\ry_0\|) \|\rx_0-\ry_0\|] \leq 2M \E_{\pi}[\|\rx_0-\ry_0\|] \leq 2M W_{1}(\mu,\nu),$$
where we used the boundedness of the supports of $\mu,\nu$ by $M$. Thus, the right hand side of \eqref{st3} is bounded by
\begin{equation}
2MC_{\eta}W_1(\mu,\nu) + 2\sqrt{2MW_1(\mu,\nu)} \frac 1T \int_0^T L_{t_0}e^{-\eta t_0}\sqrt{\E_{\pi,t_0}\|f_t(\ry_t)\|^2} dt_0\label{st4}.
\end{equation}
The final term is easily bounded by Cauchy Schwarz :
\begin{align*}
&\frac 1T \int_0^T L_{t_0}e^{-\eta t_0}\sqrt{\E_{\pi,t_0}\|f_t(\ry_t)\|^2} dt_0 \\ 
\leq &\sqrt{\frac 1T \int_0^T L^2_{t_0}e^{-2\eta t_0} dt_0} \sqrt{\frac 1T \int_0^T \sqrt{\E_{\pi,t_0}\|f_t(\ry_t)\|^2} dt_0} \\
\leq & \sqrt{C_{\eta}}\sqrt{\varepsilon},
\end{align*}
where we used Assumption~\ref{ass:lip} and a final Cauchy-Schwarz application in the final line. Combining the above with \eqref{st4} completes the proof of \eqref{eq:three} and hence the entire theorem.
\end{proof}

In the next section, we establish our key theorem, Theorem~\ref{thm:DiffWorks} on the accuracy of diffusion.

\subsection{DIFFUSION SAMPLING CAN GENERATE ACCURATE TILTED SAMPLES: PROOF OF THEOREM~\ref{thm:DiffWorks}}\label{proofThm5}

The proof of Theorem~\ref{thm:DiffWorks} is considerably shortened by the observation that once the score-estimation error is bounded by Corollary~\ref{cor:WassConv} and Corollary~\ref{cor:DelExp}, we already know that diffusion is accurate by the main result in \cite{chen2023samplingeasylearningscore}. The brief but necessary details follow.

\begin{proof}[Proof to Theorem \ref{thm:DiffWorks}]
Using Corollary~\ref{cor:WassConv} and assuming that the empirical loss minimization process gives low error, we know that $\E \cT_p(\mu_\theta, \mu_{N, \theta}) \leq \mathcal{O}(\delta)$. By Corollary~\ref{cor:DelExp} we obtain
$$
\E_{\mu_\theta}\|f_t(\rx_t)\|^2 \leq \e^2 + C_\eta \delta + 2 \sqrt{C_\eta}\sqrt{\delta},
$$
and, similarly,
$$
\E_{\mu_\theta}\|f_t(\rx_t)\|^2 \lesssim \e^2 + 2MC_\eta\delta + 2\sqrt{2MC_\eta\e}\sqrt{\delta},
$$

depending on the choice between the $W_1$ or $W_2$ distance. In particular, either result asserts that the denoiser error under the true tilted distribution is small. We are ready to use the following result from 
\cite{chen2023samplingeasylearningscore}, which we state for completeness.
\begin{theorem}[Accuracy of diffusion sampling]\label{thm:ddpm}
    Suppose that Assumptions 1, 2, and 3 hold.
    Let $p_{\theta, T}$ be the output of the DDPM algorithm (Section 2.1) at time $T$, and suppose that the step size $h := T/N$ satisfies $h\lesssim 1/L$, where $L \ge 1$.
    Then, it holds that
    \begin{align*}
        \operatorname{TV}(p_T, q)
        \lesssim {\underbrace{\sqrt{\KL(q \ \| \ \gamma^d)} \exp(-T)}_{\text{convergence of forward process}}} &+ \, \, \, \,  {\underbrace{(L \sqrt{dh}  + L \mathfrak{m}_2 h)\,\sqrt T}_{\text{discretization error}}}
        \, \, \, \, \, \\ &+ {\underbrace{\e_{\text{score}}\sqrt{T}}_{\text{score estimation error}}}\,.
    \end{align*}
\end{theorem}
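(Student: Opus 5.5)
This result is quoted from \cite{chen2023samplingeasylearningscore}, so I will not re-derive it. Instead, I sketch the standard route by which it is proved there. The approach is to compare path measures via Girsanov's theorem together with data processing for total variation. Let $Q$ denote the law of the exact reverse process \eqref{bwd} on $[0,T]$, started from the true marginal $q_T$. Let $P$ denote the law of the algorithm's process: it starts from $\gamma^d$ and uses the estimated score $s$, frozen on each grid interval $[kh,(k+1)h]$. I would introduce an intermediate process $P'$, which is the algorithm's dynamics but started from $q_T$. The triangle inequality then gives $\operatorname{TV}(p_T,q)\le \operatorname{TV}(P,P')+\operatorname{TV}(P',Q)$, since the time-$T$ marginals are pushforwards of the path laws. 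The first term is at most $\operatorname{TV}(\gamma^d,q_T)$ by data processing. Pinsker's inequality and the exponential contraction of KL along the Ornstein--Uhlenbeck flow give $\sqrt{\KL(q\|\gamma^d)}\,e^{-T}$, which is the first summand.

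For $\operatorname{TV}(P',Q)$, I would apply Pinsker and then Girsanov. Both processes share the diffusion coefficient $\sqrt2$ and differ only in drift, so
\[
\KL(Q\|P')\lesssim \sum_k \int_{kh}^{(k+1)h}\E_Q\big\|s(\rx^{\leftarrow}_{kh},T-kh)-\nabla\ln q_{T-t}(\rx^{\leftarrow}_t)\big\|^2\,dt .
\]
I would split the integrand into two pieces. The first is the score estimation error $\|s-\nabla\ln q_{T-kh}\|^2$ at the grid point, which sums to $T\e_{\text{score}}^2$. The second is the discretization error $\|\nabla\ln q_{T-kh}(\rx^{\leftarrow}_{kh})-\nabla\ln q_{T-t}(\rx^{\leftarrow}_t)\|^2$. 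Taking square roots yields the $\e_{\text{score}}\sqrt T$ term.

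The discretization piece is the main obstacle. The $L$-Lipschitz assumption on the score reduces the spatial variation to $L^2\E\|\rx_t-\rx_{kh}\|^2$. For the OU process, this is $\lesssim L^2(dh+\mathfrak m_2^2h^2)$, which uses the second-moment bound $\mathfrak m_2$ and the condition $h\lesssim1/L$. The temporal variation of $\nabla\ln q_t$ is controlled by rewriting $q_t$ as a rescaled convolution, which costs the same order. Integrating over $[0,T]$ gives $L^2(dh+\mathfrak m_2^2h^2)T$, and the square root is the middle term. One technical point is that Girsanov requires Novikov's condition, which may fail. I would handle this by an approximation/localization argument, truncating the drift and passing to the limit using lower semicontinuity of KL, as done in \cite{chen2023samplingeasylearningscore}.

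In our application, the role of $\e_{\text{score}}^2$ is played by the bound on $\E_{\mu_\theta}\|f_t(\rx_t)\|^2$ obtained just above from Corollary~\ref{cor:DelExp}.
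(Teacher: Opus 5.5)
Your proposal is correct and follows the same route as the paper. The paper gives no proof of its own; it imports the result from \cite{chen2023samplingeasylearningscore}. Your outline is a faithful sketch of their argument: data processing plus $\KL$ contraction along the OU flow for the initialization term, then Pinsker and Girsanov with the drift error split into score error and space/time discretization error, and localization to get around Novikov's condition. You also correctly read ``Assumptions 1, 2, and 3'' as theirs (Lipschitz true score, a moment bound defining $\mathfrak{m}_2$, grid-time score error $\e_{\text{score}}$), not as this paper's Assumptions~\ref{ass:lip}--\ref{ass:Wass}.

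Your closing remark about the application, however, overreaches, exactly as the paper does. Their hypothesis concerns $s-\nabla\ln q_{kh}$ at the grid times, with $q_{kh}$ the forward marginals of $\mu_\theta$ itself. Corollary~\ref{cor:DelExp}, via Proposition~\ref{thm:diffbds}, only bounds a time average over $[0,T]$ of a single fixed field $f_t$. Identifying the two would need extra control of regularity in $t$ and of the gap between the scores of the forward processes started from $\mu_\theta$ and from $\mu_{N,\theta}$.
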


Using this, we can conclude:
$$
\operatorname{TV}(p_{\theta, T}, \mu_\theta) \leq  \mathcal{O}(\exp(-T)) + \mathcal{O}(\sqrt{h}\sqrt{T}) + \mathcal{O}(\e^2 + \sqrt{\delta}).
$$

Therefore, under the assumption that $T$ is large enough and the discretization parameter $h$ is very small, the proof is complete.
\end{proof}

We have proved all the theoretical results in our paper. The next sections will detail the experiments.

\section{EXPERIMENTS}
\label{app:experiments}

\subsection{Training Algorithm}
\label{sec:reweighted-ddpm}

We train a standard $\epsilon$–prediction DDPM to generate samples from the \emph{tilted} target distribution
\[
\nu(dx)\;\propto\;w(x)\,\mu(dx),\qquad 
w(x)\;=;\exp\!\big(\vtheta^\top \rg(\rx)\big).
\]
We approximate $\nu$ directly by \emph{resampling} the original dataset.

\paragraph{Resampling Strategy.}
Let $\mathcal{D} = \{x_1, \dots, x_N\}$ be the original training set consisting of $N$ samples drawn from the base measure $\mu$. We assign a scalar weight to each sample $x_i \in \mathcal{D}$:
\[
w_i \;=\; \exp\!\big(\vtheta^\top \rg(x_i)\big).
\]
We then construct a new dataset $\mathcal{D}_\nu$ of size $N$ by sampling with replacement from $\mathcal{D}$, where the probability of selecting index $i$ is proportional to its weight:
\begin{equation}
\label{eq:resampling-prob}
p_i \;=\; \frac{w_i}{\sum_{j=1}^N w_j}.
\end{equation}
This procedure essentially implements importance sampling at the dataset level. As $N \to \infty$, the empirical distribution of $\mathcal{D}_\nu$ converges to the target tilted distribution $\nu$.

\paragraph{Standard Training.}
We train the diffusion model on the resampled dataset $\mathcal{D}_\nu$ using the standard, unweighted DDPM objective. Writing the forward noising process as $x_t = \alpha_t x + \sigma_t \varepsilon$, the objective is:
\begin{equation}
\label{eq:standard-objective}
\mathcal L_{\mathrm{simple}}(\phi)
\;=\;
\mathbb E_{t\sim\mathrm{Unif},\,x\sim\mathcal{D}_\nu,\,\varepsilon\sim\mathcal N}
\!\left[\;\big\|\varepsilon-\varepsilon_\phi(x_t,t)\big\|^2\right].
\end{equation}
Because the input $x$ is sampled from $\mathcal{D}_\nu$ (which approximates $\nu$), minimizing Eq.~\eqref{eq:standard-objective} is equivalent to optimizing the score matching objective for the tilted target distribution.

\begin{algorithm}[H]
\caption{Diffusion Training}
\label{alg:reweighted-ddpm}
\begin{algorithmic}[1]
\Require Original dataset $\mathcal{D} = \{x_1, \dots, x_N\}$, tilt parameter $\vtheta$.
\Ensure Trained model parameters $\phi$.

\For{$i = 1$ to $N$}
    \State $w_i \gets \exp\big(\vtheta^\top \rg(x_i)\big)$
\EndFor
\State $p_i \gets w_i / \sum_{j=1}^N w_j$ for all $i$.
\State $\mathcal{D}_\nu \gets$ Sample $N$ items from $\mathcal{D}$ with replacement using probabilities $\{p_i\}$.

\State $\phi \gets \textsc{TrainDiffusion}(\mathcal{D}_\nu)$
\end{algorithmic}
\end{algorithm}

\subsection{Bounded, Correlated Target Construction}

For the bounded setting described in Section~\ref{sec:exp}, we construct a high-dimensional non-Gaussian target distribution with bounded support as follows.

\paragraph{Step 1: Independent bounded marginals.}
We first sample independent random variables
\[
X = (X_1, \ldots, X_d),
\]
where each $X_i$ follows a Beta distribution
\[
X_i \sim \mathrm{Beta}(\alpha_i, \beta_i),
\]
with parameters $\alpha_i \sim \mathrm{Unif}[1, 5]$ and $\beta_i \sim \mathrm{Unif}[1, 5]$.  
This ensures that all coordinates of $X$ are supported on $[0,1]$ while exhibiting varying degrees of skewness and concentration.

\paragraph{Step 2: Inducing correlations.}
To introduce correlations among coordinates, we define
\[
Y = A X,
\]
where $A \in \mathbb{R}^{d \times d}$ is a dense matrix with entries
\[
A_{ij} \sim \mathrm{Unif}[0, 1].
\]
Each column of $A$ is normalized so that
\[
\sum_{i=1}^{d} A_{ij} = 1,
\]
ensuring that every coordinate $Y_i$ is a convex combination of the independent bounded variables $\{X_j\}$. This normalization preserves the overall scaling of the variables so that the magnitude of the exponential tilt induced by $\theta$ remains unaffected.
Consequently, $Y$ remains supported on a bounded support but exhibits non-trivial correlations and non-Gaussian behavior.

\paragraph{Step 3: Twisting and evaluation.}
We perform exponential tilting on the resulting distribution using
$$
\vtheta = \theta \cdot (1, \ldots, 1), \qquad g(x) = x
$$
for several values of $\theta$, and compare three methods:
i) reweighted sampling,  
ii) reweighted sampling combined with diffusion,  
iii) diffusion posterior sampling (DPS) \cite{chung2024diffusionposteriorsamplinggeneral}, and iv) loss-guided diffusion (LGD-MC) \cite{pmlr-v202-song23k}.  

All experiments are conducted in $d = 50$ dimensions with $N = 5 \times 10^6$ samples per setting.

\subsection{Climate Experiment: India Temperature}
\label{app:climate}

We test our methodology via a small climate experiment, by tilting the temperature distribution of India.

\textbf{Data.} Daily temperature over India (ERA5 + CMIP6), $5^\circ\!\times\!5^\circ$ grid, months \textbf{May–June}, years \textbf{1950–2024}. Each day is one sample $x$; $\rg(\rx)$ is the spatial mean.

\textbf{Objective.}
Generate from $P$ (baseline) and from the \emph{moment-constrained} twist
$$
P_\theta \;=\; \arg\min_Q \mathrm{KL}(Q\|P)\ \ \text{s.t.}\ \ \E_Q[\rg(\rx)]=\E_P[\rg(\rx)]+1,
$$
which yields the exponential tilt $dP_\theta/dP \propto w_\theta(x)=\exp(\theta \rg(\rx))$.

\medskip
\textbf{Training.}
We train an $\epsilon$–prediction DDPM on $P$ and $P_\theta$. Architecture and noise schedule are identical across runs.

\begin{figure}[h]
\centering
\fbox{\includegraphics[width=0.875\linewidth]{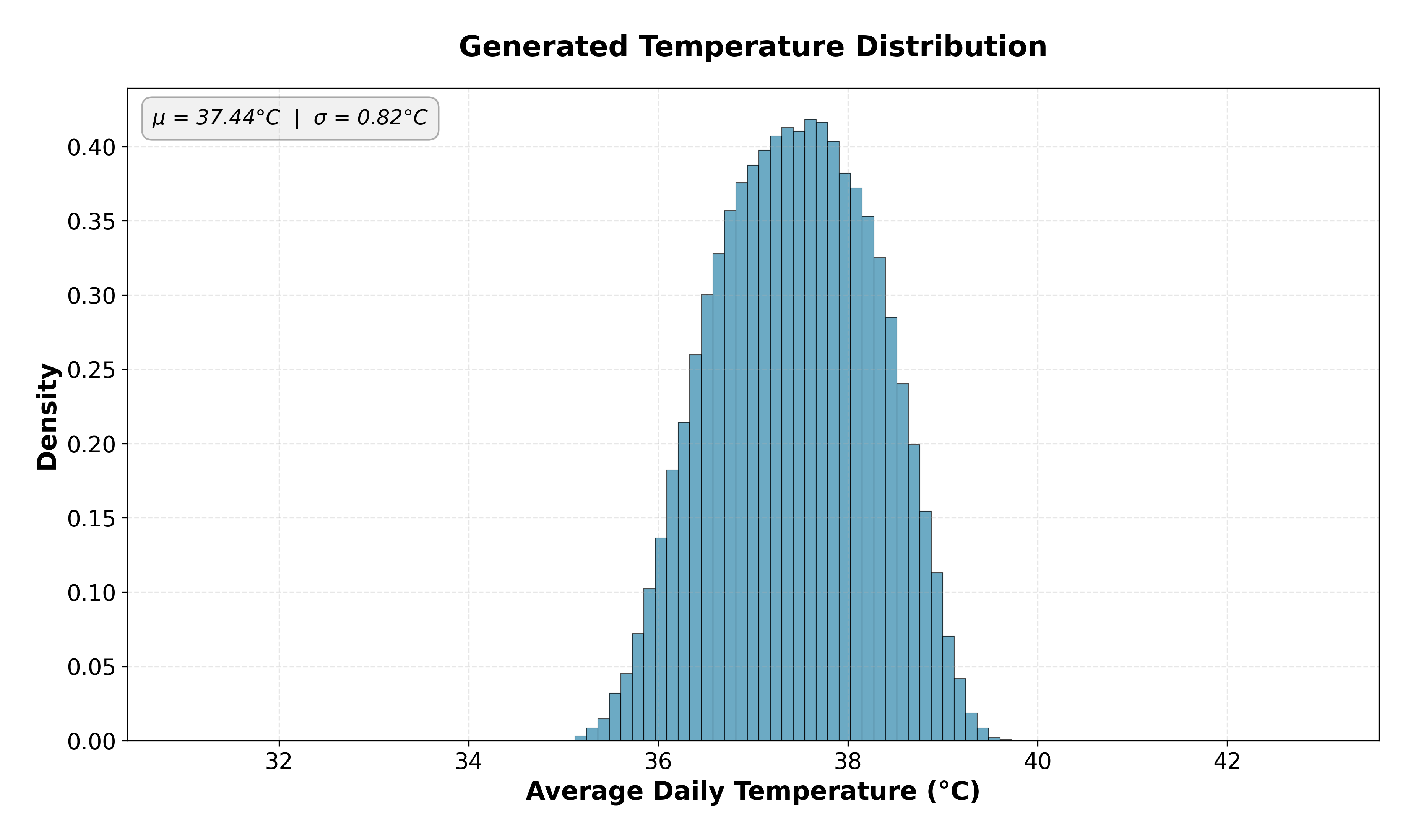}}
\caption{\textbf{DDPM samples from $P$.} Daily temperature fields (May–June, India, $5^\circ\!\times\!5^\circ$).}
\label{fig:ddpm_P}
\end{figure}

\begin{figure}[t]
\centering
\fbox{\includegraphics[width=0.875\linewidth]{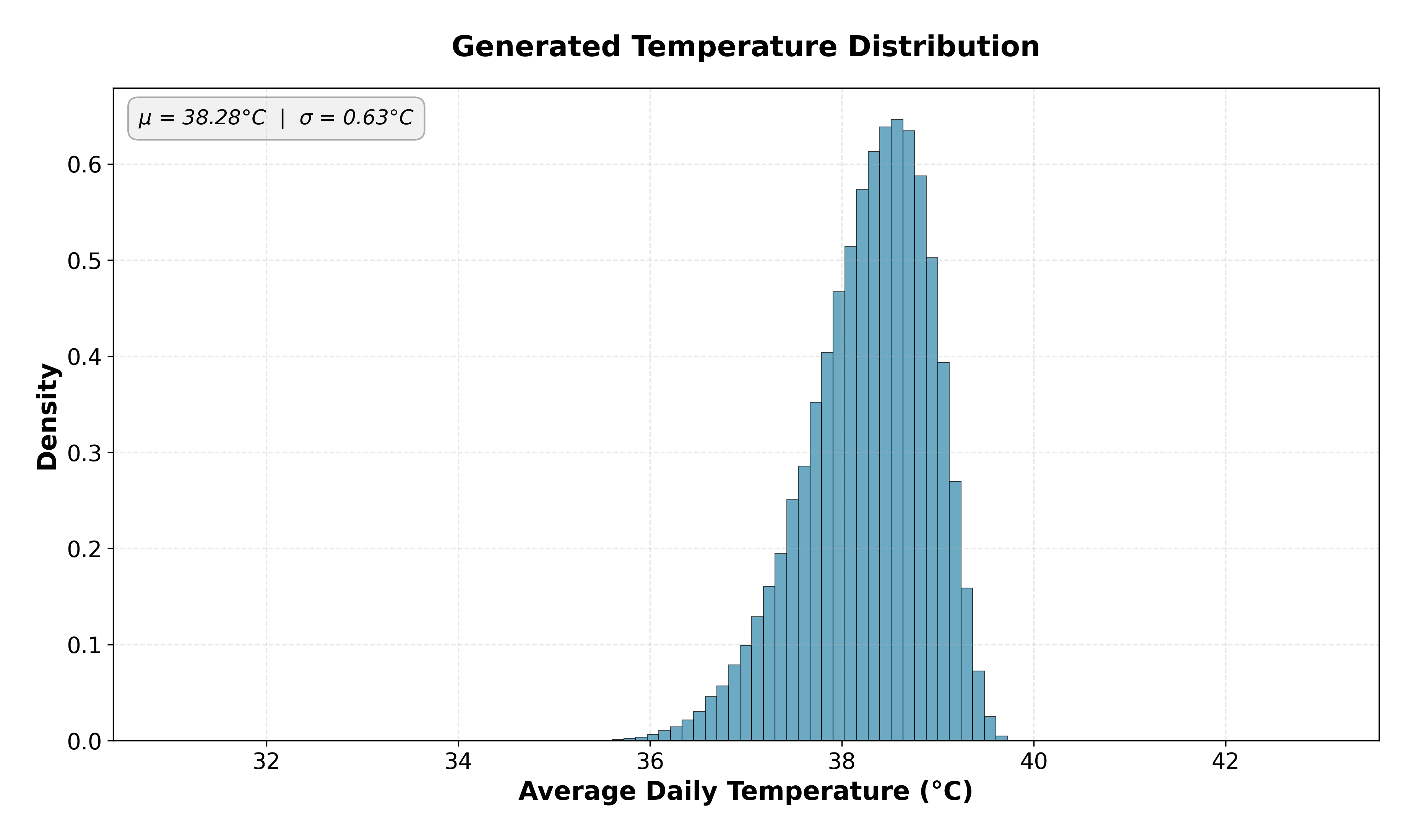}}
\caption{\textbf{DDPM samples from $P_\theta$.} Reweighted training targets the hotter, rarer slice with $\E_{P_\theta}[g]=\E_P[g]+1$.}
\label{fig:ddpm_Ptheta}
\end{figure}

\end{appendix}

\end{document}

%% file: math_commands.tex
\usepackage{amsmath,amsfonts,bm}

\def\eqref#1{equation~\ref{#1}}

\def\1{\bm{1}}

\def\rb{{\textnormal{b}}}

\def\rg{{\textnormal{g}}}

\def\rx{{\textnormal{x}}}
\def\ry{{\textnormal{y}}}

\def\vtheta{{\bm{\theta}}}

\DeclareMathAlphabet{\mathsfit}{\encodingdefault}{\sfdefault}{m}{sl}
\SetMathAlphabet{\mathsfit}{bold}{\encodingdefault}{\sfdefault}{bx}{n}

\newcommand{\R}{\mathbb{R}}

\newcommand{\KL}{D_{\mathrm{KL}}}



%% file: ddpm_based_tilting.bib
@misc{chen2023samplingeasylearningscore,
      title={Sampling is as easy as learning the score: theory for diffusion models with minimal data assumptions}, 
      author={Sitan Chen and Sinho Chewi and Jerry Li and Yuanzhi Li and Adil Salim and Anru R. Zhang},
      year={2023},
      eprint={2209.11215},
      archivePrefix={arXiv},
      primaryClass={cs.LG},
      url={https://arxiv.org/abs/2209.11215}, 
}

@article{Lei_2020,
   title={Convergence and concentration of empirical measures under {W}asserstein distance in unbounded functional spaces},
   volume={26},
   ISSN={1350-7265},
   url={http://dx.doi.org/10.3150/19-BEJ1151},
   DOI={10.3150/19-bej1151},
   number={1},
   journal={Bernoulli},
   publisher={Bernoulli Society for Mathematical Statistics and Probability},
   author={Lei, Jing},
   year={2020},
}

@article{cont2025tail,
  title={Tail-gan: Learning to simulate tail risk scenarios},
  author={Cont, Rama and Cucuringu, Mihai and Xu, Renyuan and Zhang, Chao},
  journal={Management Science},
  year={2025},
  publisher={INFORMS}
}

@article{dvoretzky1956asymptotic,
  title={Asymptotic Minimax Character of the Sample Distribution Function and of the Classical Multinomial Estimator},
  author={Dvoretzky, Aryeh and Kiefer, Jack and Wolfowitz, Jacob},
  journal={The Annals of Mathematical Statistics},
  volume={27},
  number={3},
  pages={642--669},
  year={1956},
  publisher={Institute of Mathematical Statistics},
  doi={10.1214/aoms/1177728174},
}

@book{Vaart_1998, 
place={Cambridge}, 
series={Cambridge Series in Statistical and Probabilistic Mathematics}, 
title={Asymptotic Statistics}, 
publisher={Cambridge University Press}, 
author={Vaart, A. W. van der}, year={1998}, 
collection={Cambridge Series in Statistical and Probabilistic Mathematics}
}

@article{csiszar2008axiomatic,
  title={Axiomatic characterizations of information measures},
  author={Csisz{\'a}r, Imre},
  journal={Entropy},
  volume={10},
  number={3},
  pages={261--273},
  year={2008},
  publisher={Molecular Diversity Preservation International}
}

@article{csiszar1967,
  title={On topology properties of f-divergences.},
  author={Csisz{\'a}r, Imre},
  journal={Studia Scientifica Mathematica Hungerica},
  volume={2},
  number={},
  pages={329–339},
  year={1967},
  publisher={}
}

@article{buchen1996maximum,
  title={The maximum entropy distribution of an asset inferred from option prices},
  author={Buchen, Peter W and Kelly, Michael},
  journal={Journal of Financial and Quantitative Analysis},
  volume={31},
  number={1},
  pages={143--159},
  year={1996},
  publisher={Cambridge University Press}
}

@misc{meucci2010fullyflexibleviewstheory,
      title={Fully Flexible Views: Theory and Practice}, 
      author={Attilio Meucci},
      year={2010},
      eprint={1012.2848},
      archivePrefix={arXiv},
      primaryClass={q-fin.PM},
      url={https://arxiv.org/abs/1012.2848}, 
}

@book{GollRueschendorf2002,
author = {Goll, T. and Rueschendorf, Ludger},
year = {2002},
month = {01},
pages = {},
title = {Minimal distance martingale measures and optimal portfolios consistent with observed market prices},
publisher = "Taylor and Francis"
}

@article{avellaneda1998minimum,
  title={Minimum entropy calibration of asset pricing models, internat},
  author={Avellaneda, M},
  journal={J. Theoret. Appl. Finance},
  volume={1},
  pages={447472},
  year={1998}
}

@article{stutzermichaelnonparametricapproach,
author = {Stutzer, Michael},
title = {A Simple Nonparametric Approach to Derivative Security Valuation},
journal = {The Journal of Finance},
volume = {51},
number = {5},
pages = {1633-1652},
doi = {https://doi.org/10.1111/j.1540-6261.1996.tb05220.x},
url = {https://onlinelibrary.wiley.com/doi/abs/10.1111/j.1540-6261.1996.tb05220.x},
eprint = {https://onlinelibrary.wiley.com/doi/pdf/10.1111/j.1540-6261.1996.tb05220.x},
year = {1996}
}

@misc{fournier2013rateconvergencewassersteindistance,
      title={On the rate of convergence in Wasserstein distance of the empirical measure}, 
      author={Nicolas Fournier and Arnaud Guillin},
      year={2013},
      eprint={1312.2128},
      archivePrefix={arXiv},
      primaryClass={math.PR},
      url={https://arxiv.org/abs/1312.2128}, 
}

@misc{op,
title = {Option Pricing by Esscher Transforms}, 
author = {Gerber, H.U. and Shiu, E.S.W.}, 
year = {1994}, 
journal = {Transactions of the Society of Actuaries}, 
volume = {46}, 
pages = {99-191}
}

@misc{lee2025exponentiallytiltedthermodynamicmaps,
      title={Exponentially Tilted Thermodynamic Maps (expTM): Predicting Phase Transitions Across Temperature, Pressure, and Chemical Potential}, 
      author={Suemin Lee and Ruiyu Wang and Lukas Herron and Pratyush Tiwary},
      year={2025},
      eprint={2503.15080},
      archivePrefix={arXiv},
      primaryClass={cond-mat.stat-mech},
      url={https://arxiv.org/abs/2503.15080}, 
}

@Inbook{Alvo2022,
author="Alvo, Mayer",
title="Exponential Tilting and Its Applications",
bookTitle="Statistical Inference and Machine Learning for Big Data",
year="2022",
publisher="Springer International Publishing",
address="Cham",
pages="171--193",
isbn="978-3-031-06784-6",
doi="10.1007/978-3-031-06784-6_6",
url="https://doi.org/10.1007/978-3-031-06784-6_6"
}

@article{PhysRevE.109.034113,
  title = {Unified perspective on exponential tilt and bridge algorithms for rare trajectories of discrete {M}arkov processes},
  author = {Aguilar, Javier and Gatto, Riccardo},
  journal = {Phys. Rev. E},
  volume = {109},
  issue = {3},
  pages = {034113},
  numpages = {17},
  year = {2024},
  month = {Mar},
  publisher = {American Physical Society},
  doi = {10.1103/PhysRevE.109.034113},
  url = {https://link.aps.org/doi/10.1103/PhysRevE.109.034113}
}

@article{Li2024SEEDS,
  author  = {Li, Lizao and Carver, Robert and Lopez-Gomez, Ignacio and Sha, Fei and Anderson, John R.},
  title   = {Generative emulation of weather forecast ensembles with diffusion models},
  journal = {Science Advances},
  volume  = {10},
  number  = {13},
  pages   = {eadk4489},
  year    = {2024},
  doi     = {10.1126/sciadv.adk4489},
  url     = {https://www.science.org/doi/10.1126/sciadv.adk4489}
}

@article{Ling2024DPDM,
  author  = {Ling, Fenghua and Lu, Zeyu and Luo, Jing-Jia and Bai, Lei and Behera, Swadhin K. and Jin, Dachao and Pan, Baoxiang and Jiang, Huidong and Yamagata, Toshio and others},
  title   = {Diffusion model-based probabilistic downscaling for 180-year {E}ast {A}sian climate reconstruction},
  journal = {npj Climate and Atmospheric Science},
  volume  = {7},
  pages   = {131},
  year    = {2024},
  doi     = {10.1038/s41612-024-00679-1},
  url     = {https://www.nature.com/articles/s41612-024-00679-1}
}

@misc{wang2024proteinconformationgenerationforceguided,
      title={Protein Conformation Generation via Force-Guided {S}{E}(3) Diffusion Models}, 
      author={Yan Wang and Lihao Wang and Yuning Shen and Yiqun Wang and Huizhuo Yuan and Yue Wu and Quanquan Gu},
      year={2024},
      eprint={2403.14088},
      archivePrefix={arXiv},
      primaryClass={q-bio.BM},
      url={https://arxiv.org/abs/2403.14088}, 
}

@misc{song2021scorebasedgenerativemodelingstochastic,
      title={Score-Based Generative Modeling through Stochastic Differential Equations}, 
      author={Yang Song and Jascha Sohl-Dickstein and Diederik P. Kingma and Abhishek Kumar and Stefano Ermon and Ben Poole},
      year={2021},
      eprint={2011.13456},
      archivePrefix={arXiv},
      primaryClass={cs.LG},
      url={https://arxiv.org/abs/2011.13456}, 
}

@article{kong2025robust,
  title={Robust optimization with diffusion models for green security},
  author={Kong, Lingkai and Wang, Haichuan and Pan, Yuqi and Kim, Cheol Woo and Song, Mingxiao and Nguyen, Alayna and Wang, Tonghan and Xu, Haifeng and Tambe, Milind},
  journal={arXiv preprint arXiv:2503.05730},
  year={2025}
}

@article{R,
author = {Ross, N.},
title = {Fundamentals of Stein’s method}, 
journal = {Probab. Surveys},
volume = {8},
pages = {210 - 293},
year = {2011}
}

@article{gao2025wasserstein,
  title        = {Wasserstein Convergence Guarantees for a General Class of Score-Based Generative Models},
  author       = {Gao, Xuefeng and Nguyen, Hoang M. and Zhu, Lingjiong},
  journal      = {Journal of Machine Learning Research},
  volume       = {26},
  number       = {43},
  pages        = {1--54},
  year         = {2025},
  url          = {https://jmlr.org/papers/v26/24-0902.html},
  note         = {Published version with convergence bounds for SGMs in 2-Wasserstein distance},
}

@inproceedings{accumulation_score_error_iclr2026,
  title     = {The Accumulation of Score Estimation Error in Diffusion Models},
  author    = {{Anonymous}},
  booktitle = {International Conference on Learning Representations (ICLR) 2026},
  year      = {2026},
  note      = {Under review as ICLR 2026 submission},
  url       = {https://openreview.net/forum?id=end8EBwFOU},
}

@article{iyer2025fundamental,
  title   = {Fundamental Limits for Weighted Empirical Approximations of Tilted Distributions},
  author  = {Iyer, Sarvesh Ravichandran and Mandal, Himadri and Gupta, Dhruman and Gupta, Rushil and Bandhyopadhyay, Agniv and Bassamboo, Achal and Gupta, Varun and Juneja, Sandeep},
  journal = {CoRR},
  volume  = {abs/2512.23979},
  year    = {2025},
  doi     = {10.48550/arXiv.2512.23979},
  url     = {https://arxiv.org/abs/2512.23979},
}

@book{shiryaev2012problems,
  title={Problems in Probability},
  author={Shiryaev, Albert N},
  year={2012},
  publisher={Springer New York},
  isbn={978-1-4614-3687-4},
  doi={10.1007/978-1-4614-3688-1}
}

@article{benton2023linear_arxiv,
  title   = {Nearly {$d$}-Linear Convergence Bounds for Diffusion Models via Stochastic Localization},
  author  = {Benton, Joe and De Bortoli, Valentin and Doucet, Arnaud and Deligiannidis, George},
  journal = {CoRR},
  volume  = {abs/2308.03686},
  year    = {2023},
  url     = {https://arxiv.org/abs/2308.03686},
}

@article{Owen2000_SafeEffectiveImportanceSampling,
  author       = {Art B. Owen},
  title        = {Safe and Effective Importance Sampling},
  journal      = {Journal of the American Statistical Association},
  year         = {2000},
  volume       = {95},
  number       = {449},
  pages        = {135--143},
  doi          = {10.1080/01621459.2000.10473909}
}

@article{chatterjee2018sample_size_importance,
  title   = {The Sample Size Required in Importance Sampling},
  author  = {Chatterjee, Sourav and Diaconis, Persi},
  journal = {The Annals of Applied Probability},
  volume  = {28},
  number  = {2},
  pages   = {1099--1135},
  year    = {2018},
  doi     = {10.1214/17-AAP1326},
  url     = {https://doi.org/10.1214/17-AAP1326},
}

@book{Rubinstein1981_SimulationAndTheMonteCarloMethod,
  author       = {Reuven Y. Rubinstein},
  title        = {Simulation and the Monte Carlo Method},
  publisher    = {John Wiley \& Sons},
  year         = {1981},
  isbn         = {9780471089179},
  doi          = {10.1002/9780470316511}
}

@InProceedings{pmlr-v202-song23k, title = 	 {Loss-Guided Diffusion Models for Plug-and-Play Controllable Generation}, author =       {Song, Jiaming and Zhang, Qinsheng and Yin, Hongxu and Mardani, Morteza and Liu, Ming-Yu and Kautz, Jan and Chen, Yongxin and Vahdat, Arash}, booktitle = 	 {Proceedings of the 40th International Conference on Machine Learning}, pages = 	 {32483--32498}, year = 	 {2023}, editor = 	 {Krause, Andreas and Brunskill, Emma and Cho, Kyunghyun and Engelhardt, Barbara and Sabato, Sivan and Scarlett, Jonathan}, volume = 	 {202}, series = 	 {Proceedings of Machine Learning Research}, month = 	 {23--29 Jul}, publisher =    {PMLR}, url = 	 {https://proceedings.mlr.press/v202/song23k.html} }

@misc{chung2024diffusionposteriorsamplinggeneral,
      title={Diffusion Posterior Sampling for General Noisy Inverse Problems}, 
      author={Hyungjin Chung and Jeongsol Kim and Michael T. Mccann and Marc L. Klasky and Jong Chul Ye},
      year={2024},
      eprint={2209.14687},
      archivePrefix={arXiv},
      primaryClass={stat.ML},
      url={https://arxiv.org/abs/2209.14687}, 
}

@inproceedings{dhariwal2021diffusion_gans_neurips,
  title     = {Diffusion Models Beat {G}{A}{N}s on Image Synthesis},
  author    = {Dhariwal, Prafulla and Nichol, Alexander Quinn},
  booktitle = {Advances in Neural Information Processing Systems 34},
  editor    = {Marc'Aurelio Ranzato and Alina Beygelzimer and Yann N. Dauphin and Percy Liang and Jennifer Wortman Vaughan},
  pages     = {8780--8794},
  year      = {2021},
  address   = {Virtual Conference},
  publisher = {Curran Associates, Inc.}
}

@inproceedings{kingma2021variational_diffusion_neurips,
  title     = {Variational Diffusion Models},
  author    = {Kingma, Diederik P. and Salimans, Tim and Poole, Ben and Ho, Jonathan},
  booktitle = {Advances in Neural Information Processing Systems 34},
  editor    = {Marc'Aurelio Ranzato and Alina Beygelzimer and Yann N. Dauphin and Percy Liang and Jennifer Wortman Vaughan},
  pages     = {21696--21707},
  year      = {2021},
  address   = {Virtual Conference},
  publisher = {Curran Associates, Inc.},
  url       = {https://proceedings.neurips.cc/paper/2021/file/b578f2a52a0229873fefc2a4b06377fa-Paper.pdf}
}

@inproceedings{sohl2015deep_unsupervised_icml,
  title     = {Deep Unsupervised Learning using Nonequilibrium Thermodynamics},
  author    = {Sohl-Dickstein, Jascha and Weiss, Eric and Maheswaranathan, Niru and Ganguli, Surya},
  booktitle = {Proceedings of the 32nd International Conference on Machine Learning},
  editor    = {Francis Bach and David M. Blei},
  series    = {Proceedings of Machine Learning Research},
  volume    = {37},
  pages     = {2256--2265},
  year      = {2015},
  address   = {Lille, France},
  publisher = {PMLR},
  url       = {https://proceedings.mlr.press/v37/sohl-dickstein15.html}
}

@article{cao2022survey_diffusion,
  title   = {A Survey on Generative Diffusion Models},
  author  = {Cao, Hanqun and Tan, Cheng and Gao, Zhangyang and Chen, Guangyong and Heng, Pheng-Ann and Li, Stan Z.},
  journal = {arXiv e-prints},
  volume  = {abs/2209.02646},
  year    = {2022},
  doi     = {10.48550/arXiv.2209.02646},
  url     = {https://arxiv.org/abs/2209.02646}
}

@article{yang2022diffusion_survey,
  title   = {Diffusion Models: A Comprehensive Survey of Methods and Applications},
  author  = {Yang, Ling and Zhang, Zhilong and Hong, Shenda and Zhang, Wentao and Cui, Bin},
  journal = {arXiv e-prints},
  volume  = {abs/2209.00796},
  year    = {2022},
  doi     = {10.48550/arXiv.2209.00796},
  url     = {https://arxiv.org/abs/2209.00796}
}

@article{croitoru2022diffusion_vision,
  title   = {Diffusion Models in Vision: A Survey},
  author  = {Croitoru, Florinel-Alin and Hondru, Vlad and Ionescu, Radu Tudor and Shah, Mubarak},
  journal = {arXiv e-prints},
  volume  = {abs/2209.04747},
  year    = {2022},
  doi     = {10.48550/arXiv.2209.04747},
  url     = {https://arxiv.org/abs/2209.04747}
}

@article{hyvarinen2005score_matching,
  title   = {Estimation of Non-Normalized Statistical Models by Score Matching},
  author  = {Hyv{\"a}rinen, Aapo},
  journal = {Journal of Machine Learning Research},
  volume  = {6},
  pages   = {695--709},
  year    = {2005},
  url     = {https://www.jmlr.org/papers/v6/hyvarinen05a.html}
}

@article{vincent2011score_dae,
  title   = {A Connection Between Score Matching and Denoising Autoencoders},
  author  = {Vincent, Pascal},
  journal = {Neural Computation},
  volume  = {23},
  number  = {7},
  pages   = {1661--1674},
  year    = {2011},
  doi     = {10.1162/NECO_a_00142},
  url     = {https://direct.mit.edu/neco/article/23/7/1661/7676}
}

@inproceedings{song2021ddim,
  title     = {Denoising {D}iffusion {I}mplicit {M}odels},
  author    = {Song, Jiaming and Meng, Chenlin and Ermon, Stefano},
  booktitle = {Proceedings of the International Conference on Learning Representations (ICLR) 2021},
  year      = {2021},
  url       = {https://openreview.net/forum?id=St1giarCHLP},
}

@inproceedings{lipman2023flow_matching_iclr,
  title     = {Flow Matching for Generative Modeling},
  author    = {Lipman, Yaron and Chen, Ricky T. Q. and Ben-Hamu, Heli and Nickel, Maximilian and Le, Matt},
  booktitle = {International Conference on Learning Representations (ICLR) 2023},
  year      = {2023},
  address   = {Kigali, Rwanda},
  publisher = {OpenReview.net / ICLR},
  url       = {https://openreview.net/forum?id=HxE-MBEKfPd}
}

@inproceedings{liu2023flow_straight_fast_iclr,
  title     = {Flow Straight and Fast: Learning to Generate and Transfer Data with Rectified Flow},
  author    = {Liu, Xingchao and Gong, Chengyue and Liu, Qiang},
  booktitle = {International Conference on Learning Representations (ICLR) 2023},
  year      = {2023},
  address   = {Kigali, Rwanda},
  publisher = {OpenReview.net / ICLR},
  url       = {https://openreview.net/forum?id=xxxxxxx}
}

@Article{Bonneel2015,
author={Bonneel, Nicolas
and Rabin, Julien
and Peyr{\'e}, Gabriel
and Pfister, Hanspeter},
title={Sliced and Radon Wasserstein Barycenters of Measures},
journal={Journal of Mathematical Imaging and Vision},
year={2015},
month={Jan},
day={01},
volume={51},
number={1},
pages={22-45},
issn={1573-7683},
doi={10.1007/s10851-014-0506-3},
url={https://doi.org/10.1007/s10851-014-0506-3}
}
